\documentclass[]{scholar7}

\usepackage{graphicx}%
\usepackage{multirow}%
\usepackage{amsmath,amssymb,amsfonts,bm}%
\usepackage{amsthm}%
\usepackage{mathrsfs}%
\usepackage[title]{appendix}%
\usepackage{xcolor}%
\usepackage{textcomp}%
\usepackage{manyfoot}%
\usepackage{booktabs}%
\usepackage{algorithm}%
\usepackage{algorithmicx}%
\usepackage{algpseudocode}%
\usepackage{listings}%
\usepackage{natbib}
\usepackage{hyperref}
\usepackage{times}
\usepackage{subcaption}
\usepackage{placeins}
\usepackage{fontawesome}

\usepackage{titletoc}
\usepackage[subfigure]{tocloft}
\newcommand{\Xcal}{\mathcal{X}}
\newcommand{\Ycal}{\mathcal{Y}}

\newcommand{\ThetaSpace}{\Theta}

\newcommand{\Prob}{\mathbb{P}}
\newcommand{\wb}{w}
\newcommand{\xb}{x}
\newcommand{\Phib}{\Phi}
\newcommand{\Xb}{\mathsf{X}}

\newcommand{\Ccalctx}{\mathcal{C}_{\text{ctx}}}
\newcommand{\Ctx}{C}
\newcommand{\Ccal}{\mathcal{C}}

\newcommand{\Peers}{\mathcal{J}_{-t}}
\newcommand{\Simplex}{\Delta^{|\Peers|-1}}
\newcommand{\Pdesign}{\mathbb{P}_{\text{design}}}
\newcommand{\Pfit}{\mathbb{P}_{\text{fit}}}
\newcommand{\Peval}{\mathbb{P}_{\text{eval}}}
\newcommand{\PIER}{R_t}
\newcommand{\hPIER}{\widehat{R}_t}
\newcommand{\Uniqueness}{\mathcal{U}_t{}}
\newcommand{\hUniqueness}{\widehat{\mathcal{U}}_t{}}

\newcommand{\Uconv}{\mathcal{U}_t^{\mathrm{conv}}}
\newcommand{\Ulin}{\mathcal{U}_t^{\mathrm{lin}}}
\newcommand{\Uker}{\mathcal{U}_t^{\mathrm{ker}}}

\theoremstyle{thmstyleone}%
\newtheorem{theorem}{Theorem}%  meant for continuous numbers
\newtheorem{proposition}[theorem]{Proposition}% 

\theoremstyle{thmstyletwo}%

\theoremstyle{thmstylethree}%

\title{Quantifying Model Uniqueness in Heterogeneous AI Ecosystems}

\author[1]{Lei You}

\affiliation[1]{Department of Engineering Technology, Technical University of Denmark, Lautrupvang 15, 2750 Ballerup, Capital Region, Denmark}

\correspondence{\email{leiyo@dtu.dk}}
\date{February 2026}

\abstract{%
As AI systems evolve from isolated predictors into complex, heterogeneous ecosystems of foundation models and specialized adapters, distinguishing genuine behavioral novelty from functional redundancy becomes a critical governance challenge. Here, we introduce a statistical framework for auditing model uniqueness based on In-Silico Quasi-Experimental Design (ISQED). By enforcing matched interventions across models, we isolate intrinsic model identity and quantify uniqueness as the Peer-Inexpressible Residual (PIER), i.e. the component of a target's behavior strictly irreducible to any stochastic convex combination of its peers, with vanishing PIER characterizing when such a routing-based substitution becomes possible. We establish the theoretical foundations of ecosystem auditing through three key contributions. First, we prove a fundamental limitation of observational logs: uniqueness is mathematically non-identifiable without intervention control. Second, we derive a scaling law for active auditing, showing that our adaptive query protocol achieves minimax-optimal sample efficiency ($d\sigma^2\gamma^{-2}\log(Nd/\delta)$). Third, we demonstrate that cooperative game-theoretic methods, such as Shapley values, fundamentally fail to detect redundancy. We implement this framework via the DISCO (Design-Integrated Synthetic Control) estimator and deploy it across diverse ecosystems, including computer vision models (ResNet/ConvNeXt/ViT), large language models (BERT/RoBERTa), and city-scale traffic forecasters. These results move trustworthy AI beyond explaining single models: they establish a principled, intervention-based science of auditing and governing heterogeneous model ecosystems. The code is available on 
\faGithub~\url{https://github.com/youlei202/ISQED}.
}

\begin{document}

\maketitle
\section*{Introduction}

Machine learning systems that affect users and institutions are rarely deployed in isolation. A content-moderation pipeline may combine pre-filters, topic classifiers, ranking models and large language models (LLMs)~\cite{bommasani2021opportunities}. A recommendation platform may maintain multiple recommender models for different device classes, regions and latency budgets. As models are added, adapted and retired over time, the ecosystem can accumulate redundancy: distinct artefacts that behave identically in practice, or whose behaviour could be reproduced at negligible cost by composing existing components. Such ecosystems are increasingly shaped by foundation-model pipelines built on Transformer architectures~\cite{vaswani2017attention} and expanded via fine-tuning, distillation~\cite{hinton2015distilling} and parameter-efficient adapters~\cite{houlsby2019parameter}.

Governance questions arise naturally in this setting. A regulator or platform operator may want to know whether a proposed target model contributes genuinely new behaviour, incurs additional risk, or is effectively a reparameterisation of existing models. A cloud provider may wish to decommission models that are functionally redundant to save energy or reduce maintenance cost~\cite{strubell2019energy}. A user organisation negotiating licences may need to demonstrate that a model it deploys has unique value relative to open-source peers~\cite{tramer2016stealing,uchida2017embedding,rouhani2019deepsigns}. These questions concern the position of a given model \emph{inside an ecosystem} rather than the internal logic of that model.

Explainable AI (XAI) has primarily addressed single-model questions. Feature attribution methods quantify which input dimensions are important for a specific model (e.g., LIME~\cite{ribeiro2016why}, Integrated Gradients~\cite{sundararajan2017axiomatic}, and SHAP~\cite{lundberg2017unified}). Counterfactual explanations describe which perturbations would change a prediction~\cite{wachter2018counterfactual}. Representation analysis probes internal layers of neural networks (e.g., SVCCA~\cite{raghu2017svcca} and CKA~\cite{kornblith2019similarity}). These tools are valuable for understanding an individual model in isolation. They do not, however, measure whether that model is replaceable by other models in the same ecosystem. In multi-model settings, cooperative game-theoretic methods such as Shapley value attribution~\cite{shapley1953value} have been proposed to assign credit among models, but these are designed to share rewards fairly, not to detect redundancy. Related work on combining multiple trained models includes weight-averaging approaches such as model soups~\cite{wortsman2022model}.

%TC:ignore
\begin{figure*}[t]
    \centering
    \includegraphics[width=\linewidth]{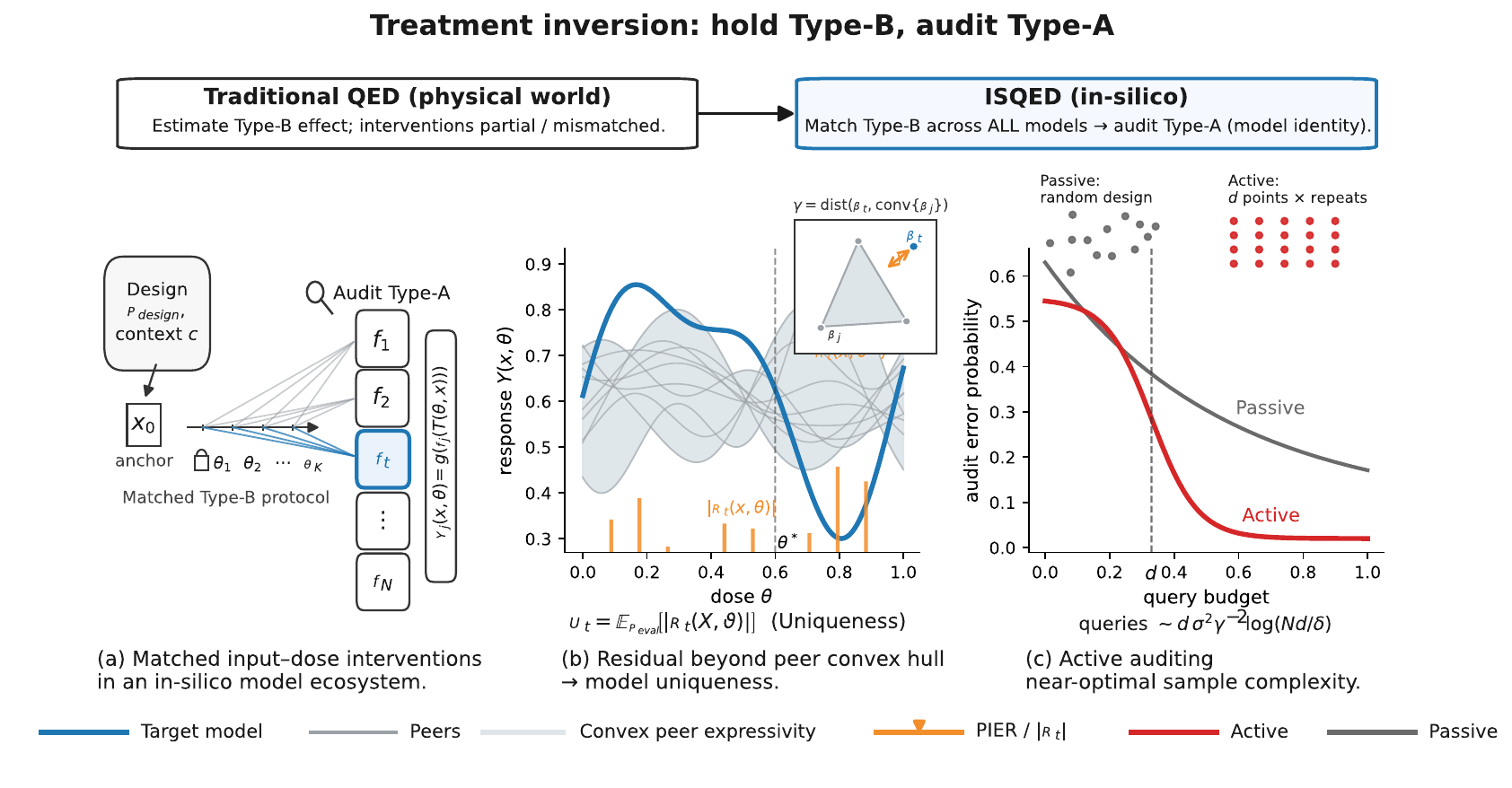}
\caption{\textbf{Treatment inversion: hold Type-B fixed, audit Type-A (model identity).}
    \textbf{a}, Matched input--dose interventions in an in-silico ecosystem: an auditor selects an anchor input and a Type-B intervention track (doses $\theta_1,\dots,\theta_K$) and queries all models on the matched pairs to isolate Type-A (model-identity) differences.
    \textbf{b}, Uniqueness as a residual beyond the convex peer hull. The target response $Y_t(x,\theta)$ is projected onto the convex hull of peer responses, yielding the peer-inexpressible residual $\PIER(x,\theta)=R_t(x,\theta)$; its evaluation-average magnitude $\Uniqueness$ summarizes behavioral novelty.
    \textbf{c}, Active auditing. Because the auditor controls queries in-silico, adaptive designs can detect non-zero uniqueness with far fewer queries than passive sampling; in a local linear model the minimax-optimal scaling is $d\sigma^2\gamma^{-2}\log(Nd/\delta)$.}
    \label{fig:concept}
\end{figure*}
%TC:endignore

A central difficulty is to \emph{separate two sources of variation} in responses. The first stems from the external conditions under which models are evaluated, for example the distribution of user queries or the strength of stress-test perturbations. The second stems from the intrinsic identity of the model: its architecture, training data, objective and optimisation. In the language of causal inference~\cite{pearl2009causality,imbens2015causal}, the former corresponds to a conventional intervention applied to model inputs, which we refer to as a ``Type-B'' treatment, while the latter reflects a unit-level effect associated with intrinsic model identity, which we treat as a ``Type-A'' factor. In observational domains such as policy evaluation, quasi-experimental designs~\cite{campbell1963experimental,angrist2009mostly} attempt to estimate the effect of a Type-B intervention on units that cannot all be manipulated, for example, when only one state changes its law. Here we are interested in the opposite quantity: the effect of the Type-A treatment, conditional on a controlled pattern of Type-B interventions, in an in-silico environment where we can manipulate all models.

In an in-silico ecosystem we can query every model on matched input–dose pairs and design synthetic interventions on inputs. This allows us to impose a common Type-B protocol across the ecosystem and treat differences in responses as arising from Type-A alone. The central idea of this work is to formalise this setting as an \emph{in-silico quasi-experimental design (ISQED)} framework (\figurename~\ref{fig:concept}) and to turn the question \emph{``is the target unique relative to its peers?''} into a well-posed statistical problem. Within a fixed context we define a convex peer expressivity class: the family of convex combinations of peer responses in a Hilbert space of scalarised outputs under a fitting design. The population \emph{peer-inexpressible residual}, abbreviated \emph{PIER}, is the orthogonal residual of the target's response after projection onto this class. A population uniqueness functional is obtained by integrating the magnitude of PIER under a possibly distinct evaluation design. These quantities capture precisely the part of the target’s behaviour that cannot be reproduced by simple stochastic mixtures of peers under the chosen designs, and therefore cannot be substituted by a fixed routing over the ecosystem.

Our proposed empirical \emph{design-integrated synthetic control (DISCO) estimator} implements this construction using finite samples, building on the synthetic control literature~\cite{abadie2010synthetic}. A fitting sample is used to estimate projection weights subject to a simplex constraint, and an evaluation sample is used to estimate PIER values and the uniqueness functional. Under mild assumptions, DISCO has the usual statistical properties expected of a measurement instrument: the estimated projection weights converge to their population counterparts, the empirical uniqueness functional converges to the population functional, and its fluctuations admit an asymptotic normal approximation. A finite-sample design error bound separates estimation error from the contribution of finite anchor radius and dose-grid resolution. These \emph{passive} guarantees mean that, when used with fixed designs, DISCO behaves as a calibrated measurement of model uniqueness.

The in-silico nature of the ecosystem provides more than identifiability advantages. Because we can choose where to query models, we can ask how to allocate a limited number of queries to detect uniqueness as quickly as possible~\cite{settles2009active,pukelsheim2006optimal}. We analyse this question in a local linear structural model in which all models share a known feature map and differ only in their coefficients. In a noiseless setting, a small number of carefully chosen design points suffice to recover all coefficients exactly, and uniqueness reduces to a convex hull membership test in coefficient space. In a noisy setting with sub-Gaussian perturbations, repeated queries at these design points yield an \emph{active auditing} procedure whose sample complexity scales as $d \sigma^2 \gamma^{-2} \log(Nd/\delta)$, where $d$ is feature dimension, $\sigma^2$ is noise level, $\gamma$ is a uniqueness margin, $N$ the number of models, and $\delta$ is the target error probability. A minimax lower bound and a detection-versus-estimation result show that this dependence on $\sigma^2$, $\gamma$ and $\delta$ is optimal up to constants: in the worst case, detecting non-zero uniqueness cannot be easier than estimating the corresponding local-linear parameter. This reveals a fundamental scaling law for in-silico uniqueness audits.

Beyond rate statements, structural properties of the uniqueness functional matter for ecosystem governance. Because the projection is taken onto a convex hull, the resulting PIER is monotone in the peer set: adding peers can only decrease uniqueness. The convex hull is also conservative relative to richer peer expressivity classes such as linear spans or kernel-induced function spaces: uniqueness defined via convex combinations provides an upper bound on uniqueness defined via any nested richer class. The projection weights admit an interpretation as a stochastic routing policy over peers. If convex PIER is close to zero, the target can be replaced by routing queries to peers according to these weights without changing the expected response under the chosen designs. Conversely, if convex PIER remains large, even such simple mixtures fail to explain the target. These properties align convex PIER with the notion of actionable redundancy.

We further show that popular model-level attribution methods are not suited to this task. In particular, a model can be fully redundant in the sense of lying exactly in the convex hull of its peers, hence having zero PIER and zero uniqueness, and nevertheless receive strictly positive Shapley value. This illustrates that cooperative-game attribution is designed for sharing rewards fairly among contributors, not for identifying units that can be removed without loss. We also prove that, when models are only observed under unmatched observational designs, the population uniqueness functional with respect to a common reference design is not identifiable, even given arbitrarily large logs. In other words, the ability to impose matched interventions among models is not merely convenient but mathematically necessary for uniqueness auditing in the sense studied here.

The remainder of the main text is organised as follows. The Results section introduces the ISQED framework, defines PIER and model uniqueness, establishes passive and active theoretical guarantees, examines key structural and conceptual properties, and empirically audits heterogeneous ecosystems across language, vision, and spatiotemporal prediction tasks. The Methods section formalises the constructions and describes the DISCO estimator and active auditing protocols. Detailed proofs and additional technical material are provided in the Supplementary Information.

\section*{Results}

\subsection*{ISQED Framework for AI Ecosystems}

\textbf{Notation.}
We use uppercase symbols for random variables and lowercase for their realisations; in particular, $(X,\vartheta)\sim\Pfit$ or $\Peval$, while $(x,\theta)$ denotes a fixed input--dose pair.
We reserve $\theta$ for a fixed dose value and use $\vartheta$ for a random dose drawn from a design distribution.
We do not use boldface: vectors and matrices are identified by their declared domains (e.g., $w \in \Delta^{N-1}$, $\phi(x,\theta)\in\R^d$, $\Phi\in\R^{d\times d}$, $\Xb_{-t}^{\mathrm{fit}}\in\R^{m\times|\Peers|}$).
Throughout, $\PIER(x,\theta)=R_t(x,\theta)$ denotes the (signed) peer-inexpressible residual, and the scalar uniqueness score is $\Uniqueness=\E_{\Peval}[|\PIER(X,\vartheta)|]$; when we plot a single ``PIER'' value, we report an aggregate residual magnitude (typically a mean absolute residual).

We consider an input space $\Xcal$ and a dose space $\ThetaSpace$. A deterministic intervention map $T : \ThetaSpace \times \Xcal \to \Xcal$ perturbs an input $\xb$ according to a dose parameter $\theta$, for example by scaling, masking or otherwise modifying selected features. A model ecosystem consists of functions $f_j : \Xcal \to \Ycal$ indexed by $j \in \mathcal{J} = \{1,\dots,N\}$. A scalarisation map $g : \Ycal \to \R$ converts model outputs into real-valued scores. The scalarised response of model $j$ at an input–dose pair $(x,\theta)$ is
\[
Y_j(\xb,\theta) = g\big(f_j(T(\theta,\xb))\big) \in \R.
\]

The ecosystem is evaluated within a context $\Ctx$ taking values in a space $\Ccalctx$. Intuitively, the context encodes a local region of the input–dose space, such as a neighbourhood around an anchor input combined with an intervention track. A global design distribution $\Pdesign$ on $\Xcal \times \ThetaSpace \times \Ccalctx$ describes how inputs, doses and contexts are generated. Conditioning on a fixed context value $c \in \Ccalctx$ yields a design distribution on $\Xcal \times \ThetaSpace$. Throughout the results section we fix such a context and suppress it in the notation.

Within the fixed context, we distinguish three roles for design distributions. A fitting design $\Pfit$ determines where responses are observed for learning a convex peer surrogate that best approximates the target's behaviour. An evaluation design $\Peval$ determines where PIER and uniqueness are assessed. Both $\Pfit$ and $\Peval$ are typically derived from $\Pdesign$, which represents the population of interest and provides a conceptual reference, for example the distribution of real user queries. The in-silico setting allows all three to be chosen by the auditor, subject to computational constraints. An honesty condition requires that fitting and evaluation samples are drawn independently from $\Pfit$ and $\Peval$ respectively, and that the fitting sample is not reused for evaluation. This separates the process of learning a convex peer surrogate from the process of auditing uniqueness.

A target index $t \in \mathcal{J}$ is fixed and its peers are $\Peers = \mathcal{J}\setminus \{t\}$. The vector of peer responses at $(x,\theta)$ is
\[
\Phib_{-t}(\xb,\theta) = (Y_j(\xb,\theta))_{j \in \Peers} \in \R^{|\Peers|}.
\]
Convex combinations of these responses form the basic building blocks of the peer surrogate.

\subsection*{A Convex Notion of PIER}

The core population quantities are defined in a Hilbert space. For the fixed context and fitting design $\Pfit$, consider the space $L^2(\Pfit)$ of measurable functions $h : \Xcal \times \ThetaSpace \to \R$ with finite second moment under $\Pfit$. The inner product is
\[
\langle h_1,h_2 \rangle = \E_{\Pfit}\big[h_1(X,\vartheta) h_2(X,\vartheta)\big],
\]
with induced norm $\|h\|_2 = \sqrt{\langle h,h \rangle}$.

For a weight vector $\wb \in \R^{|\Peers|}$ and an input–dose pair $(x,\theta)$, the convex combination of peer responses is $h_{\wb}(x,\theta) = \wb^\top \Phib_{-t}(x,\theta)$. The convex peer expressivity class is the set of such functions with weights in the simplex,
\[
\Ccal = \{h_{\wb} : \wb \in \Simplex\} \subset L^2(\Pfit),
\]
where $\Simplex$ is the set of non-negative vectors in $\R^{|\Peers|}$ that sum to one. This class captures \emph{all behaviours that can be realised by stochastic routing} over peers with a fixed context-independent routing distribution $\wb$.

The squared $L^2(\Pfit)$ distance between the target response and a peer mixture is
\[
L(\wb) = \E_{\Pfit}\big[(Y_t(X,\vartheta) - \wb^\top \Phib_{-t}(X,\vartheta))^2\big].
\]
Under a mild identifiability condition, this residual has a unique minimiser $\wb^*$ in the relative interior of the simplex, and the residual is locally strongly convex around $\wb^*$. The function $h_{\wb^*}$ is the population projection of $Y_t$ onto the closure of $\Ccal$ in $L^2(\Pfit)$. The population PIER is defined as the corresponding residual,
\[
\PIER(x,\theta) = Y_t(x,\theta) - \wb^{*\top} \Phib_{-t}(x,\theta).
\]
If $\PIER$ is almost surely zero under $\Pfit$, then the target behaves as a convex mixture of peers under the fitting design. If $\PIER$ is not almost surely zero, then the target exhibits behaviour that cannot be expressed by such mixtures.

To aggregate PIER into a scalar measure, an evaluation design $\Peval$ on $\Xcal \times \ThetaSpace$ is introduced. Under mild integrability conditions, the \emph{population uniqueness functional} is defined as
\[
\Uniqueness = \E_{\Peval}\big[|\PIER(X,\vartheta)|\big].
\]
This functional depends on the fitting design through $\wb^*$, on the evaluation design and on the peer set. It may be interpreted as an average magnitude of peer-inexpressible behaviour along the dose–response trajectory specified by $\Peval$.

The convex formulation has a useful probabilistic interpretation. The weights $\wb^*$ define a stochastic routing policy that sends a query to peer $j$ with probability $w^*_j$ and returns its scalarised response. The conditional expectation of this routed response is exactly $h_{\wb^*}(x,\theta)$. When $\PIER$ is small in $L^2(\Pfit)$ and $\Uniqueness$ is small under $\Peval$, the target can be substituted, in expectation, by randomised routing over peers according to $\wb^*$ under both designs. When $\PIER$ is exactly zero, such routing is indistinguishable from the target along the PIER metric.

\subsection*{Passive Statistical Guarantees for DISCO}

We now propose the DISCO estimator that implements the population construction using finite samples. For a fixed context, a fitting sample $(X_i^{\mathrm{fit}},\vartheta_i^{\mathrm{fit}})_{i=1}^m$ is drawn independently from $\Pfit$ and used to compute responses $Y_{j,i}^{\mathrm{fit}} = Y_j(X_i^{\mathrm{fit}},\vartheta_i^{\mathrm{fit}})$ for all models. The peer response vectors are stacked into a design matrix $\Xb_{-t}^{\mathrm{fit}} \in \R^{m \times |\Peers|}$ and the target responses into a vector $y_t^{\mathrm{fit}} \in \R^m$. The empirical projection weights are obtained by solving a regularised least-squares problem over the simplex,
\[
\widehat{\wb} \in \argmin_{\wb \in \Simplex} \left\{ \frac{1}{m} \sum_{i=1}^m \big(Y_{t,i}^{\mathrm{fit}} - \wb^\top \Phib_{-t,i}^{\mathrm{fit}}\big)^2 + \lambda_m \|\wb\|_2^2 \right\},
\]
with a regularisation parameter $\lambda_m$ that tends to zero as $m$ increases.

An independent evaluation sample $(X_i^{\mathrm{eval}},\vartheta_i^{\mathrm{eval}})_{i=1}^n$ is drawn from $\Peval$. The empirical PIER at the $i$-th evaluation point is
\[
\hPIER{}_i = Y_t(X_i^{\mathrm{eval}},\vartheta_i^{\mathrm{eval}}) - \widehat{\wb}^\top \Phib_{-t}(X_i^{\mathrm{eval}},\vartheta_i^{\mathrm{eval}}),
\]
and the empirical uniqueness functional is the average absolute PIER,
\[
\hUniqueness = \frac{1}{n} \sum_{i=1}^n |\hPIER{}_i|.
\]

Under mild regularity, honesty and identifiability conditions, the DISCO estimator behaves as expected of a sound statistical instrument. Appendix~\ref{app:consistency} shows that the empirical residual converges uniformly to the population residual on the simplex, the regularisation term vanishes, and the argmin theorem for $M$-estimators implies that $\widehat{\wb}$ converges in probability to the population projection weights $\wb^*$. A continuous mapping argument then yields convergence in probability of $\hUniqueness$ to $\Uniqueness$. Appendix~\ref{app:normality} shows that under additional smoothness assumptions on the dependence of $|Y_t - \wb^\top \Phib_{-t}|$ on $\wb$, a central limit theorem holds for $\hUniqueness$: when both $m$ and $n$ grow and their ratio tends to a finite positive constant, the scaled difference $\sqrt{n}(\hUniqueness - \Uniqueness)$ converges in distribution to a centred Gaussian law with finite variance. This justifies normal approximations and bootstrap-based confidence intervals for $\Uniqueness$.

A finite-sample design error bound clarifies the role of anchors and dose grids. In many practical setups, evaluation points are associated with nearby anchor points in the input–dose space, and responses at anchors are used as proxies for responses at evaluation points. Under Lipschitz continuity assumptions on the scalarised responses, Appendix~\ref{app:error_bound} shows that the error between the anchor-based empirical PIER and the population PIER at an evaluation point is bounded by a term proportional to $\|\widehat{\wb} - \wb^*\|_2$ (estimation error) plus a term proportional to the input distance between the evaluation point and its anchor and the dose-grid spacing (design error). This decomposition shows that improving either estimation accuracy or anchor/grid resolution decreases PIER error, and that the contributions of statistical and geometric design choices can be analysed separately.

Taken together, these results place PIER and DISCO on a standard statistical footing. The quantities that appear throughout the Results section can be interpreted as population parameters, estimators of these parameters and formal error controls, rather than as heuristic scores.

\subsection*{Active Auditing Achieves Optimal Sample Efficiency}

The passive theory treats $\Pfit$ and $\Peval$ as fixed reference designs. The in-silico setting allows a different regime in which the auditor actively chooses where to query models. To analyse the benefits of this control, we consider a local linear structural model within a fixed context. A feature map $\phi : \Xcal \times \ThetaSpace \to \R^d$ is assumed known, and each model $j$ is parametrised by a coefficient vector $\beta_j \in \R^d$ such that
\[
Y_j(x,\theta) = \phi(x,\theta)^\top \beta_j
\]
for all $(x,\theta)$ in the region of interest. The feature map may arise from hand-crafted basis functions, a representation learnt in a pretraining phase, or from linearisation of differentiable models via Jacobians or neural tangent kernels~\cite{jacot2018neural}. The target and peer behaviours are thus encoded in coefficient space. Remark that we do not assume that real-world ecosystems are globally linear. The local linear structural model is a stylised approximation, motivated by first-order Taylor expansions and shared-feature architectures, and is used to expose the best-case sample complexity of active in-silico auditing.

The peer coefficients $\{\beta_j : j \in \Peers\}$ form a finite set in $\R^d$, and their convex hull, denoted $\mathcal{H}_{\mathrm{peer}}$, represents all behaviours attainable by convex mixtures of peers in coefficient space. The target coefficient is $\beta_t$. The distance between $\beta_t$ and $\mathcal{H}_{\mathrm{peer}}$ defines a uniqueness margin $\gamma$, which is zero if the target lies in the convex hull and strictly positive otherwise. The question of uniqueness reduces to deciding whether $\gamma$ is zero or positive using as few queries as possible.

In a noiseless setting, the auditor can exploit the linear structure fully. By choosing $d$ design points $(x_\ell,\theta_\ell)$ such that the resulting feature matrix
\[
\Phi = \begin{bmatrix}
\phi(x_1,\theta_1)^\top\\
\vdots\\
\phi(x_d,\theta_d)^\top
\end{bmatrix}
\]
is invertible, the auditor can recover each coefficient vector exactly. Evaluating model $j$ at these $d$ points yields a response vector $y_j = (Y_j(x_1,\theta_1),\dots,Y_j(x_d,\theta_d))^\top$. Solving the linear system $\Phi \beta_j = y_j$ gives $\beta_j = \Phi^{-1}y_j$. This holds for all models simultaneously, and the uniqueness question becomes a convex feasibility problem in coefficient space: the target is non-unique if and only if its coefficient vector lies in the convex hull of peer coefficients. Appendix~\ref{app:uniqueness_detection} formalises this argument and shows that exactly $d$ noiseless queries per model suffice for an exact uniqueness decision in the linear structural model.

In a noisy setting, model evaluations are perturbed by randomness. For example, stochastic components in the system or approximate inference may introduce zero-mean noise with variance proxy $\sigma^2$. The auditor can still query each model at a fixed set of $d$ design points, but now multiple repetitions are needed to average out noise. Under sub-Gaussian noise assumptions, repeated queries at each design point and a simple averaging-and-inversion scheme yield coefficient estimates $\widehat{\beta}_j$ for all models. The empirical distance from the target estimate to the convex hull of peer estimates is then compared to a threshold. Appendix~\ref{app:sample_complexity} shows that, if the uniqueness margin $\gamma$ is strictly positive, then there exists a constant $C$ depending only on the feature matrix such that
\[
r \geq C\,\frac{\sigma^2}{\gamma^2} \log\!\left(\frac{Nd}{\delta}\right)
\]
repetitions per design point suffice to distinguish $\gamma = 0$ from $\gamma > 0$ with error probability at most $\delta$. The total number of noisy queries per model is $d r$, which yields a sample-complexity bound of order $d \sigma^2 \gamma^{-2} \log(Nd/\delta)$. Larger uniqueness margins permit fewer queries, and the dependence on the number of models and the target error probability is only logarithmic. To validate this scaling law, we simulated audits in a controlled linear structural environment. As shown in \figurename~\ref{fig:auditing}A, the active auditing strategy (red) achieves a sharp phase transition, driving the error rate to near-zero once the query budget exceeds the feature dimension $d$. In contrast, the passive strategy (blue) converges significantly slower, requiring an order of magnitude more queries to reach comparable reliability. The empirical convergence rate of the active strategy perfectly matches our theoretical prediction (dashed arrow), confirming the efficiency gain of the in-silico design.

%TC:ignore
\begin{figure}[t]
    \centering

    \begin{subfigure}{0.32\linewidth}
        \centering
        \includegraphics[width=\linewidth]{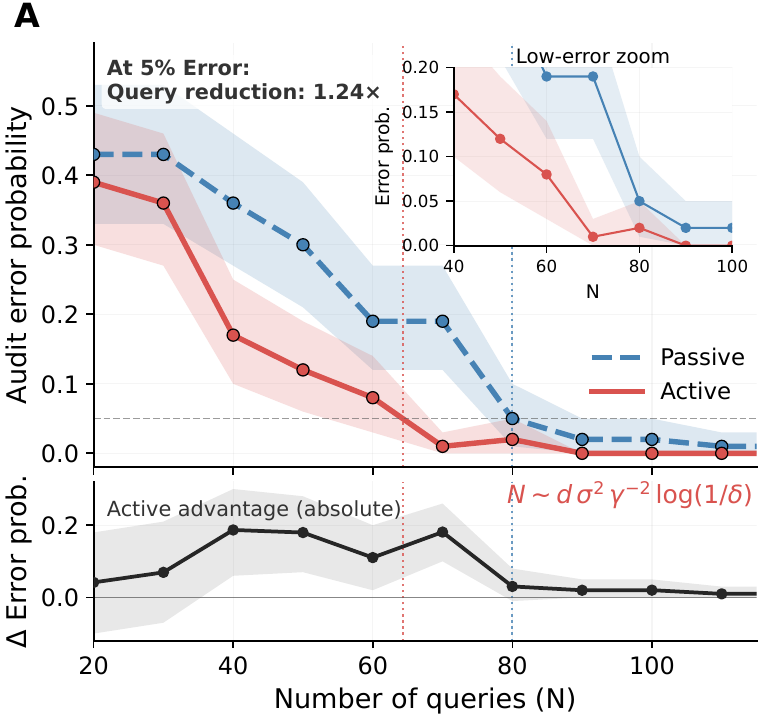}
        % \caption{PIER heatmap across targets and doses}
        \label{fig:fig2a}
    \end{subfigure}
    \hfill
    \begin{subfigure}{0.32\linewidth}
        \centering
        \includegraphics[width=\linewidth]{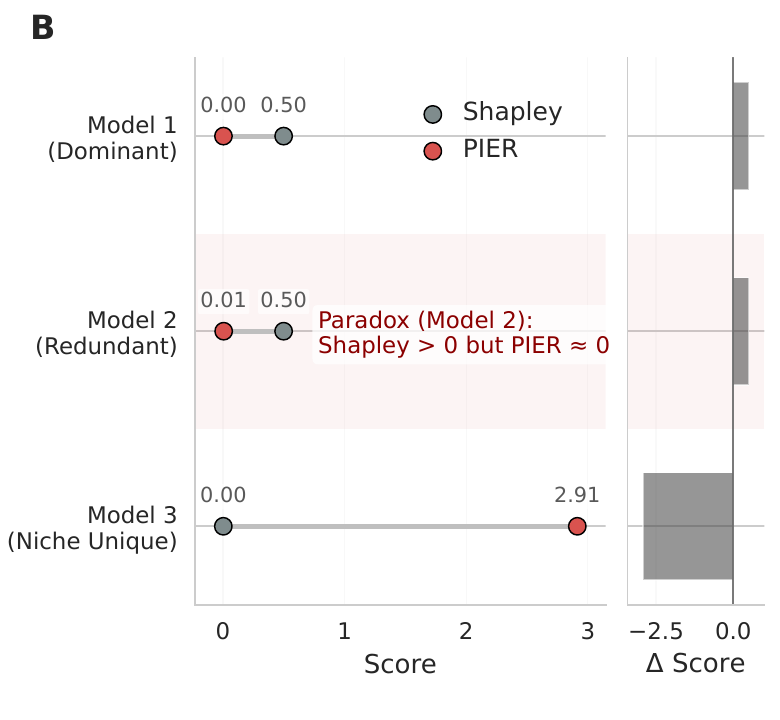}
        % \caption{Dose--response curves of PIER}
        \label{fig:fig2b}
    \end{subfigure}
   \hfill
    \begin{subfigure}{0.32\linewidth}
        \centering
        \includegraphics[width=\linewidth]{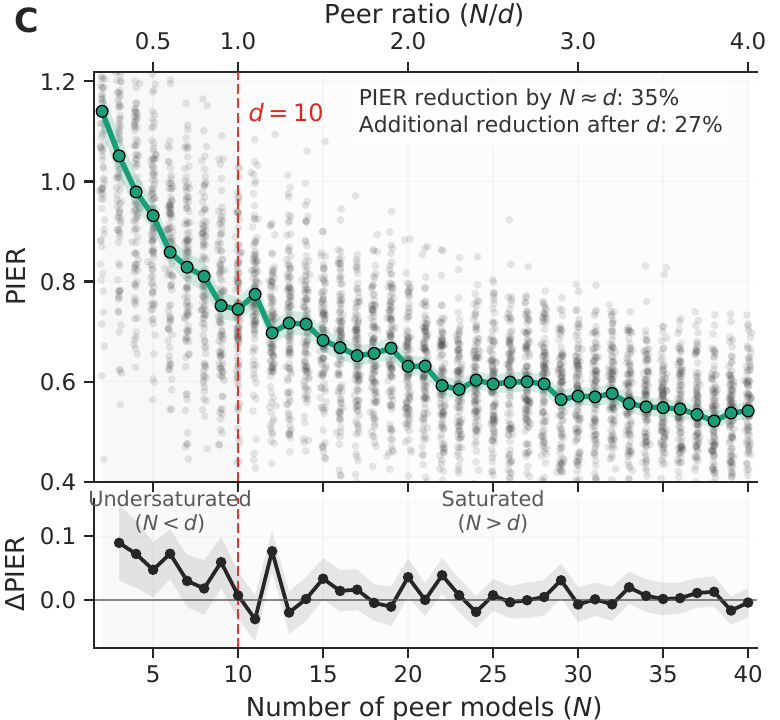}
        % \caption{DISCO routing structure vs.\ dose}
        \label{fig:fig2c}
    \end{subfigure}
\caption{\textbf{DISCO and PIER: auditing, attribution, and ecosystem effects.}
(A) Active auditing reduces query complexity compared to passive sampling, achieving a $1.24\times$ query reduction at 5\% audit error.
(B) Attribution versus auditing in a toy ecosystem: Shapley assigns positive value to a redundant model, while PIER assigns near-zero uniqueness and correctly identifies a niche, non-substitutable model.
(C) Ecosystem saturation: as the number of peer models $N$ grows beyond the intrinsic task dimension $d$ (here $d=10$), average uniqueness (mean absolute PIER) collapses, revealing diminishing returns from adding peers.}
\label{fig:auditing}
\end{figure}
%TC:endignore

Active auditing might appear to enjoy unique advantages over passive estimation, and one may wonder whether detection of uniqueness could be substantially easier than estimation of the corresponding structural parameter.  Appendix~\ref{app:minimax_lower_bound} establishes that this is not the case in the worst-case local linear model. In one dimension with constant features and a trivial peer set, the problem reduces to distinguishing whether a Gaussian mean parameter is zero or equal to $\gamma$. Classical information-theoretic inequalities yield a minimax lower bound of order $\sigma^2 \gamma^{-2} \log(1/\delta)$ noisy queries for any test that decides between these hypotheses with error probability at most $\delta$. At the same time, any estimator of the mean that aims to achieve accuracy of order $\gamma$ with confidence level $1-\delta$ requires the same order of queries. Based on this result, Appendix~\ref{app:detection_vs_estimation} shows that, in this regime, there is no rate separation between detection and estimation: any auditing procedure that reliably detects non-zero uniqueness must operate at essentially the same sample-complexity scale as one that estimates the underlying parameter.

These results identify a fundamental scaling law for in-silico uniqueness audits in a local linear regime. While the constants and precise dependence on $d$ and $N$ depend on the chosen feature map and design, the key message is that active control over design can dramatically reduce the dependence on the ambient input dimension and can achieve optimal dependence on noise level, uniqueness margin and desired error probability.

\subsection*{Conservatism, Monotonicity and Probabilistic Routing}

The convex formulation of the peer expressivity class has several structural properties that are useful for ecosystem governance. First, the projection weights admit an operational interpretation. Because $\wb^*$ lies in the simplex, it can be interpreted as a routing distribution over peers. If PIER is small, a system designer can replace the target by a stochastic router that, given any input–dose pair in the context, sends the query to peer $j$ with probability $w^*_j$ and returns its response. Under both $\Pfit$ and $\Peval$, the expected scalarised response of this router matches the convex combination $h_{\wb^*}$, and small PIER means that this expected response is close to that of the target. The uniqueness functional thus measures how well such a simple routing policy can substitute the target.

Second, uniqueness is monotone in the peer set. Consider two peer sets $\mathcal{J}_1 \subseteq \mathcal{J}_2$ and the corresponding convex expressivity classes $\Ccal(\mathcal{J}_1)$ and $\Ccal(\mathcal{J}_2)$. Enlarging the peer set enlarges the convex hull, and orthogonal projection onto a larger closed convex set in a Hilbert space cannot increase residual norm. The PIER defined with respect to $\mathcal{J}_2$ therefore has no larger $L^2(\Pfit)$ norm than the PIER defined with respect to $\mathcal{J}_1$. Boundedness of responses implies the same ordering for the uniqueness functional under $\Peval$. Appendix~\ref{app:monotonicity} formalises this argument. Empirically, we observe that this monotonic decay slows down significantly near the task's intrinsic dimension, marking a clear saturation threshold for the ecosystem (\figurename~\ref{fig:auditing}C). This characteristic provides a structural justification for greedy ecosystem pruning strategies in which peers are removed one by one based on their impact on the uniqueness of remaining models.

Third, convex PIER is conservative with respect to richer peer expressivity classes. In addition to $\Ccal_{\mathrm{conv}}$, one may consider the linear span of peer responses or the closure of a reproducing-kernel Hilbert space that contains them. Projections of $Y_t$ onto these larger classes yield alternative residuals and uniqueness functionals $\Ulin$ and $\Uker$. Because projection onto larger closed convex sets can only reduce residual norms, the corresponding uniqueness functionals satisfy
\[
\Uker \leq \Ulin \leq \Uconv.
\]
Appendix~\ref{app:monotonicity} proves this relationship. It follows that convex PIER provides an upper bound on uniqueness defined with respect to any nested richer class. This is desirable for auditing. If $\Uconv$ is small, then the target can be replaced by a very simple mechanism (a convex mixture or a stochastic router), which is an actionable notion of redundancy. If only a very rich non-linear peer class can explain the target, the cost of replacement may be comparable to training a new large model, and convex PIER will not declare the target redundant.

\subsection*{Uniqueness Versus Attribution and the Limits of Observational Logs}

The uniqueness functional studied here measures peer-inexpressible behaviour under a controlled in-silico design. It differs in aim and construction from model-level attribution methods based on cooperative game theory. In Shapley-based attribution~\cite{shapley1953value,lundberg2017unified}, models are treated as players in a cooperative game and a characteristic function assigns a performance score to each subset of models. The Shapley value of a model is its average marginal contribution to performance across all coalitions, subject to axioms of efficiency, symmetry and additivity. This notion of contribution is well suited to sharing rewards but does not encode redundancy.

A simple example illustrates the gap. Consider two models $M_1$ and $M_2$ with identical scalarised responses on all inputs and doses. Suppose that any system using either one alone achieves score $1$ and that using both together yields no further gain. Cooperative game theory assigns a Shapley value of $1/2$ to each model. From an auditing perspective, if $M_1$ is taken as target and $M_2$ as peer, the convex peer expressivity class contains $M_2$, the population PIER of $M_1$ is identically zero, and its uniqueness functional is zero. In other words, $M_1$ is fully redundant relative to $M_2$, yet its Shapley value is strictly positive. Appendix~\ref{app:shapley} makes this argument precise. This demonstrates that a positive Shapley value does not imply non-zero PIER and that Shapley-based attribution cannot serve as a sufficient statistic for redundancy detection.

We empirically validated this theoretical divergence in a controlled linear structural ecosystem (\figurename~\ref{fig:auditing}B). The experiment extends the theoretical setup to a realistic scenario involving a dominant model ($M_1$), its redundant clone ($M_2$), and a niche specialist ($M_3$). Consistent with the theoretical prediction, Shapley value assigns high credit to the redundant clone $M_2$ due to its performance contribution, masking its redundancy. In contrast, PIER correctly identifies $M_2$ as having vanishing uniqueness. Furthermore, the experiment reveals a converse failure mode: Shapley neglects the specialist $M_3$ due to low marginal utility, while PIER identifies it as the most geometrically unique component. This confirms that utility-based attribution and geometry-based auditing are orthogonal objectives.

The ability to impose matched interventions across models is also crucial. In many practical deployments, an auditor may only have access to historical logs in which each model has been queried under its own observational design, for example as part of different A/B tests or routing policies. In such a regime the joint distribution of $(X,\vartheta,Y_j(X,\vartheta))$ for each model $j$ is determined by an unknown design distribution $Q_j$ on $\Xcal \times \ThetaSpace$ and the deterministic response function $Y_j$. One might hope to reconstruct the population uniqueness functional with respect to a reference design $\Pdesign^\star$ from these logs. Appendix~\ref{app:non_identifiability} shows that this is impossible in general. There exist two ecosystems that induce identical joint distributions over observational logs for all models and all sample sizes, yet whose uniqueness functionals with respect to $\Pdesign^\star$ differ, with one equal to zero and the other strictly positive. Any procedure based purely on observational logs must therefore fail on at least one of these ecosystems. This non-identifiability result highlights that in-silico control over interventions is not merely a convenience but a structural prerequisite for uniqueness auditing in the sense studied here.

\subsection*{Uniqueness, Robustness and the Shape of the PIER Trajectory}

The uniqueness functional $\Uniqueness$ compresses the entire residual trajectory $(x,\theta) \mapsto \PIER(x,\theta)$ into a single scalar via an expectation under $\Peval$. This is a natural summary for many purposes but it inevitably loses information. In particular, a model may exhibit a large uniqueness score either because it consistently behaves differently from peers in a smooth and robust way, or because it reacts erratically to small perturbations, generating high-magnitude residuals only in narrow regions of the dose space.

Appendix~\ref{app:uniqueness_and_robustness} makes this ambiguity explicit. For any prescribed pair of constants $0 < L_{\mathrm{low}} < L_{\mathrm{high}}$, it constructs two targets that share the same peer set and the same uniqueness functional $\Uniqueness$ under a given $\Peval$, yet whose PIER functions have dramatically different regularity along the dose axis. In one case, the residual is constant in $\theta$ and hence perfectly stable, with Lipschitz constant bounded by $L_{\mathrm{low}}$. In the other, the residual oscillates rapidly with $\theta$ and has Lipschitz constant at least $L_{\mathrm{high}}$. The two models are indistinguishable at the level of the scalar uniqueness score but have very different robustness profiles. This shows that, without additional inductive bias, the magnitude of $\Uniqueness$ alone cannot separate ``capability-driven'' uniqueness from uniqueness arising from sensitivity to perturbations.

In practice, this suggests two complementary uses of the PIER framework. The scalar functional $\Uniqueness$ is appropriate for ranking models by their average degree of peer-inexpressible behaviour in a given context and for certifying strong forms of redundancy when it is close to zero. For more fine-grained auditing, especially when robustness to interventions matters, it is natural to inspect the full dose–response profile $\theta \mapsto \PIER(x,\theta)$ at representative inputs or to regularise the residual trajectory, for example by penalising its variation along $\theta$. The in-silico design, which treats $\theta$ as an explicit controlled coordinate, makes such analyses straightforward to implement.

\subsection*{Use Case Study: Auditing Large Language Models}

%TC:ignore
\begin{figure}[t]
    \centering

    \begin{subfigure}{0.32\linewidth}
        \centering
        \includegraphics[width=\linewidth]{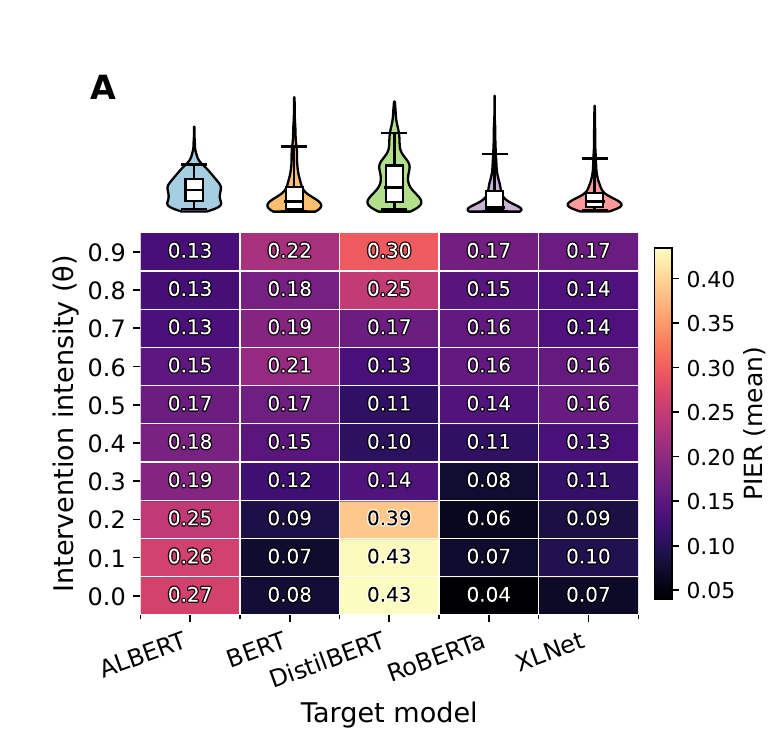}
        % \caption{PIER heatmap across targets and doses}
        \label{fig:fig3a}
    \end{subfigure}
    \hfill
    \begin{subfigure}{0.32\linewidth}
        \centering
        \includegraphics[width=\linewidth]{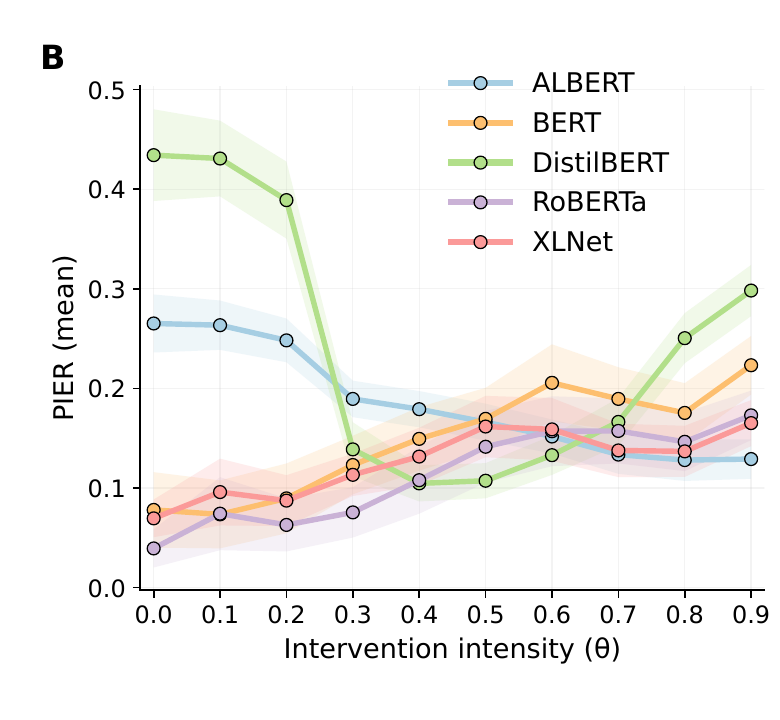}
        % \caption{Dose--response curves of PIER}
        \label{fig:fig3b}
    \end{subfigure}
   \hfill
    \begin{subfigure}{0.32\linewidth}
        \centering
        \includegraphics[width=\linewidth]{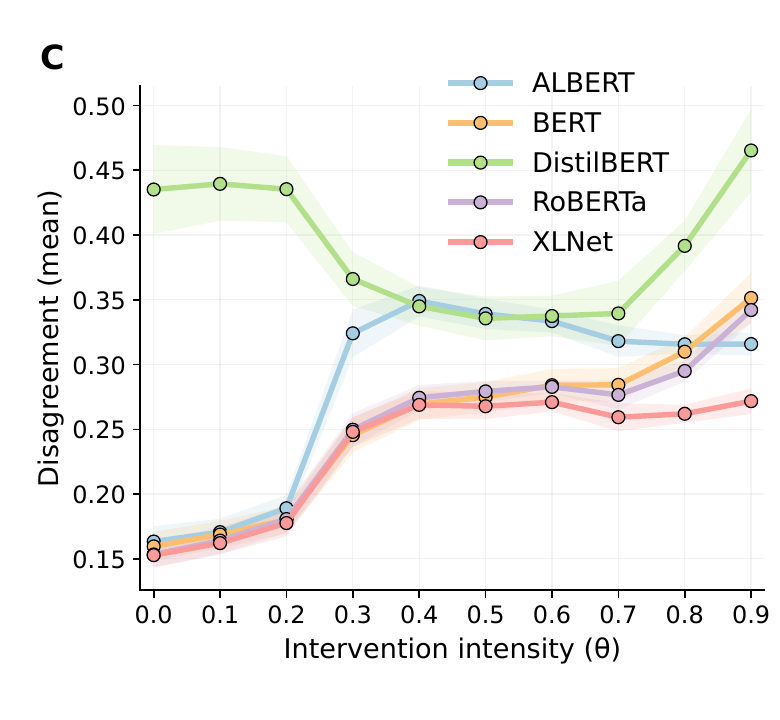}
        % \caption{DISCO routing structure vs.\ dose}
        \label{fig:fig3c}
    \end{subfigure}
   
    \begin{subfigure}{0.32\linewidth}
        \centering
        \includegraphics[width=\linewidth]{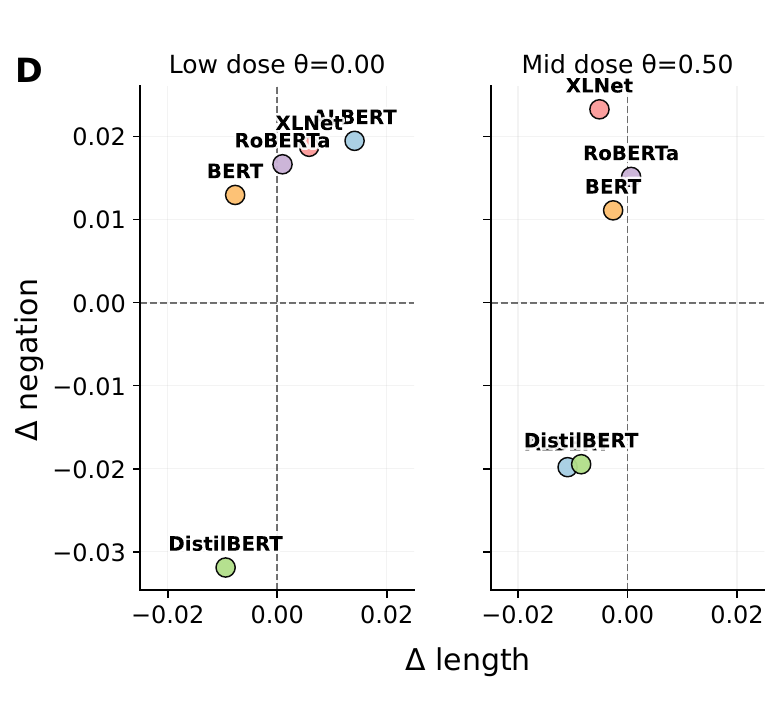}
        % \caption{DISCO routing structure vs.\ dose}
        \label{fig:fig3d}
    \end{subfigure}
 \hfill
    \begin{subfigure}{0.32\linewidth}
        \centering
        \includegraphics[width=\linewidth]{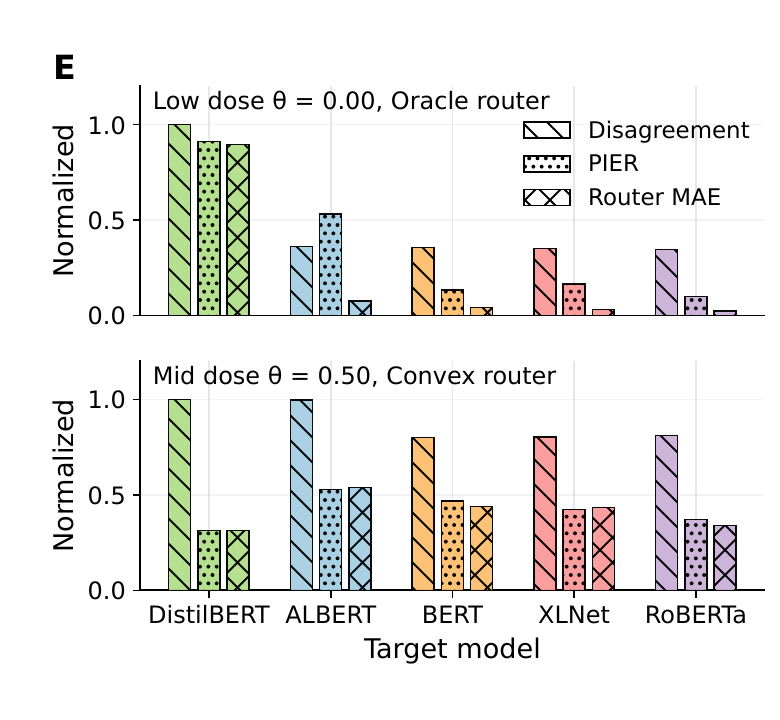}
        % \caption{DISCO routing structure vs.\ dose}
        \label{fig:fig3e}%
    \end{subfigure}
     \hfill
    \begin{subfigure}{0.32\linewidth}
        \centering
        \includegraphics[width=\linewidth]{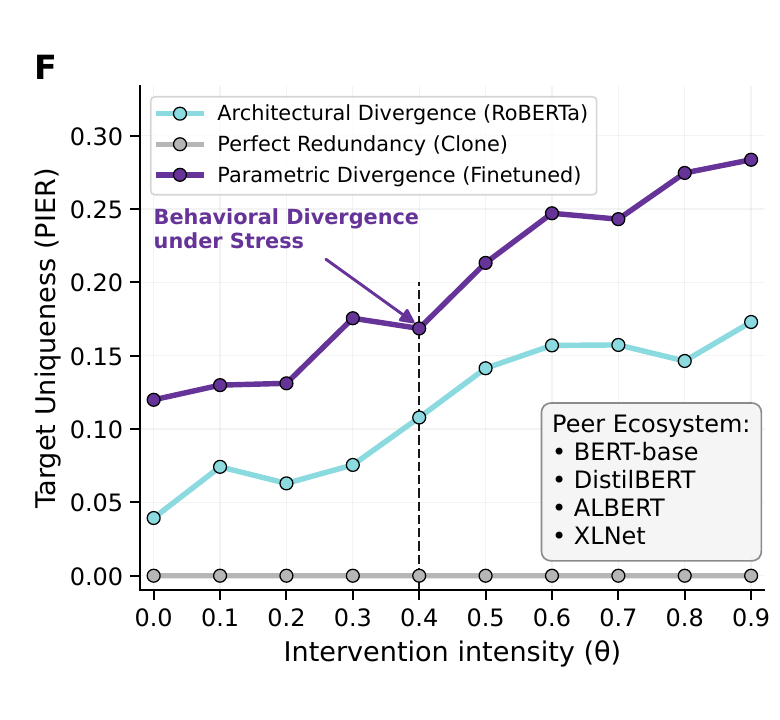}
        % \caption{Clone / variant sanity check}
        \label{fig:fig3f}
    \end{subfigure}

\caption{
\textbf{Auditing an SST-2 ecosystem with DISCO under matched token masking.}
PIER reveals dose-dependent uniqueness (A,B) that can differ from raw disagreement (C).
Context fingerprints at $\theta=0.0$ and $0.5$ localize which linguistic regimes drive residuals (D).
Routing checks validate the interpretation: oracle routing still fails for ALBERT at $\theta=0.0$, while DISCO-weight convex routing matches all targets at $\theta=0.5$ (E).
Controlled targets separate redundancy, architectural divergence, and parametric divergence (F).
}
  \label{fig:llm_audit}
\end{figure}
%TC:endignore

To demonstrate the practical utility of DISCO, we audited a heterogeneous ecosystem of language models on SST-2~\cite{socher2013recursive,wang2019glue} under matched token-masking interventions.
We treated the masking intensity $\theta$ as a controlled Type-B stress coordinate and used each model's positive-class probability as the scalarized response.
We considered a five-model ecosystem comprising \texttt{BERT-base}~\cite{devlin2019bert}, \texttt{DistilBERT}~\cite{sanh2019distilbert}, \texttt{ALBERT}~\cite{lan2020albert}, \texttt{RoBERTa}~\cite{liu2019roberta}, and \texttt{XLNet}~\cite{yang2019xlnet}.
For Fig.~\ref{fig:llm_audit}A--E, we performed a cross-audit on this ecosystem by treating each model in turn as the target and using the remaining four models as its peers, namely an ``establishment'' peer set that changes with the target.
For Fig.~\ref{fig:llm_audit}F, we fixed a single establishment peer set, namely \texttt{BERT-base}, \texttt{DistilBERT}, \texttt{ALBERT}, and \texttt{XLNet}, and we audited three controlled targets to separate sources of uniqueness:
(1) a \texttt{DistilBERT} \emph{Clone}, which is a bit-wise copy of the peer \texttt{DistilBERT} and represents perfect redundancy;
(2) a \texttt{RoBERTa} model, which represents architectural divergence from the BERT lineage; and
(3) a \texttt{DistilBERT} \emph{Finetuned}, which shares the same architecture as \texttt{DistilBERT} but differs by fine-tuning, representing parametric divergence.

Across the cross-audit, Fig.~\ref{fig:llm_audit}A--B shows that uniqueness is strongly dose-dependent.
We observe qualitatively different PIER trajectories across models, and these trajectory shapes carry audit-relevant information that a single scalar score cannot capture.
For example, DistilBERT exhibits a pronounced mid-dose redundancy window where its PIER drops sharply relative to its low-dose behavior, while BERT, RoBERTa, and XLNet show increasing peer-inexpressible residuals as masking becomes more severe.
In contrast, ALBERT exhibits its largest PIER at low dose and then decays as $\theta$ increases.
This heterogeneity supports a key practical message of our framework: we should treat ``uniqueness'' as a property \emph{conditional on a controlled intervention protocol}, not as a static label attached to a model.

A second message in Fig.~\ref{fig:llm_audit}B--C is that ``being different'' is not the same as ``being irreplaceable.''
Disagreement (Fig.~\ref{fig:llm_audit}C) measures the average deviation from individual peers, so it answers a pairwise question.
PIER (Fig.~\ref{fig:llm_audit}B) instead measures the residual after projecting onto the convex hull of peer behaviors, so it answers a substitution question.
These two views can disagree at specific stress levels, and Fig.~\ref{fig:llm_audit}E turns those disagreements into testable, operational claims about routing.

At low dose $\theta=0.0$, Fig.~\ref{fig:llm_audit}B and Fig.~\ref{fig:llm_audit}C give different conclusions for ALBERT.
PIER ranks ALBERT as highly peer-inexpressible, while disagreement does not single it out.
To test what this gap means, we ask the strongest possible substitution question: if we could route each input to the best peer using hindsight, could we match ALBERT's responses?
We implement this as an oracle router that selects, for each example, the peer with the smallest absolute error.
Fig.~\ref{fig:llm_audit}E (top) shows that ALBERT still incurs a large routing error under this oracle policy.
This observation sharpens the interpretation of the low-dose PIER signal.
ALBERT is not merely ``far'' from a particular peer; rather, the ecosystem lacks the behavioral pieces needed to cover ALBERT even with per-example routing.
In other words, the residual we observe is not an artifact of a poor mixture choice, and it reflects a genuine uniqueness margin relative to the available peers.

At mid dose $\theta=0.5$, we observe the converse mismatch.
Disagreement suggests that DistilBERT and ALBERT remain especially different from the rest of the ecosystem, but PIER does not treat them as uniquely peer-inexpressible in this regime.
Here we test the constructive implication of small PIER, namely \emph{actionable redundancy}.
We build a convex router that samples peers using the DISCO projection weights learned from an independent fitting split, and we evaluate its mean absolute error on held-out inputs at $\theta=0.5$.
Fig.~\ref{fig:llm_audit}E (bottom) shows that this simple context-independent routing policy achieves uniformly small error across targets, and DistilBERT is among the easiest models to substitute.
This result explains why disagreement can overstate uniqueness: a model can be moderately far from every single peer while still lying close to their convex hull.
More importantly, it illustrates a practical advantage of PIER over heuristic difference scores.
PIER does not only rank models, but it also produces a concrete replacement policy and it predicts when that policy should work.

Finally, Fig.~\ref{fig:llm_audit}D and Fig.~\ref{fig:llm_audit}F show how our audit goes beyond global ranking.
In Fig.~\ref{fig:llm_audit}D, we summarize how each model's residual shifts across two interpretable linguistic contrasts, namely sentence length and the presence of explicit negation.
At $\theta=0.0$, ALBERT lies in the upper-right quadrant, which indicates that its peer-inexpressible behavior concentrates on longer sentences and negation, while DistilBERT lies in the lower-left quadrant, indicating the opposite trend.
At $\theta=0.5$, the models rearrange, and ALBERT moves toward the lower-left cluster, which shows that stress can change which linguistic regimes drive residual behavior.
This context sensitivity suggests that ecosystem governance does not need to reduce to a binary keep-or-remove decision.
When uniqueness concentrates in specific regimes, we can route those regimes to the unique model and route the remaining traffic to cheaper peers.

In Fig.~\ref{fig:llm_audit}F, we isolate three sources of uniqueness with controlled targets.
The DistilBERT clone stays at near-zero PIER across all $\theta$, which validates that DISCO does not hallucinate uniqueness when perfect redundancy exists.
RoBERTa shows a stable, non-zero trajectory, consistent with architectural divergence.
The fine-tuned DistilBERT variant displays a sharp increase in PIER once $\theta$ enters a higher-stress regime, indicating stress-induced behavioral divergence even within the same architecture.
Taken together, Fig.~\ref{fig:llm_audit} shows how DISCO converts a geometric definition of uniqueness into a practical audit workflow: we can characterize dose-dependent uniqueness, localize it to interpretable contexts, and validate replaceability claims through explicit routing policies.

\subsection*{Use Case Study: Auditing Computer Vision Models}

We evaluate whether PIER provides actionable auditing signals for modern computer vision ecosystems.
We build a peer set spanning standard ImageNet~\cite{deng2009imagenet} classifiers (ResNet~\cite{he2016deep}, EfficientNet~\cite{tan2019efficientnet}, ConvNeXt~\cite{liu2022convnet}, ViT~\cite{dosovitskiy2021image}) and robust~\cite{madry2018towards} or shape-biased~\cite{geirhos2018imagenet} variants, and we probe these models under matched interventions.
We use a scalarized behavioral response (a confidence score for the ground-truth class) and compute PIER within each target model's ecosystem.

% \textbf{Stress can suppress or amplify uniqueness.}
Figure~\ref{fig:cv_audit}A treats adversarial strength $\epsilon$~\cite{goodfellow2015explaining,madry2018towards} as a continuous stress coordinate and reports relative uniqueness PIER$(\epsilon)$/PIER$(0)$.
We see two qualitatively different regimes.
For a standard ResNet-50, uniqueness collapses under attack, with PIER falling to roughly $0.55\times$ its clean baseline.
This pattern suggests that, once inputs cross a robustness threshold, ResNet-50 fails in ways that look increasingly similar to its peers, so the ecosystem offers little incremental behavioral diversity.
In contrast, ConvNeXt shows amplified uniqueness across the same range, with ratios above $1.2\times$, and the robust model shows late-stage amplification.
This contrast matters for auditing.
If we only measure uniqueness on clean data, we can miss models whose distinctive behavior only appears under stress, which is exactly when deployment risk increases.

% \textbf{Context shifts can ``activate'' latent uniqueness.}
In Figure~\ref{fig:cv_audit}B, we switch from adversarial stress to a dataset-context stressor, comparing a texture-natural context to a shape-biased context~\cite{geirhos2018imagenet}.
We plot a paired dumbbell per target model, so the comparison controls for that target's peer set.
Across models, the shape-biased context increases PIER, but the amplification is highly non-uniform.
Shape-biased ResNet variants exhibit the strongest lift, and some standard architectures also show large fold-changes.
This result supports a core claim of PIER: uniqueness is not an intrinsic model attribute, but a \emph{context-relative} property of the model within a given ecosystem.

%TC:ignore
\begin{figure}[t]
    \centering

    \begin{subfigure}{0.32\linewidth}
        \centering
        \includegraphics[width=\linewidth]{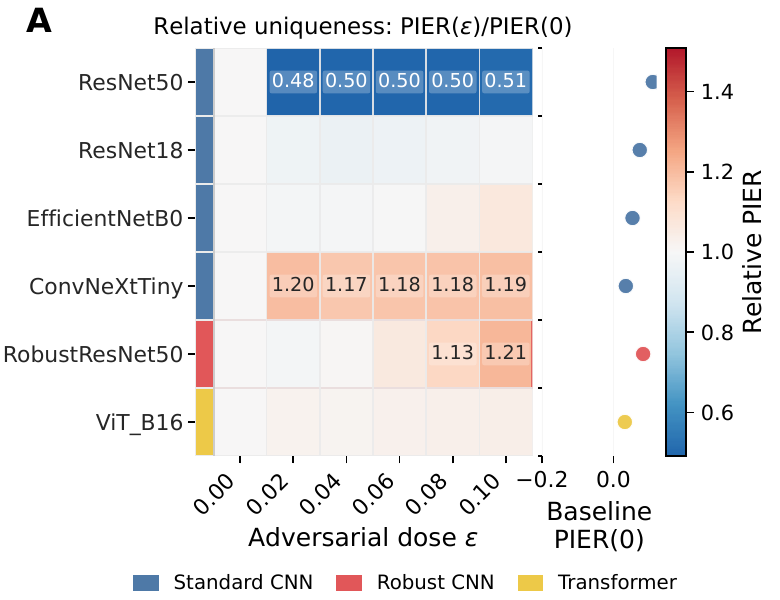}
        \label{fig:fig4a}
    \end{subfigure}
    \hfill
    \begin{subfigure}{0.32\linewidth}
        \centering
        \includegraphics[width=\linewidth]{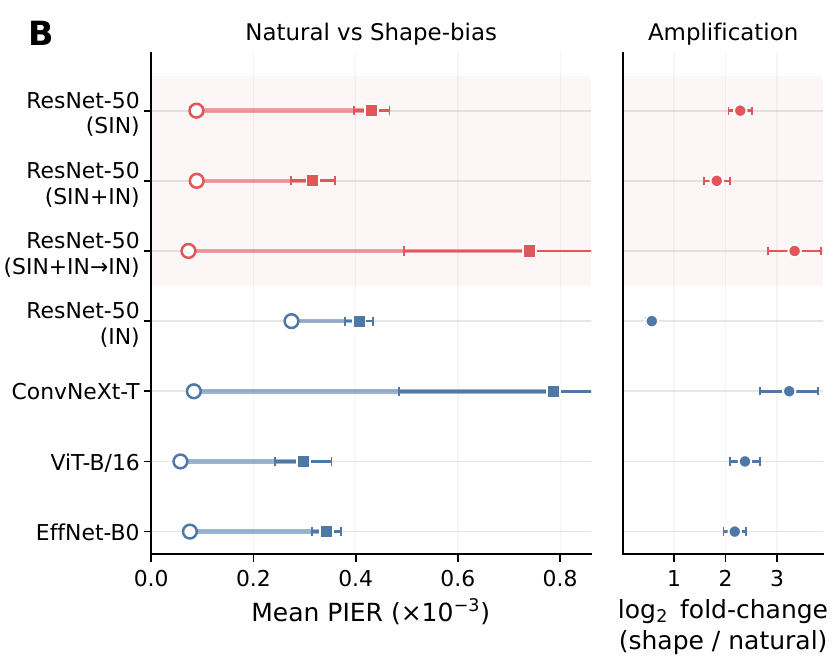}
        \label{fig:fig4b}
    \end{subfigure}
   \hfill
    \begin{subfigure}{0.32\linewidth}
        \centering
        \includegraphics[width=\linewidth]{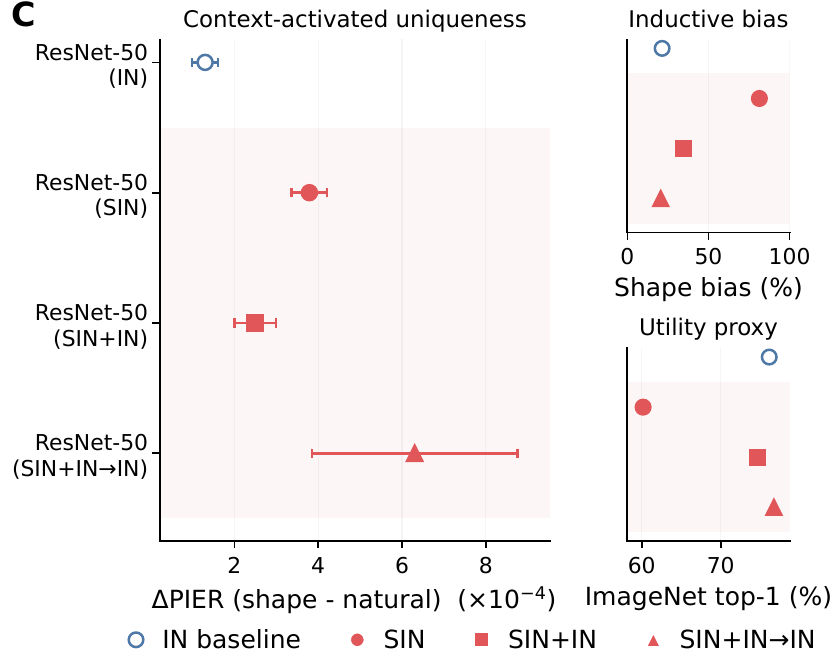}
        \label{fig:fig4c}
    \end{subfigure}

    \begin{subfigure}{0.49\linewidth}
        \centering
        \includegraphics[width=\linewidth]{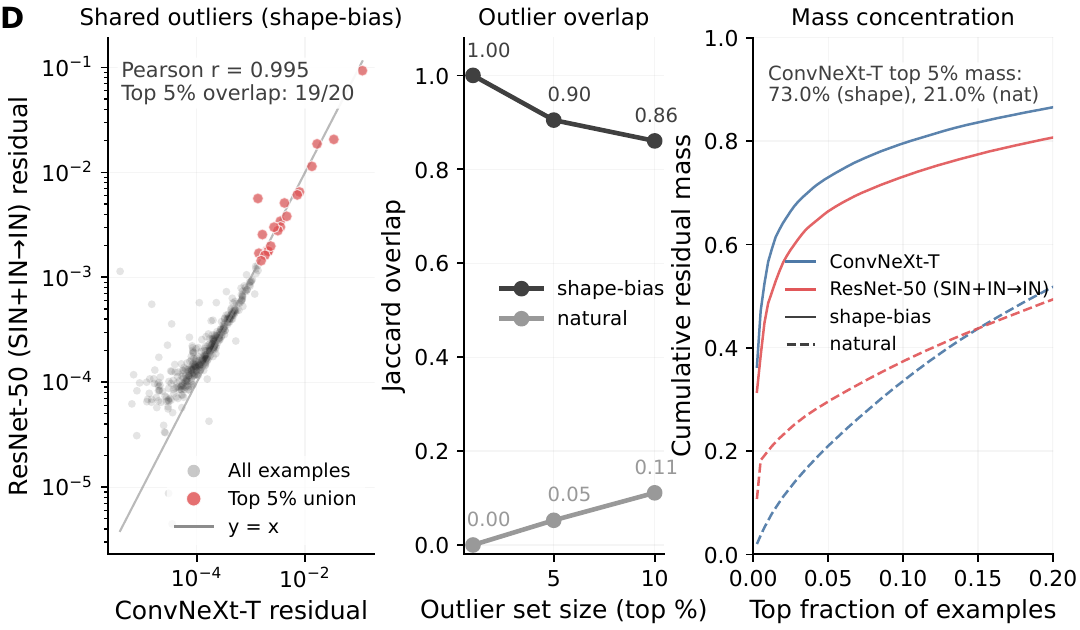}
        \label{fig:fig4d}
    \end{subfigure}
     \hfill
    \begin{subfigure}{0.49\linewidth}
        \centering
        \includegraphics[width=\linewidth]{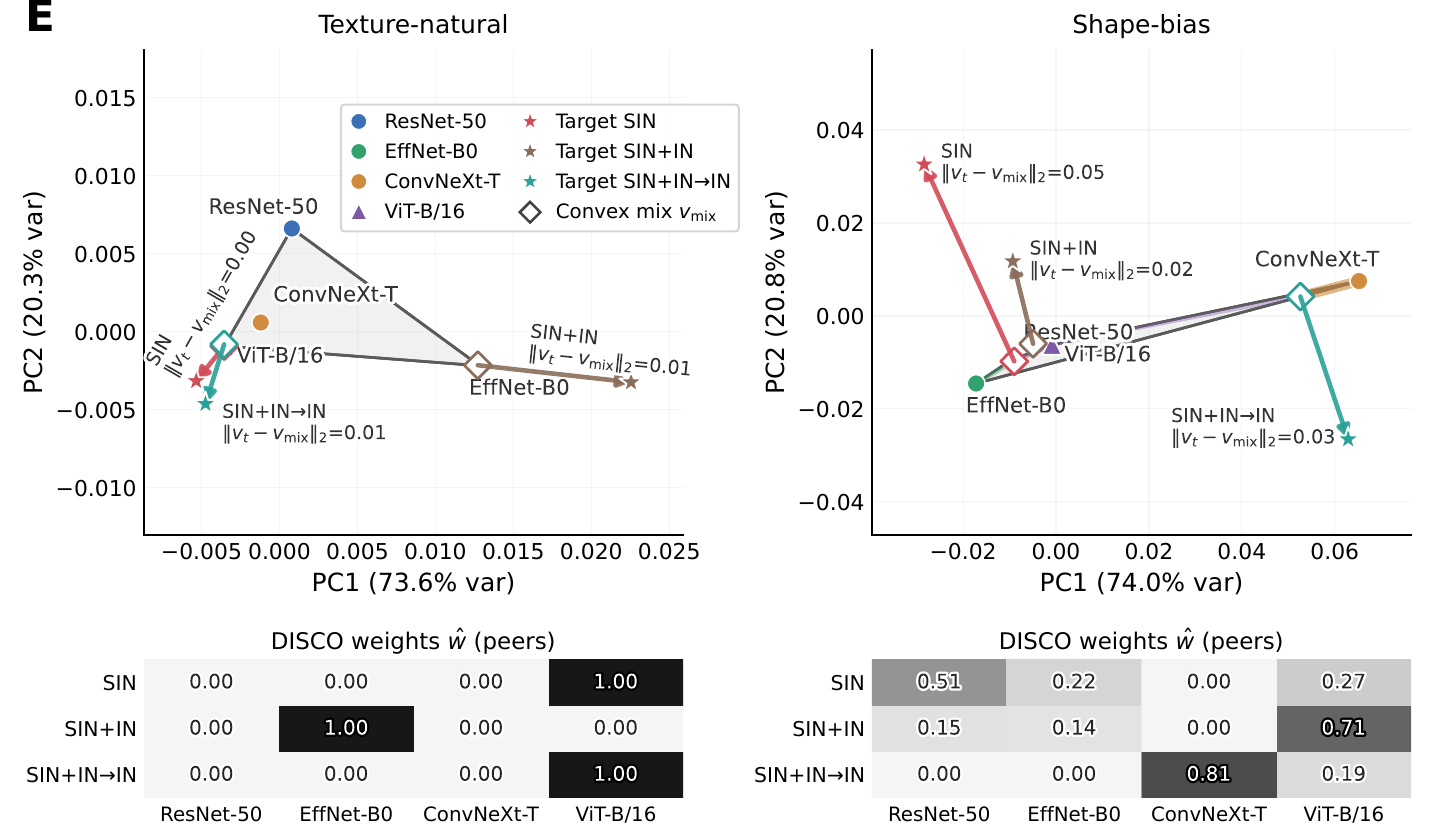}
        \label{fig:fig4e}
    \end{subfigure}

\caption{
\textbf{Auditing vision ecosystems under controlled stressors.}
(A) Relative uniqueness across a matched adversarial dose $\epsilon$, reported as PIER$(\epsilon)$/PIER$(0)$, with baseline PIER$(0)$ shown at right.
(B) Context audit comparing texture-natural versus shape-biased evaluation, plus the within-model amplification $\log_2(\mathrm{PIER}_{\mathrm{shape}}/\mathrm{PIER}_{\mathrm{nat}})$.
(C) Context-activated uniqueness $\Delta$PIER decouples from both inductive bias (shape bias score) and a utility proxy (ImageNet top-1).
(D) Heavy-tail diagnostics show that a small set of shared outliers dominates residual mass under shape-bias.
(E) Convex-hull geometry: targets move outside the peer hull in the shape-bias context, and DISCO weights explain the closest convex surrogate.
}
  \label{fig:cv_audit}
\end{figure}
%TC:endignore

% \textbf{$\Delta$PIER is not a proxy for bias or utility.}
Figure~\ref{fig:cv_audit}C summarizes context-activated uniqueness as $\Delta$PIER $=$ PIER$_{\mathrm{shape}}-$PIER$_{\mathrm{nat}}$ and compares it against two common summary axes.
First, it does not track an inductive-bias score (shape bias), even though the experimental manipulation directly targets shape versus texture cues.
Second, it does not track ImageNet top-1 accuracy, which we use as a simple utility proxy.
This decoupling matters for governance.
A team that optimizes for accuracy or for a single bias metric can still assemble an ecosystem with large redundancy, or overlook models whose behavior becomes uniquely risky or uniquely valuable under shift.
PIER instead measures substitutability against a peer hull, so it captures complementarity that these one-dimensional summaries miss.

% \textbf{Uniqueness concentrates on a small set of high-leverage examples.}
To explain why certain models show large context amplification, Figure~\ref{fig:cv_audit}D inspects per-example residuals between each target and its best convex surrogate.
Under the shape-bias context, residuals become heavy-tailed: the top few percent of examples account for most of the residual mass, and ConvNeXt and the shape-biased ResNet share these outliers almost perfectly.
Under the texture-natural context, the same overlap collapses toward zero.
This diagnostic provides a concrete interpretation for ``context-activated'' uniqueness.
The context shift does not uniformly change behavior across the dataset.
Instead, it activates a rare but decisive set of test cases where the ecosystem's failure modes align or diverge, and these cases dominate the uniqueness signal.

% \textbf{Convex-hull geometry makes auditing interpretable.}

Finally, Fig.~\ref{fig:cv_audit}E visualizes the geometric object that PIER measures.
We embed the peer models and each target model into a 2D PCA space for visualization, shade the
peers' convex hull, and plot the DISCO-induced convex-combination point that best approximates the
target in this 2D projection. In the texture-natural context, target points typically lie near (or
inside) the peers' hull, and the convex-combination point nearly overlaps with the target,
indicating near-complete substitutability. In the shape-bias context, targets more often fall
outside the peers' hull; the gap becomes larger and the weights redistribute, revealing which peers
explain the closest surrogate and what residual behavior remains irreducible. This panel closes the
loop between theory and practice: PIER behaves as a geometric distance-to-hull signal, and DISCO
produces an explicit, auditable substitution certificate.

Together, Figure~\ref{fig:cv_audit} shows that PIER supports high-level governance questions that accuracy and bias metrics do not answer.
We can stress-test ecosystems, identify contexts that activate uniqueness, localize the small set of examples that drive irreducible behavior, and extract interpretable substitution structures that explain which models the ecosystem can safely consolidate.

\FloatBarrier

\subsection*{Use Case Study: Multi-City Traffic Predictions}

To stress-test DISCO in a high-stakes \emph{model-governance} setting, we audited an ecosystem of $N=31$ city-level traffic forecasters.
Each city model served as a target in turn, and we treated the remaining city models together with a pooled \texttt{GLOBAL} model as its peers.
For every target city, DISCO returned (i) a uniqueness score, PIER, and (ii) an interpretable convex decomposition over peers, $\hat{w}$, which we deployed as a \emph{convex router} to substitute the target.
This setting let us connect geometric uniqueness to concrete operational questions, namely: which city models are redundant, which ones are irreplaceable, and how much consolidation we can achieve under a fixed performance budget.

%TC:ignore
\begin{figure}[t]
    \centering

    \begin{subfigure}{0.37\linewidth}
        \centering
        \includegraphics[width=\linewidth]{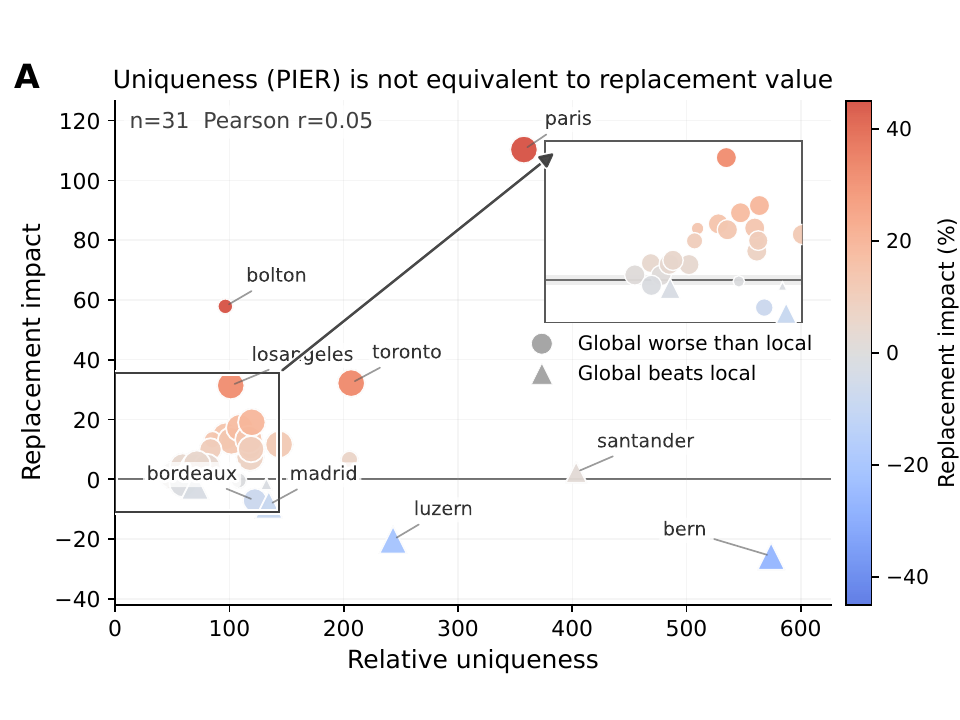}
        \label{fig:fig5a}
    \end{subfigure}
    \hfill
    \begin{subfigure}{0.62\linewidth}
        \centering
        \includegraphics[width=\linewidth]{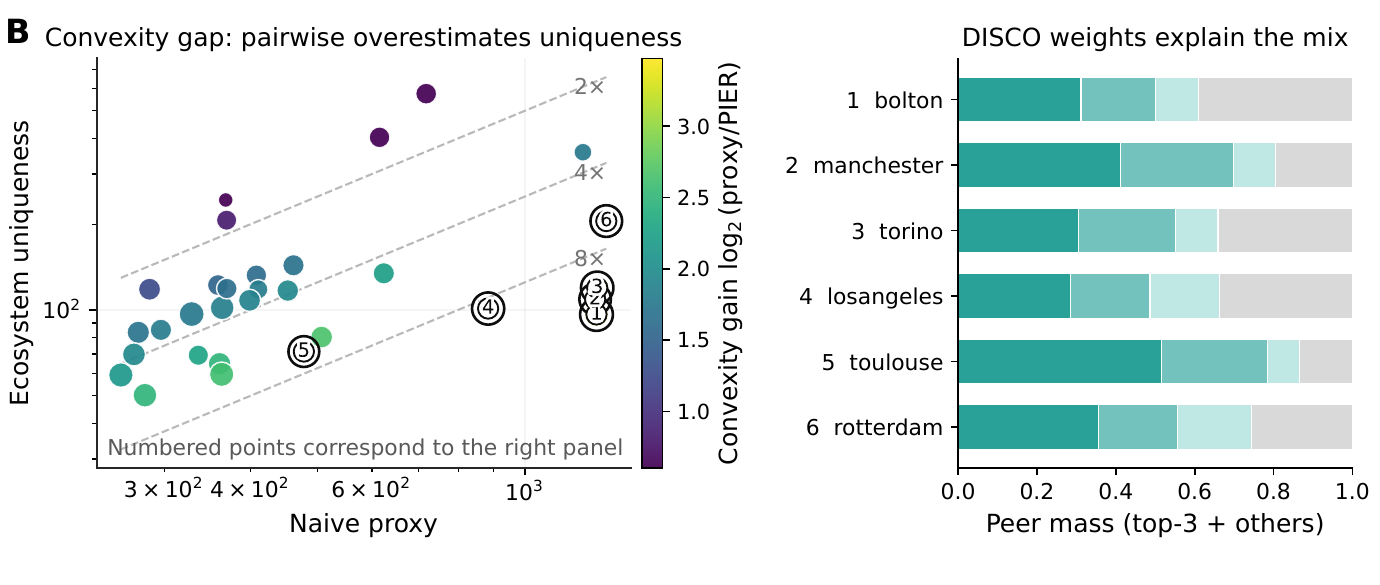}
        \label{fig:fig5b}
    \end{subfigure}

    \begin{subfigure}{0.33\linewidth}
        \centering
        \includegraphics[width=\linewidth]{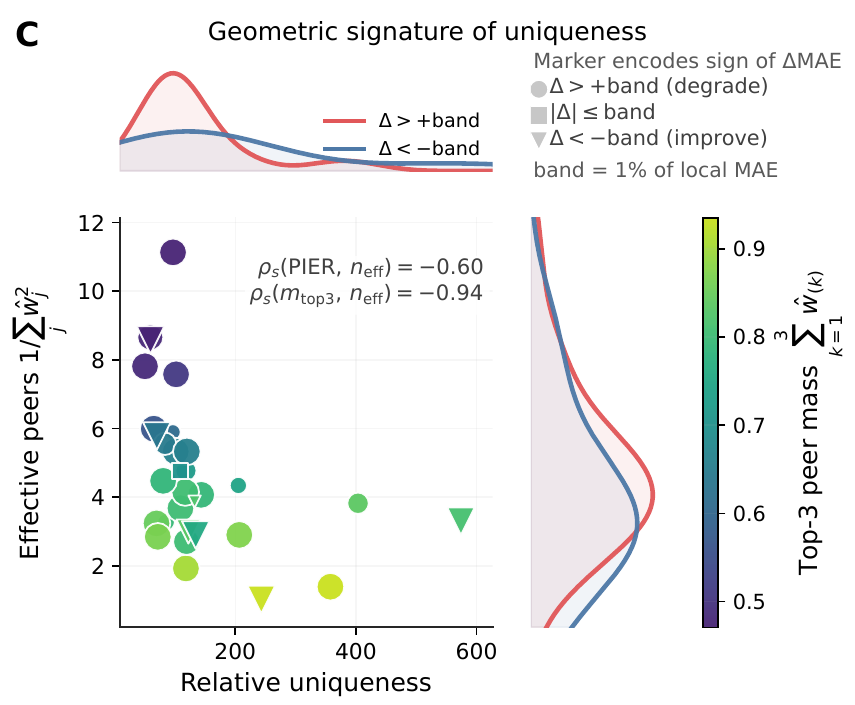}
        \label{fig:fig5c}
    \end{subfigure}
    \hfill
    \begin{subfigure}{0.33\linewidth}
        \centering
        \includegraphics[width=\linewidth]{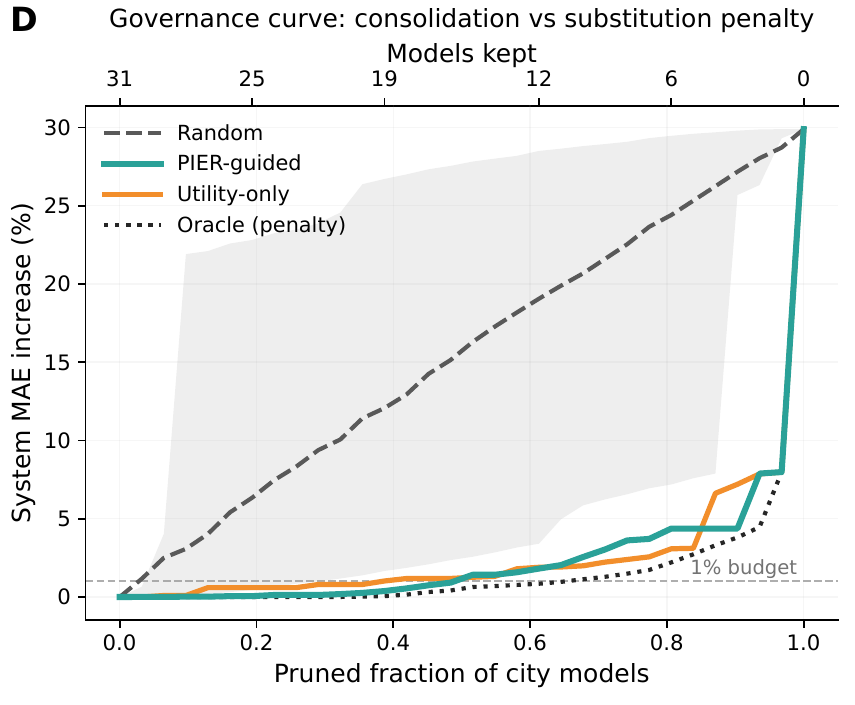}
        \label{fig:fig5d}
    \end{subfigure}
    \hfill
    \begin{subfigure}{0.33\linewidth}
        \centering
        \includegraphics[width=\linewidth]{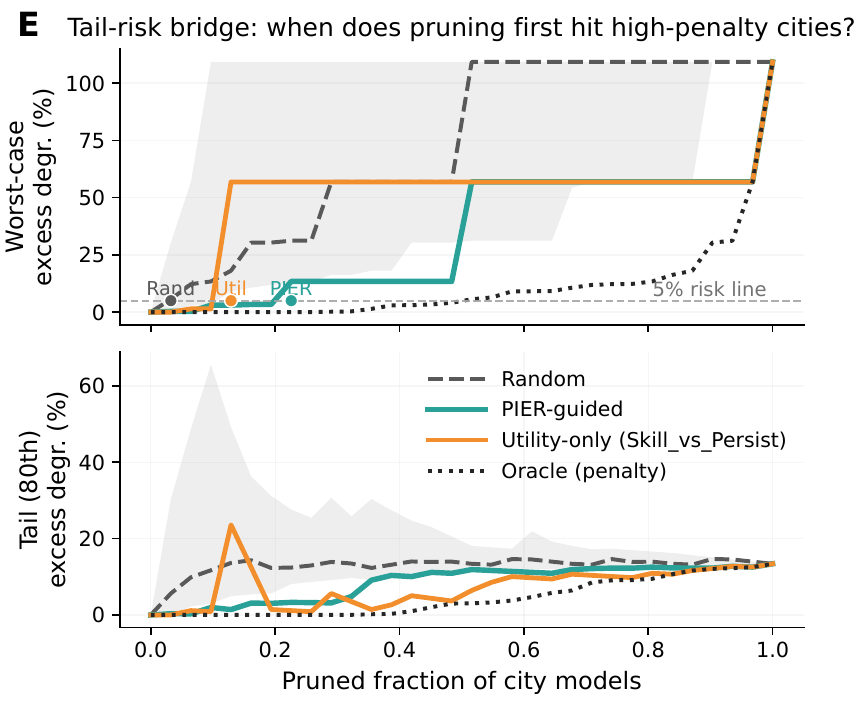}
        \label{fig:fig5e}
    \end{subfigure}

\caption{
\textbf{Auditing traffic models trained on multiple cities.}
(A) Relative uniqueness versus signed replacement impact when substituting a local city model with the DISCO convex router (inset shows the dense region; marker shape indicates whether \texttt{GLOBAL} beats the local model).
(B) Convexity gap between a pairwise proxy and ecosystem uniqueness (left), with DISCO weights that explain representative high-gap cases (right).
(C) Geometric signature linking uniqueness to convex-mix complexity; we report Spearman's rank correlation $\rho_s$.
(D) Consolidation curve: system MAE increase as we prune city models under different pruning orders.
\emph{Utility-only} ranks cities by their standalone forecasting skill relative to
a persistence baseline, ignoring substitution/replacement penalties. \emph{Oracle (penalty)} ranks
cities using access to the true replacement penalty (available only for evaluation, not in practice).
(E) Tail-risk bridge: worst-case and tail excess degradation along the pruning path.}
  \label{fig:multicity_audit}
\end{figure}
%TC:endignore

% \textbf{Uniqueness does not equal replacement value.}
In Fig.~\ref{fig:multicity_audit}A we plot \emph{relative uniqueness} on the $x$-axis (PIER normalized by the typical flow scale) against the \emph{replacement impact} on the $y$-axis, which we define as the signed MAE change when we replace a target city model with its convex router.
A positive impact means substitution degrades accuracy, while a negative impact means the router improves over the local model.
We observe essentially no linear association (Pearson $r=0.05$), which highlights a central message of our paper: PIER measures \emph{behavioral} uniqueness relative to peers, not task utility.
This decoupling exposes three practically distinct regimes that a single utility score cannot separate.
First, many cities sit near the origin in the inset, which indicates low uniqueness and small replacement impact; these models are natural consolidation candidates.
Second, a small set of cities occupy the upper-right region (for example, \texttt{paris}), where high uniqueness co-occurs with large positive replacement impact; these models are irreplaceable because no peer mixture reproduces their behavior without paying a clear error penalty.
Third, we find \emph{unique-but-unhealthy} cities in the lower-right region (for example, \texttt{bern}), where PIER is high but replacement impact is negative and the \texttt{GLOBAL} model already outperforms the local model (triangle markers).
In this regime, uniqueness reflects idiosyncratic failure modes rather than valuable specialization, and DISCO flags these cases precisely because it separates ecosystem geometry from utility.

% \textbf{Convexity gap: why pairwise disagreement overstates uniqueness.}
Fig.~\ref{fig:multicity_audit}B demonstrates the mechanism behind many apparent ``unique'' cities under naive metrics.
The left panel compares an ecosystem-level uniqueness score (PIER) to a pairwise proxy that ignores mixtures, and we color points by the \emph{convexity gain} $\log_2(\text{proxy}/\text{PIER})$.
Large gains indicate cities that look far from every single peer, yet fall close to the \emph{convex hull} of the peer set.
These are exactly the cases where pairwise comparisons systematically overestimate uniqueness, because the true substitute is not one peer but a sparse mixture of several.
The right panel makes this geometric fact operational: DISCO returns $\hat{w}$ and shows that high-gap cities admit a compact top-$3$ decomposition that explains most of the mixture mass.
This weight-level explanation matters for governance, because it converts ``this city is redundant'' into a concrete substitution plan and reveals which peer models form the effective backbone of the ecosystem.

% \textbf{Geometric signature: unique cities lie on sparse faces of the peer hull.}
Fig.~\ref{fig:multicity_audit}C ties PIER back to convex geometry using two complementary notions of mixture complexity.
We summarize how many peers the router effectively uses via
$n_{\mathrm{eff}} := 1/\sum_j \hat{w}_j^2$, and we summarize weight concentration via the top-$3$ mass $m_{\mathrm{top3}} := \sum_{k=1}^3 \hat{w}_{(k)}$ (sorted in descending order).
We observe a clear monotone relationship: as relative uniqueness grows, the convex mix becomes less diffuse (Spearman $\rho_s(\text{PIER}, n_{\mathrm{eff}})=-0.60$), and $m_{\mathrm{top3}}$ almost deterministically predicts $n_{\mathrm{eff}}$ ($\rho_s(m_{\mathrm{top3}}, n_{\mathrm{eff}})=-0.94$).
This pattern matches the convex-hull view of DISCO.
When a target lies outside the peer hull, its projection typically lands on a low-dimensional face supported by a small subset of peers, which concentrates $\hat{w}$ and reduces $n_{\mathrm{eff}}$.
Importantly, the marker encoding reveals that high uniqueness can correspond to either degradation or improvement under substitution, which reinforces that PIER is not a utility proxy.
Instead, PIER and the weight geometry tell us \emph{how} a model differs from the ecosystem and \emph{how} substitution will depend on a small set of supporting peers.

% \textbf{PIER enables budgeted consolidation curves, not just rankings.}
Fig.~\ref{fig:multicity_audit}D turns the audit into a decision procedure.
We progressively prune city models and substitute each removed model with a convex router built from the remaining ecosystem.
We then measure the system-level MAE increase as a function of the pruned fraction.
A random pruning order degrades steadily, while the PIER-guided order stays near the oracle lower bound for most of the path and exhibits a sharp knee only when pruning starts to remove high-PIER cities.
A utility-only heuristic based on forecasting skill also looks competitive on mean degradation, but it provides a fundamentally different guarantee.
Utility is city-intrinsic, while substitution depends on ecosystem coverage.
PIER directly targets that coverage by ordering models according to convex-hull novelty, which is why it produces a clean governance curve that supports statements of the form: ``we can remove a large fraction of models while staying within a fixed MAE budget.''

% \textbf{Tail-risk is where ecosystem geometry matters most.}
Finally, Fig.~\ref{fig:multicity_audit}E shows why we need a geometric notion of uniqueness even when a utility heuristic seems close on average.
We report the \emph{excess degradation} beyond a $1\%$ local-MAE band and track both the worst-case city and the 80th-percentile tail as pruning proceeds.
The worst-case curve exhibits step-like jumps that reveal a heavy-tailed substitution-risk landscape.
Utility-only pruning hits a high-penalty city early, producing an immediate large worst-case jump, while PIER-guided pruning delays this first catastrophic event to a much later pruning fraction.
This behavior is consistent with the convex-hull interpretation: high-PIER cities sit on extreme faces that the ecosystem cannot reconstruct, so pruning them early exposes the system to abrupt SLA violations.
By contrast, PIER-guided consolidation preferentially removes interior, well-covered cities first, which makes the tail-risk profile substantially more stable.

Taken together, Fig.~\ref{fig:multicity_audit} demonstrates how DISCO turns a heterogeneous set of deployed predictors into a coherent geometric object.
We use that geometry to (i) decouple uniqueness from utility, (ii) avoid false uniqueness from pairwise proxies, (iii) produce interpretable substitution backbones via $\hat{w}$, and (iv) deliver governance curves that quantify consolidation budgets and tail-risk tradeoffs in a directly actionable way.

\section*{Discussion}

We introduced a statistical framework for quantifying model uniqueness in heterogeneous AI ecosystems via
in-silico quasi-experimental design. By separating intrinsic model identity (Type-A treatment) from extrinsic
input interventions (Type-B treatment), and by applying matched Type-B patterns across all models, we obtain
a convex peer expressivity class and a residual notion of peer-inexpressible behavior. This yields a population
uniqueness functional (PIER) with a direct operational meaning: whether a target can be substituted by a
probabilistic routing policy over its peers. Under mild regularity conditions, PIER can be estimated reliably by
the DISCO estimator.

On the statistical side, the passive theory establishes that, under fixed designs, DISCO behaves like a standard
projection-based estimator: the projection weights and uniqueness converge to their population counterparts and
admit asymptotic normal approximations. Finite-sample bounds further clarify how design choices (e.g., anchor
radius and dose-grid spacing) control estimation error, supporting the use of PIER as a calibrated audit
statistic rather than an arbitrary score.

A second advantage of in-silico design is algorithmic: when the auditor controls interventions, auditing can be
sample-efficient. In a local linear structural model, a small set of carefully chosen design points suffices for
exact uniqueness detection in the noiseless regime. With noise, repeated queries at these points yield explicit
non-asymptotic sample-complexity guarantees whose dependence on the noise level, uniqueness margin, and
confidence is minimax optimal. These results characterize when uniqueness auditing is both principled and
efficient.

Several structural properties make the convex formulation conservative yet actionable for ecosystem governance.
Monotonicity in the peer set connects PIER to pruning and consolidation decisions, while the routing
interpretation of projection weights provides an explicit, auditable substitution certificate. At the same time,
our non-identifiability results and the separation between PIER and Shapley-based attribution emphasize that
uniqueness auditing is distinct from credit assignment and that matched intervention control is essential.

Empirically, simulations further suggest an ``ecosystem saturation'' effect, where
average uniqueness decreases rapidly as the peer set grows and then exhibits diminishing returns. We conjecture
that analogous phase-transition behavior persists under suitable sampling assumptions on low-dimensional task
manifolds; making this precise and deriving dimension-dependent rates is an interesting direction for future work.

From a practical standpoint, the framework suggests a two-stage audit pipeline in large model ecosystems: a
coarse screening stage to identify low-uniqueness candidates, followed by targeted local audits and robustness
analyses of the PIER trajectory for models of interest. Future work includes incorporating richer structural
priors in the expressivity class (while retaining conservative aggregation), developing adaptive context
selection strategies that focus audit budget on consequential regions of the intervention space, and connecting
uniqueness capacity to downstream trustworthy-AI notions such as robustness, privacy, and policy constraints.
Applying the framework to industrial-scale ecosystems of large language models~\cite{bommasani2021opportunities},
vision models, and multimodal systems will require careful engineering of scalarization maps, intervention
families, and approximate inference under resource constraints.

%===========================
% PART 2: METHODS SECTION
% (to be appended at the end of the main-text PDF)
%===========================

\section*{Methods}

\subsection*{Model Ecosystems and Scalarised Responses}

We consider an input space $\Xcal$ and a dose space $\ThetaSpace$. A deterministic intervention map $T : \ThetaSpace \times \Xcal \to \Xcal$ maps an input $x \in \Xcal$ and a dose parameter $\theta \in \ThetaSpace$ to a perturbed input $T(\theta,x)$. A model ecosystem consists of functions $f_j : \Xcal \to \Ycal$ indexed by $j \in \mathcal{J} = \{1,\dots,N\}$. A scalarisation map $g : \Ycal \to \R$ maps model outputs to real-valued scores.

For each model $j$ and input–dose pair $(x,\theta)$, the scalarised response is defined by
\[
Y_j(x,\theta) = g\big(f_j(T(\theta,x))\big) \in \R.
\]
A target index $t \in \mathcal{J}$ is fixed and its peer set is $\Peers = \mathcal{J} \setminus \{t\}$. The vector of peer responses at $(x,\theta)$ is
\[
\Phib_{-t}(x,\theta) = (Y_j(x,\theta))_{j \in \Peers} \in \R^{|\Peers|}.
\]

\subsection*{In-Silico Quasi-Experimental Design}

A context variable $\Ctx$ takes values in a context space $\Ccalctx$. The global design distribution $\Pdesign$ is a probability measure on $\Xcal \times \ThetaSpace \times \Ccalctx$ describing how inputs, doses and contexts are generated. Conditioning on $\Ctx = c$ yields a distribution on $\Xcal \times \ThetaSpace$. Throughout the Methods we fix such a context $c$ and suppress it from the notation.

Within the fixed context, two design distributions play distinct roles. A fitting design $\Pfit$ is a probability measure on $\Xcal \times \ThetaSpace$ used to generate fitting pairs $(X^{\mathrm{fit}},\vartheta^{\mathrm{fit}})$ at which model responses are observed for learning the peer surrogate. An evaluation design $\Peval$ is a probability measure on $\Xcal \times \ThetaSpace$ used to generate evaluation pairs $(X^{\mathrm{eval}},\vartheta^{\mathrm{eval}})$ at which PIER and uniqueness are assessed.

The honesty assumption states that, conditional on the context, the fitting sample $(X_i^{\mathrm{fit}},\vartheta_i^{\mathrm{fit}})_{i=1}^m$ is drawn independently from $\Pfit$, the evaluation sample $(X_i^{\mathrm{eval}},\vartheta_i^{\mathrm{eval}})_{i=1}^n$ is drawn independently from $\Peval$, and the two samples are independent. This ensures that the same data are not used both to fit the convex peer surrogate and to evaluate PIER.

\subsection*{Peer Expressivity, Population PIER and Uniqueness}

For the fixed context and fitting design $\Pfit$, consider the Hilbert space $L^2(\Pfit)$ of measurable functions $h : \Xcal \times \ThetaSpace \to \R$ with finite second moment under $\Pfit$. The inner product is
\[
\langle h_1,h_2 \rangle = \E_{\Pfit}\big[h_1(X,\vartheta) h_2(X,\vartheta)\big],
\]
with induced norm $\|h\|_2 = \sqrt{\langle h,h \rangle}$.

For a weight vector $\wb \in \R^{|\Peers|}$, define the peer combination function
\[
h_{\wb}(x,\theta) = \wb^\top \Phib_{-t}(x,\theta).
\]
The convex peer expressivity class is
\[
\Ccal = \{h_{\wb} : \wb \in \Simplex\} \subset L^2(\Pfit),
\]
where $\Simplex$ is the simplex of non-negative vectors in $\R^{|\Peers|}$ that sum to one. This class collects all behaviours reachable by stochastic routing over peers with a context-independent routing distribution.

The population residual of a weight vector $\wb$ is the mean-squared deviation between the target and the convex peer combination,
\[
L(\wb) = \E_{\Pfit}\big[(Y_t(X,\vartheta) - \wb^\top \Phib_{-t}(X,\vartheta))^2\big].
\]
Under an identifiability condition, the residual function $L$ has a unique minimiser $\wb^*$ in the relative interior of $\Simplex$, and the residual is locally strongly convex around $\wb^*$. The function $h_{\wb^*}$ is the $L^2(\Pfit)$-projection of $Y_t$ onto the closure of $\Ccal$.

The population peer-inexpressible residual (PIER) is defined pointwise by
\[
\PIER(x,\theta) = Y_t(x,\theta) - \wb^{*\top} \Phib_{-t}(x,\theta).
\]
By construction, $\PIER \in L^2(\Pfit)$ and is orthogonal to the closure of $\Ccal$ in $L^2(\Pfit)$.

To aggregate PIER into a scalar measure, an evaluation design $\Peval$ on $\Xcal \times \ThetaSpace$ is introduced. Under an integrability condition ensuring that $\E_{\Peval}[|\PIER(X,\vartheta)|] < \infty$, the population uniqueness functional is defined as
\[
\Uniqueness = \E_{\Peval}\big[|\PIER(X,\vartheta)|\big],
\]
where $(X,\vartheta) \sim \Peval$. The functional $\Uniqueness$ depends on the target, the peer set, the fitting design via $\wb^*$, and the evaluation design.

\subsection*{The DISCO Estimator}

The DISCO estimator implements empirical counterparts of $\wb^*$, $\PIER$ and $\Uniqueness$ using finite fitting and evaluation samples. For the fixed context, a fitting sample $(X_i^{\mathrm{fit}},\vartheta_i^{\mathrm{fit}})_{i=1}^m$ is drawn independently from $\Pfit$. For each model $j$, the scalarised responses
\[
Y_{j,i}^{\mathrm{fit}} = Y_j(X_i^{\mathrm{fit}},\vartheta_i^{\mathrm{fit}})
\]
are computed. The peer responses are assembled into vectors
\[
\Phib_{-t,i}^{\mathrm{fit}} = \Phib_{-t}(X_i^{\mathrm{fit}},\vartheta_i^{\mathrm{fit}}),
\]
which are stacked row-wise into a design matrix $\Xb_{-t}^{\mathrm{fit}} \in \R^{m \times |\Peers|}$. The target responses form a vector $y_t^{\mathrm{fit}} \in \R^m$.

For a regularisation parameter $\lambda_m \ge 0$, the empirical projection weights are defined as any measurable solution to the convex optimisation problem
\[
\widehat{\wb} \in \argmin_{\wb \in \Simplex}
\left\{
\frac{1}{m} \sum_{i=1}^m \big(Y_{t,i}^{\mathrm{fit}} - \wb^\top \Phib_{-t,i}^{\mathrm{fit}}\big)^2
+ \lambda_m \|\wb\|_2^2
\right\}.
\]
The regularisation term stabilises the estimator in finite samples and is chosen so that $\lambda_m \to 0$ and $m \lambda_m \to 0$ as $m$ increases for the asymptotic theory.

An independent evaluation sample $(X_i^{\mathrm{eval}},\vartheta_i^{\mathrm{eval}})_{i=1}^n$ is drawn from $\Peval$. For each model $j$,
\[
Y_{j,i}^{\mathrm{eval}} = Y_j(X_i^{\mathrm{eval}},\vartheta_i^{\mathrm{eval}}),
\qquad
\Phib_{-t,i}^{\mathrm{eval}} = \Phib_{-t}(X_i^{\mathrm{eval}},\vartheta_i^{\mathrm{eval}})
\]
are computed. The empirical PIER at the $i$-th evaluation point is
\[
\hPIER{}_i = Y_{t,i}^{\mathrm{eval}} - \widehat{\wb}^\top \Phib_{-t,i}^{\mathrm{eval}}.
\]
The empirical uniqueness functional is the average absolute PIER,
\[
\hUniqueness = \frac{1}{n} \sum_{i=1}^n |\hPIER{}_i|.
\]

In practice, evaluation points are often associated with nearby anchor points in the input–dose space, and responses at anchors are used as proxies for responses at evaluation points. Under Lipschitz regularity of the scalarised responses, the error incurred by substituting anchors for exact evaluation points can be bounded in terms of the distance between each evaluation point and its anchor and the dose-grid spacing.

\subsection*{Local Linear Structural Model and Active Auditing Protocol}

For active auditing analyses, a local linear structural model is considered within a fixed context. A feature map $\phi : \Xcal \times \ThetaSpace \to \R^d$ of dimension $d \ge 1$ is assumed known, and each scalarised response admits a linear parametrisation,
\[
Y_j(x,\theta) = \phi(x,\theta)^\top \beta_j,
\]
for some coefficient vector $\beta_j \in \R^d$. The set of peer coefficients is
\[
\mathcal{B}_{\mathrm{peer}} = \{\beta_j : j \in \Peers\} \subset \R^d,
\]
and their convex hull is denoted $\mathcal{H}_{\mathrm{peer}}$. The target coefficient vector is $\beta_t \in \R^d$.

The uniqueness margin is defined as the Euclidean distance from the target coefficient to the convex hull of peer coefficients,
\[
\gamma = \inf_{\wb \in \Simplex} \left\|\beta_t - \sum_{j \in \Peers} w_j \beta_j \right\|_2.
\]
The target is non-unique in this model if and only if $\gamma = 0$.

To perform active auditing, the auditor chooses a set of $d$ design points $(x_\ell,\theta_\ell)$ such that the resulting feature matrix
\[
\Phi = \begin{bmatrix}
\phi(x_1,\theta_1)^\top\\
\vdots\\
\phi(x_d,\theta_d)^\top
\end{bmatrix}
\in \R^{d \times d}
\]
is invertible. In a noiseless regime, querying each model once at these design points yields response vectors $y_j$ satisfying $y_j = \Phi \beta_j$, so that $\beta_j = \Phi^{-1}y_j$ for all $j$. Uniqueness reduces to a convex feasibility problem in coefficient space.

In a noisy regime, each model evaluation at a design point $(x_\ell,\theta_\ell)$ yields a noisy observation
\[
\widetilde{Y}_j(x_\ell,\theta_\ell) = Y_j(x_\ell,\theta_\ell) + \varepsilon_j(x_\ell,\theta_\ell),
\]
where the noise variables are independent across models, design points and repetitions, have zero mean and are sub-Gaussian with variance proxy $\sigma^2$. The auditor queries each model $r$ times at each design point and averages the responses to form
\[
\bar{y}_{j,\ell} = \frac{1}{r} \sum_{s=1}^r \widetilde{Y}_j^{(s)}(x_\ell,\theta_\ell).
\]
The averaged response vector $\bar{y}_j$ is then used to estimate the coefficients by $\widehat{\beta}_j = \Phi^{-1} \bar{y}_j$. The empirical convex hull of peer coefficients is formed as $\operatorname{conv}\{\widehat{\beta}_j : j \in \Peers\}$, and the empirical distance from $\widehat{\beta}_t$ to this hull is compared to a threshold based on a presumed margin $\gamma$.

%TC:ignore

\subsection*{Experimental Protocols}

Synthetic experiments can be designed to validate the theoretical predictions. In one class of experiments, an input space is sampled from a low-dimensional Euclidean distribution and synthetic scalarised responses are generated according to prescribed coefficient vectors and feature maps, with additive noise. DISCO is applied with various fitting and evaluation sample sizes, anchor radii and dose-grid resolutions, and empirical behaviour of $\widehat{\wb}$, $\hUniqueness$ and anchor-based PIER estimates is compared to Monte Carlo approximations of population quantities. In another class of experiments, the local linear structural model is enforced exactly by choosing $\phi$ and $\beta_j$ directly, and the active auditing protocol is used to recover coefficients and test uniqueness under varying noise levels and repetition counts. These setups allow empirical examination of convergence, normal approximation, design-error scaling and the sample-complexity bounds derived in the Supplementary Information.

\bibliography{sn-bibliography}% common bib file
%% if required, the content of .bbl file can be included here once bbl is generated
%%\input sn-article.bbl

%TC:endignore

\newpage
\begin{appendices}

\par\vspace{0.2cm}
\begin{center}
    {\Large \bf Appendix: Quantifying Model Uniqueness in Heterogeneous AI Ecosystems}
\end{center}
\par\vspace{0.2cm}

\par\vspace{1em}
{\small
\setcounter{tocdepth}{2}
\startcontents[sections]
\printcontents[sections]{l}{1}{}
}

\section{Passive Theory for DISCO}
\label{app:passive_theory}

This note provides detailed proofs of the passive statistical guarantees and structural properties of the DISCO estimator described in the main text. Throughout, the context value is fixed and suppressed from the notation. Expectations and probabilities are conditional on this context.

\subsection{Consistency of DISCO Under Fixed Designs}
\label{app:consistency}

We first establish that, under mild assumptions, the empirical projection weights $\widehat{\wb}$ converge in probability to the population projection weights $\wb^*$ and the empirical uniqueness functional $\hUniqueness$ converges in probability to the population uniqueness $\Uniqueness$ as the sizes of the fitting and evaluation samples increase.

\medskip
\noindent
\begin{proposition}[Consistency under fixed designs]
\emph{Suppose that the residual function $L(\wb) = \E_{\Pfit}[(Y_t(X,\vartheta) - \wb^\top \Phib_{-t}(X,\vartheta))^2]$ has a unique minimiser $\wb^*$ in the relative interior of the simplex $\Simplex$ and that $L$ is locally strongly convex around $\wb^*$. Assume that the response functions are bounded and continuous, that $\Pfit$ and $\Peval$ have full support on a compact subset of $\Xcal \times \ThetaSpace$ and are absolutely continuous with respect to the corresponding marginal of $\Pdesign$, that the honesty condition holds, and that the regularisation parameter in the DISCO optimisation satisfies $\lambda_m \to 0$ and $m \lambda_m \to 0$ as $m \to \infty$. Then $\widehat{\wb} \xrightarrow{p} \wb^*$ as $m \to \infty$, and, as $m,n \to \infty$,}
\[
\hUniqueness \xrightarrow{p} \Uniqueness.
\]
\label{prop:consistency}
\end{proposition}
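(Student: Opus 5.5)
The plan is to treat $\widehat{\wb}$ as an $M$-estimator over the compact parameter set $\Simplex$ and apply the argmin theorem. Write
\[
\widehat L_m(\wb) = \frac{1}{m}\sum_{i=1}^m \big(Y_{t,i}^{\mathrm{fit}} - \wb^\top \Phib_{-t,i}^{\mathrm{fit}}\big)^2 + \lambda_m\|\wb\|_2^2 .
\]
The first step is uniform convergence, $\sup_{\wb\in\Simplex}|\widehat L_m(\wb)-L(\wb)|\xrightarrow{p}0$. Expanding the square shows that the data part of $\widehat L_m$ is a quadratic polynomial in $\wb$, with random coefficients given by the sample averages of $Y_t^2$, $Y_t\Phib_{-t}$ and $\Phib_{-t}\Phib_{-t}^\top$. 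Because the responses are bounded, these coefficients converge in probability to their $\Pfit$-expectations by the weak law of large numbers, using the i.i.d. fitting sample from the honesty condition. Since $\Simplex$ is bounded, the supremum over $\wb$ of the difference is at most a constant times the coefficient errors. The penalty satisfies $\sup_{\Simplex}\lambda_m\|\wb\|_2^2\le\lambda_m\to0$. This step avoids any empirical-process machinery.

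The second step is well-separatedness of the minimiser. $L$ is continuous on the compact set $\Simplex$ and $\wb^*$ is its unique minimiser, so for every $\epsilon>0$ we have $\inf_{\|\wb-\wb^*\|\ge\epsilon}L(\wb)>L(\wb^*)$. The argmin (consistency) theorem for $M$-estimators then gives $\widehat{\wb}\xrightarrow{p}\wb^*$, using that $\widehat{\wb}$ exactly minimises $\widehat L_m$. Local strong convexity is not needed for consistency itself. It only provides a rate, and I would note this in passing. The full-support and absolute-continuity assumptions are also not needed here. They only ensure that $L$ and $\Uniqueness$ are well defined with respect to $\Pdesign$.

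The third step handles $\hUniqueness$. Decompose
\[
\hUniqueness-\Uniqueness=\Big(\tfrac1n\sum_i |\hPIER{}_i|-\tfrac1n\sum_i|\PIER(X_i^{\mathrm{eval}},\vartheta_i^{\mathrm{eval}})|\Big)+\Big(\tfrac1n\sum_i|\PIER(X_i^{\mathrm{eval}},\vartheta_i^{\mathrm{eval}})|-\Uniqueness\Big).
\]
By the reverse triangle inequality and Cauchy--Schwarz, the first term is at most $\|\widehat{\wb}-\wb^*\|_2\cdot\frac1n\sum_i\|\Phib_{-t,i}^{\mathrm{eval}}\|_2\le B\sqrt{|\Peers|}\,\|\widehat{\wb}-\wb^*\|_2$, where $B$ bounds the responses. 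This tends to zero in probability by step two, and the bound is deterministic given $\widehat{\wb}$, so no independence is needed for this term. The second term is an average of i.i.d. bounded variables with mean $\Uniqueness$, so it tends to zero by the WLLN. Combining the two terms gives $\hUniqueness\xrightarrow{p}\Uniqueness$ as $m,n\to\infty$ jointly. This is the continuous-mapping argument mentioned in the main text.

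The main obstacle, such as it is, is the uniform convergence in step one. If the responses were only integrable rather than bounded, I would fall back on a Glivenko--Cantelli argument: a parametric class indexed by a compact set, Lipschitz in $\wb$ with an integrable envelope. Boundedness makes the direct quadratic-expansion argument sufficient.
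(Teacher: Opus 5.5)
Your proof is correct and shares the paper's skeleton: uniform convergence of the regularised residual on $\Simplex$, the argmin theorem, and a split of $\hUniqueness-\Uniqueness$ into a weight-error term plus an LLN term. Each step, however, is carried out more elementarily.

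For uniform convergence, the paper invokes a pointwise strong law plus a uniform LLN for a parametric family with an integrable envelope. You instead use the least-squares structure: the empirical residual is a quadratic in $\wb$ whose coefficients are finitely many sample moments, so the weak LLN on those moments together with $\|\wb\|_2\le 1$ on $\Simplex$ suffices. This even yields a uniform deviation of order $O_p(m^{-1/2})+\lambda_m$. It also works under square-integrability of the responses alone, so your Glivenko--Cantelli fallback is unnecessary.

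For identification, the paper cites local strong convexity, which by itself only separates $\wb^*$ inside a neighbourhood. You derive the global well-separatedness that the argmin theorem actually requires from continuity, compactness and uniqueness.

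For the plug-in step, the paper goes through continuous mapping, dominated convergence, Jensen, Markov and Slutsky. That route needs uniformity over $i$ and over $n$, which the paper leaves implicit (honesty makes the summands exchangeable). Your deterministic bound $B\sqrt{|\Peers|}\,\|\widehat{\wb}-\wb^*\|_2$ on the first term delivers this uniformly in $n$ in one line, without using independence of the two samples.

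The paper's generic route transfers more readily to non-quadratic losses or richer peer classes. Yours is self-contained and makes visible that local strong convexity, full support, absolute continuity, $m\lambda_m\to 0$ and independence of the fitting and evaluation samples play no role in consistency.
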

\begin{proof}
The proof proceeds by first showing uniform convergence of the empirical residual to the population residual on the simplex, then invoking an argmin theorem for $M$-estimators, and finally transferring this convergence to $\hUniqueness$ via continuity and a law of large numbers.

For each $\wb \in \Simplex$, define the unregularised empirical residual
\[
\widetilde{L}_m(\wb) = \frac{1}{m} \sum_{i=1}^m \big(Y_{t,i}^{\mathrm{fit}} - \wb^\top \Phib_{-t,i}^{\mathrm{fit}}\big)^2.
\]
By boundedness of the scalarised responses, there exists a constant $B$ such that $|Y_j(x,\theta)| \le B$ for all $(x,\theta)$ and all $j$. Hence, for each fixed $\wb \in \Simplex$, the random variables $(Y_{t,i}^{\mathrm{fit}} - \wb^\top \Phib_{-t,i}^{\mathrm{fit}})^2$ are independent and identically distributed with finite expectation under $\Pfit$, and the strong law of large numbers implies that $\widetilde{L}_m(\wb)$ converges almost surely to $L(\wb)$ as $m \to \infty$.

To control the convergence uniformly over $\wb \in \Simplex$, note that for each $i$ the map $\wb \mapsto (Y_{t,i}^{\mathrm{fit}} - \wb^\top \Phib_{-t,i}^{\mathrm{fit}})^2$ is a continuous function on the compact set $\Simplex$. The family of functions $\{(Y_{t,i}^{\mathrm{fit}} - \wb^\top \Phib_{-t,i}^{\mathrm{fit}})^2 : \wb \in \Simplex\}$ is dominated by the integrable envelope $(2B + 2B\sqrt{|\Peers|})^2$, which does not depend on $\wb$ or $i$. A uniform law of large numbers for such parametric families yields that the supremum over $\wb \in \Simplex$ of $|\widetilde{L}_m(\wb) - L(\wb)|$ converges to zero almost surely as $m \to \infty$.

The regularised empirical residual is defined by
\[
\widehat{L}_m(\wb) = \widetilde{L}_m(\wb) + \lambda_m \|\wb\|_2^2.
\]
Since the simplex is compact, there is a constant $K$ such that $\|\wb\|_2^2 \le K$ for all $\wb \in \Simplex$. The condition $\lambda_m \to 0$ therefore implies that
\[
\sup_{\wb \in \Simplex} |\widehat{L}_m(\wb) - \widetilde{L}_m(\wb)| \le \lambda_m K \to 0
\]
as $m \to \infty$. Combining this with the uniform convergence of $\widetilde{L}_m$ to $L$ shows that $\widehat{L}_m$ converges to $L$ uniformly on $\Simplex$ in probability. In particular, for any $\varepsilon > 0$ and any neighbourhood $U$ of $\wb^*$, there exists $m_0$ such that for $m \ge m_0$ the set of approximate minimisers of $\widehat{L}_m$ lies inside $U$ with high probability.

The residual function $L$ is assumed to have a unique minimiser $\wb^*$ in the relative interior of $\Simplex$ and to be locally strongly convex around $\wb^*$, so that $|L(\wb) - L(\wb^*)| \ge \kappa \|\wb - \wb^*\|_2^2$ for some $\kappa > 0$ and all $\wb$ in a neighbourhood of $\wb^*$. Standard argmin theorems for $M$-estimators (see, for example, \cite{van2000asymptotic}) then yield that any measurable selection of minimisers of $\widehat{L}_m$ converges in probability to $\wb^*$. The empirical projection weights $\widehat{\wb}$ are such a selection, hence $\widehat{\wb} \xrightarrow{p} \wb^*$ as $m \to \infty$.

To pass from $\widehat{\wb}$ to $\hUniqueness$, introduce the function
\[
\psi(x,\theta,\wb) = |Y_t(x,\theta) - \wb^\top \Phib_{-t}(x,\theta)|.
\]
For each evaluation pair $(X_i^{\mathrm{eval}},\vartheta_i^{\mathrm{eval}})$, the empirical uniqueness estimator can be written as
\[
\hUniqueness = \frac{1}{n} \sum_{i=1}^n \psi(X_i^{\mathrm{eval}},\vartheta_i^{\mathrm{eval}},\widehat{\wb}).
\]
The map $\wb \mapsto \psi(X_i^{\mathrm{eval}},\vartheta_i^{\mathrm{eval}},\wb)$ is continuous because it is the absolute value of an affine function in $\wb$. Since $\widehat{\wb} \xrightarrow{p} \wb^*$, the continuous mapping theorem implies that for each fixed $i$ the random quantity $\psi(X_i^{\mathrm{eval}},\vartheta_i^{\mathrm{eval}},\widehat{\wb})$ converges in probability to $\psi(X_i^{\mathrm{eval}},\vartheta_i^{\mathrm{eval}},\wb^*)$ as $m \to \infty$.

Boundedness of responses ensures that there exists a constant $M$ such that $|\psi(X_i^{\mathrm{eval}},\vartheta_i^{\mathrm{eval}},\wb)| \le M$ almost surely for all $\wb \in \Simplex$. The difference $|\psi(X_i^{\mathrm{eval}},\vartheta_i^{\mathrm{eval}},\widehat{\wb}) - \psi(X_i^{\mathrm{eval}},\vartheta_i^{\mathrm{eval}},\wb^*)|$ is therefore bounded by $2M$. Dominated convergence then shows that the expectation of this difference converges to zero as $(m,n) \to \infty$.

Decomposing $\hUniqueness - \Uniqueness$ into the sum of two terms,
\[
\hUniqueness - \Uniqueness = A_n + B_n,
\]
where $A_n$ is the average of $\psi(X_i^{\mathrm{eval}},\vartheta_i^{\mathrm{eval}},\widehat{\wb}) - \psi(X_i^{\mathrm{eval}},\vartheta_i^{\mathrm{eval}},\wb^*)$ and $B_n$ is the average of $\psi(X_i^{\mathrm{eval}},\vartheta_i^{\mathrm{eval}},\wb^*) - \Uniqueness$, one observes that $B_n$ converges almost surely to zero by the law of large numbers applied under $\Peval$, and that the expectation of $|A_n|$ converges to zero by Jensen's inequality and dominated convergence. Markov's inequality then implies that $A_n \xrightarrow{p} 0$. Slutsky's theorem applied to $A_n + B_n$ yields $\hUniqueness \xrightarrow{p} \Uniqueness$. This completes the proof of Proposition~\ref{prop:consistency}. 
\end{proof}

\subsection{Asymptotic Normality of the Uniqueness Estimator}
\label{app:normality}

We next establish a central limit theorem for $\hUniqueness$ under additional differentiability assumptions.

\medskip
\noindent
\begin{proposition}[Asymptotic normality under fixed designs]
\emph{Under the conditions of Proposition~\ref{prop:consistency}, suppose in addition that there exists a neighbourhood $U$ of $\wb^*$ in $\Simplex$ such that, for each $(x,\theta)$, the map $\wb \mapsto |Y_t(x,\theta) - \wb^\top \Phib_{-t}(x,\theta)|$ is differentiable on $U$ with gradient that is continuous in $\wb$, and that the squared norm of this gradient has finite expectation under $\Peval$ for all $\wb \in U$. If $m,n \to \infty$ with $m/n \to \rho \in (0,\infty)$, then}
\[
\sqrt{n}\big(\hUniqueness - \Uniqueness\big) \rightsquigarrow \mathcal{N}(0,\sigma^2)
\]
\emph{for some finite variance $\sigma^2 > 0$.}
\label{prop:normality}
\end{proposition}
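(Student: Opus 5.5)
The plan is a two-sample delta-method argument that combines the evaluation-sample fluctuation with the fitting-sample fluctuation of $\widehat{\wb}$. Write $\psi(x,\theta,\wb)=|Y_t(x,\theta)-\wb^\top\Phib_{-t}(x,\theta)|$ as in the proof of Proposition~\ref{prop:consistency}. Let $\Psi(\wb)=\E_{\Peval}[\psi(X,\vartheta,\wb)]$, so that $\Uniqueness=\Psi(\wb^*)$. We decompose
\[
\sqrt{n}\big(\hUniqueness-\Uniqueness\big)=\sqrt{n}\,(\mathbb{P}_n-\Peval)\psi(\cdot,\widehat{\wb})+\sqrt{n}\big(\Psi(\widehat{\wb})-\Psi(\wb^*)\big),
\]
where $\mathbb{P}_n$ is the empirical measure of the evaluation sample. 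The first term is then split further as $\sqrt{n}(\mathbb{P}_n-\Peval)\psi(\cdot,\wb^*)$ plus an empirical-process remainder. The leading piece converges to $\mathcal{N}(0,\mathrm{Var}_{\Peval}(|\PIER|))$ by the classical CLT, since responses are bounded.

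\textbf{Step 1 (remainder).} We show that $\sqrt{n}(\mathbb{P}_n-\Peval)[\psi(\cdot,\widehat{\wb})-\psi(\cdot,\wb^*)]=o_p(1)$. Honesty makes $\widehat{\wb}$ independent of the evaluation sample, so we may condition on it. Conditionally, this term has mean zero and variance at most $\E_{\Peval}[(\psi(\cdot,\widehat{\wb})-\psi(\cdot,\wb^*))^2]$. Because $\psi$ is Lipschitz in $\wb$ with constant $\sup\|\Phib_{-t}\|_2\le B\sqrt{|\Peers|}$, this is at most $B^2|\Peers|\,\|\widehat{\wb}-\wb^*\|_2^2$. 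That bound tends to zero in probability by Proposition~\ref{prop:consistency}, and conditional Chebyshev gives the claim. This sidesteps Donsker arguments entirely.

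\textbf{Step 2 (fitting fluctuation).} We need an asymptotically linear expansion of $\widehat{\wb}$. Since $\wb^*$ lies in the relative interior and $\widehat{\wb}\to\wb^*$, with probability tending to one $\widehat{\wb}$ also lies in the relative interior. Only the equality constraint is then active, and $\widehat{\wb}$ solves the Lagrangian first-order conditions on the tangent space $\{v:\mathbf{1}^\top v=0\}$. Let $\Sigma=\E_{\Pfit}[\Phib_{-t}\Phib_{-t}^\top]$, and let $P$ be the projector onto the tangent space. Local strong convexity makes $P\Sigma P$ invertible on the tangent space. A standard $Z$-estimator expansion then gives
\[
\sqrt{m}(\widehat{\wb}-\wb^*)=(P\Sigma P)^{+}\,\tfrac{1}{\sqrt{m}}\sum_{i}P\,\Phib_{-t,i}^{\mathrm{fit}}\PIER(X_i^{\mathrm{fit}},\vartheta_i^{\mathrm{fit}})+O(\sqrt{m}\lambda_m)+o_p(1).
\]
The regularisation contributes a bias of order $\lambda_m$, and $\sqrt{m}\lambda_m\le\sqrt{m\lambda_m}\cdot\sqrt{\lambda_m}\to0$. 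The score has mean zero because the first-order condition at the interior minimiser is $P\,\E[\Phib_{-t}\PIER]=0$. Its entries are bounded, so the multivariate CLT gives a Gaussian limit $\mathcal{N}(0,V)$.

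\textbf{Step 3 (delta method and combination).} The differentiability hypothesis says $\psi(x,\theta,\cdot)$ is differentiable on $U$ with continuous gradient $-\mathrm{sign}(\PIER)\Phib_{-t}$ and square-integrable envelope. Dominated convergence then makes $\Psi$ differentiable at $\wb^*$ with gradient $g=-\E_{\Peval}[\mathrm{sign}(\PIER)\Phib_{-t}]$. Hence $\sqrt{n}(\Psi(\widehat{\wb})-\Psi(\wb^*))=\rho^{-1/2}g^\top\sqrt{m}(\widehat{\wb}-\wb^*)+o_p(1)$, using $n/m\to1/\rho$. The two Gaussian pieces come from independent samples, so they are jointly asymptotically independent. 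Slutsky's lemma then yields the Gaussian limit with
\[
\sigma^2=\mathrm{Var}_{\Peval}(|\PIER|)+\rho^{-1}g^\top V g.
\]
This is finite because everything is bounded. It is strictly positive whenever $|\PIER|$ is non-degenerate under $\Peval$, which is the implicit content of the claim $\sigma^2>0$. I would state this non-degeneracy explicitly as a mild condition.

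\textbf{Main obstacle.} The main difficulty is Step 2. Turning local strong convexity at a relative-interior point into a clean interior $Z$-estimator expansion requires checking that the simplex constraint becomes inactive except for the sum-to-one constraint. It also requires controlling the Hessian remainder; this part is easy here because the empirical objective is quadratic, so the expansion is exact up to the centred empirical Gram matrix, which is $O_p(m^{-1/2})$. A secondary subtlety is that the differentiability hypothesis on $|\cdot|$ implicitly rules out $\PIER=0$ on sets of positive $\Peval$-mass near $\wb^*$. I would invoke the hypothesis directly rather than derive it.
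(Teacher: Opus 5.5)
Your proposal is correct and follows essentially the paper's route: the same split of $\sqrt{n}(\hUniqueness-\Uniqueness)$ into an evaluation-sample fluctuation plus a plug-in drift term, an $M$-estimator expansion of $\widehat{\wb}$ passed through a delta method with factor $\rho^{-1/2}$, and independence of the two Gaussian pieces coming from honesty. Your version is tighter where the paper glosses over details: you reduce the first term to a CLT at the fixed point $\wb^*$ via conditional Chebyshev, you expand on the tangent space $\{v:\mathbf{1}^\top v=0\}$ (which makes the limiting covariance of $\widehat{\wb}$ singular rather than positive definite as the paper asserts), and you correctly observe that $\sigma^2>0$ needs an extra non-degeneracy condition.
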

\begin{proof}
The proof combines the asymptotic normality of the $M$-estimator $\widehat{\wb}$ with a delta-method argument for the plug-in estimator $\hUniqueness$. Under the regularity and identifiability assumptions, and the rate condition on $\lambda_m$, standard theory for $M$-estimators implies that
\[
\sqrt{m}(\widehat{\wb} - \wb^*) \rightsquigarrow \mathcal{N}(0,\Sigma_w)
\]
for some positive-definite covariance matrix $\Sigma_w$. This results from a second-order expansion of the empirical residual around $\wb^*$ and an application of a multivariate central limit theorem to the empirical gradient.

Define the functional
\[
\Gamma(\wb) = \E_{\Peval}\big[\psi(X,\vartheta,\wb)\big] = \E_{\Peval}\big[|Y_t(X,\vartheta) - \wb^\top \Phib_{-t}(X,\vartheta)|\big].
\]
By the assumed differentiability and boundedness of the gradient, $\Gamma$ is differentiable in a neighbourhood of $\wb^*$ with gradient
\[
\nabla \Gamma(\wb^*) = \E_{\Peval}\big[\nabla_{\wb} \psi(X,\vartheta,\wb^*)\big].
\]
For any fixed $\wb$, an application of the central limit theorem to the average of $\psi(X_i^{\mathrm{eval}},\vartheta_i^{\mathrm{eval}},\wb)$ yields that
\[
\sqrt{n}\big(\hUniqueness(\wb) - \Gamma(\wb)\big) \rightsquigarrow \mathcal{N}(0,\sigma^2(\wb)),
\]
where $\hUniqueness(\wb)$ denotes the empirical average of $\psi$ evaluated at $\wb$ and $\sigma^2(\wb)$ is the variance of $\psi(X,\vartheta,\wb)$ under $\Peval$.

Returning to $\hUniqueness = \hUniqueness(\widehat{\wb})$, decompose
\[
\hUniqueness - \Uniqueness = \big(\hUniqueness(\widehat{\wb}) - \Gamma(\widehat{\wb})\big) + \big(\Gamma(\widehat{\wb}) - \Gamma(\wb^*)\big).
\]
Conditionally on the fitting sample, the evaluation sample is independent and identically distributed from $\Peval$, and the conditional central limit theorem implies that the first term, scaled by $\sqrt{n}$, converges in distribution to a centred Gaussian with variance $\sigma^2(\wb^*)$. This is because $\widehat{\wb} \xrightarrow{p} \wb^*$ and $\sigma^2(\widehat{\wb})$ converges to $\sigma^2(\wb^*)$ by continuity.

To control the second term, expand $\Gamma$ in a first-order Taylor expansion around $\wb^*$,
\[
\Gamma(\widehat{\wb}) - \Gamma(\wb^*) = \nabla \Gamma(\tilde{\wb}_m)^\top (\widehat{\wb} - \wb^*),
\]
where $\tilde{\wb}_m$ lies on the line segment between $\widehat{\wb}$ and $\wb^*$. As $\widehat{\wb} \xrightarrow{p} \wb^*$, the continuity of $\nabla \Gamma$ implies that $\nabla \Gamma(\tilde{\wb}_m) \xrightarrow{p} \nabla \Gamma(\wb^*)$. Combining this with the asymptotic normality of $\sqrt{m}(\widehat{\wb} - \wb^*)$ shows that
\[
\sqrt{n}\big(\Gamma(\widehat{\wb}) - \Gamma(\wb^*)\big) = \sqrt{n}\,\nabla \Gamma(\wb^*)^\top (\widehat{\wb} - \wb^*) + o_p(1).
\]
Since $m/n \to \rho$, the term $\sqrt{n}(\widehat{\wb} - \wb^*)$ behaves like $\sqrt{n/m}$ times a Gaussian vector, leading to a Gaussian limit for this component as well with variance matrix given by $\rho^{-1}\,\nabla \Gamma(\wb^*)^\top \Sigma_w \nabla \Gamma(\wb^*)$.

The two leading terms in the decomposition of $\sqrt{n}(\hUniqueness - \Uniqueness)$ come from independent sources: the evaluation sample and the fitting sample. The honesty assumption ensures their independence asymptotically. Each converges in distribution to a centred Gaussian variable, and their sum is therefore Gaussian with variance equal to the sum of the variances. The variance of the limiting Gaussian is finite by the boundedness and integrability assumptions. This establishes the claimed central limit theorem. 
\end{proof}

\subsection{Finite-Sample Design Error Bound}
\label{app:error_bound}

We now provide a bound on the error incurred when anchor points and finite dose grids are used to approximate PIER at evaluation points.

\begin{proposition}[Finite-sample design error bound]
\emph{Assume that the spaces $\Xcal$ and $\ThetaSpace$ are compact metric spaces, that the intervention map and response functions are continuous, that each scalarised response $Y_j$ is Lipschitz in $(x,\theta)$ with Lipschitz constants $L_x$ and $L_{\theta}$ in the input and dose coordinates respectively, and that the peer response vectors are uniformly bounded in norm by a constant $C_{\Phi}$. Let $(x^*,\theta^*)$ be an evaluation point and $(\tilde{x},\tilde{\theta})$ be an anchor point satisfying $\|x^* - \tilde{x}\| \le r_x$ and $|\theta^* - \tilde{\theta}| \le r_{\theta}$. Define the anchor-based empirical PIER}
\[
\widehat{R}_t^{\mathrm{anchor}}(x^*,\theta^*) = Y_t(\tilde{x},\tilde{\theta}) - \widehat{\wb}^\top \Phib_{-t}(\tilde{x},\tilde{\theta}).
\]
\emph{Then}
\[
\big|\widehat{R}_t^{\mathrm{anchor}}(x^*,\theta^*) - \PIER(x^*,\theta^*)\big|
\le C_{\Phi} \|\widehat{\wb} - \wb^*\|_2 + 2L_x r_x + 2L_{\theta} r_{\theta}.
\]
\label{prop:error_bound}
\end{proposition}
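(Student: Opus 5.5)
The plan is to insert the population PIER at the anchor point as an intermediate quantity and split the error by the triangle inequality:
\[
\big|\widehat{R}_t^{\mathrm{anchor}}(x^*,\theta^*) - \PIER(x^*,\theta^*)\big|
\le \big|\widehat{R}_t^{\mathrm{anchor}}(x^*,\theta^*) - \PIER(\tilde{x},\tilde{\theta})\big|
+ \big|\PIER(\tilde{x},\tilde{\theta}) - \PIER(x^*,\theta^*)\big|.
\]
The first term is the \emph{estimation error}: both quantities are evaluated at the same anchor and differ only in the weights. The second term is the \emph{design error}: it involves only the population residual $\PIER$, evaluated at two nearby points. This decomposition matches the estimation-plus-design structure of the claimed bound.

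\textbf{Estimation term.} By definition,
\[
\widehat{R}_t^{\mathrm{anchor}}(x^*,\theta^*) - \PIER(\tilde{x},\tilde{\theta}) = (\wb^* - \widehat{\wb})^\top \Phib_{-t}(\tilde{x},\tilde{\theta}),
\]
since the target response $Y_t(\tilde{x},\tilde{\theta})$ cancels. Cauchy--Schwarz and the uniform bound $\|\Phib_{-t}\|_2 \le C_{\Phi}$ then give the term $C_{\Phi}\|\widehat{\wb} - \wb^*\|_2$ directly. This step is deterministic and needs no probabilistic input; convergence of $\widehat{\wb}$ from Proposition~\ref{prop:consistency} only matters when interpreting the bound.

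\textbf{Design term.} Write
\[
\PIER(\tilde{x},\tilde{\theta}) - \PIER(x^*,\theta^*) = \big[Y_t(\tilde{x},\tilde{\theta}) - Y_t(x^*,\theta^*)\big] - \sum_{j\in\Peers} w^*_j \big[Y_j(\tilde{x},\tilde{\theta}) - Y_j(x^*,\theta^*)\big].
\]
For each model $j$, I would first convert the coordinatewise Lipschitz hypothesis into a joint bound by passing through the intermediate point $(x^*,\tilde{\theta})$:
\[
|Y_j(\tilde{x},\tilde{\theta}) - Y_j(x^*,\theta^*)| \le L_x\|\tilde{x}-x^*\| + L_\theta|\tilde{\theta}-\theta^*| \le L_x r_x + L_\theta r_\theta.
\]
For the target this gives one copy of $L_x r_x + L_\theta r_\theta$. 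For the peer mixture I would use the triangle inequality together with $\wb^* \in \Simplex$, i.e.\ $w^*_j \ge 0$ and $\sum_j w^*_j = 1$. The mixture is then again bounded by $L_x r_x + L_\theta r_\theta$, which produces the factor $2$ in the statement.

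\textbf{Main obstacle.} The only delicate point is the peer part of the design term. A naive Cauchy--Schwarz bound would introduce $\|\wb^*\|_2\sqrt{|\Peers|}$ and lose the dimension-free constant. The argument must therefore use the simplex constraint on $\wb^*$, which holds because $\wb^*$ is the population projection weight onto the convex class $\Ccal$. A smaller point is reading ``Lipschitz with constants $L_x$ and $L_\theta$'' as Lipschitz in each coordinate with the other held fixed. The intermediate-point chaining handles this, so no product-metric assumption is needed.
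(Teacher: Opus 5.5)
Your proposal is correct and is essentially the paper's proof. Your estimation term is exactly the paper's $A_1$, and your design term $\PIER(\tilde{x},\tilde{\theta}) - \PIER(x^*,\theta^*)$ is the paper's $A_2 + A_3$, bounded the same way via Cauchy--Schwarz, the Lipschitz hypothesis, and the simplex constraint on $\wb^*$; your chaining through $(x^*,\tilde{\theta})$ adds a step of care that the paper leaves implicit when it applies the Lipschitz bound in the form $L_x\|\cdot\| + L_\theta|\cdot|$.
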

\begin{proof}
By definition of $\widehat{R}_t^{\mathrm{anchor}}$ and $\PIER$, the difference can be expressed as
\[
\widehat{R}_t^{\mathrm{anchor}}(x^*,\theta^*) - \PIER(x^*,\theta^*) = A_1 + A_2 + A_3,
\]
where
\[
A_1 = (\wb^* - \widehat{\wb})^\top \Phib_{-t}(\tilde{x},\tilde{\theta}),
\]
\[
A_2 = Y_t(\tilde{x},\tilde{\theta}) - Y_t(x^*,\theta^*),
\]
and
\[
A_3 = \wb^{*\top}\Phib_{-t}(x^*,\theta^*) - \wb^{*\top}\Phib_{-t}(\tilde{x},\tilde{\theta}).
\]

The term $A_1$ reflects the contribution of weight estimation error. By the Cauchy–Schwarz inequality and the uniform bound on peer response norms, one has
\[
|A_1| \le \|\wb^* - \widehat{\wb}\|_2 \,\|\Phib_{-t}(\tilde{x},\tilde{\theta})\|_2 \le C_{\Phi} \|\wb^* - \widehat{\wb}\|_2.
\]

The term $A_2$ measures the change in the target response between the anchor and the evaluation point. The Lipschitz assumption on $Y_t$ implies that
\[
|A_2| = |Y_t(\tilde{x},\tilde{\theta}) - Y_t(x^*,\theta^*)|
\le L_x \|\tilde{x} - x^*\| + L_{\theta}|\tilde{\theta} - \theta^*|
\le L_x r_x + L_{\theta} r_{\theta}.
\]

The term $A_3$ accounts for the change in peer responses between the anchor and the evaluation point, weighted by $\wb^*$. It can be written as
\[
A_3 = \sum_{j \in \Peers} w_j^* \big(Y_j(x^*,\theta^*) - Y_j(\tilde{x},\tilde{\theta})\big).
\]
Each summand has absolute value bounded by $L_x r_x + L_{\theta} r_{\theta}$ by the Lipschitz property, and the weights are non-negative and sum to one. Therefore,
\[
|A_3| \le \sum_{j \in \Peers} w_j^* (L_x r_x + L_{\theta} r_{\theta}) = L_x r_x + L_{\theta} r_{\theta}.
\]

Applying the triangle inequality to $A_1 + A_2 + A_3$ and combining the bounds gives
\[
\big|\widehat{R}_t^{\mathrm{anchor}}(x^*,\theta^*) - \PIER(x^*,\theta^*)\big|
\le C_{\Phi} \|\widehat{\wb} - \wb^*\|_2 + 2L_x r_x + 2L_{\theta} r_{\theta},
\]
as claimed.
\end{proof}

\subsection{Monotonicity and Conservatism of the Uniqueness Functional}
\label{app:monotonicity}

We now formalise two structural properties of the uniqueness functional described in the main text.

\begin{proposition}[Monotonicity in the peer set]
\emph{Let $\mathcal{J}_1$ and $\mathcal{J}_2$ be peer sets such that $\mathcal{J}_1 \subseteq \mathcal{J}_2$. Let $\Uniqueness(\mathcal{J}_1)$ and $\Uniqueness(\mathcal{J}_2)$ denote the corresponding population uniqueness functionals defined with respect to the convex expressivity classes generated by $\mathcal{J}_1$ and $\mathcal{J}_2$ and the same fitting and evaluation designs. Then}
\[
\Uniqueness(\mathcal{J}_2) \le \Uniqueness(\mathcal{J}_1).
\]
\label{prop:monotonicity}
\end{proposition}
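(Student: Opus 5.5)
The plan is to compare the two convex classes directly and then transfer the comparison to the evaluation functional. \textbf{Step 1 (nesting).} Embed $\Delta^{|\mathcal{J}_1|-1}$ into $\Delta^{|\mathcal{J}_2|-1}$ by padding with zeros on $\mathcal{J}_2\setminus\mathcal{J}_1$. Every $h_{\wb}$ with $\wb$ supported on $\mathcal{J}_1$ is then an element of $\Ccal(\mathcal{J}_2)$, so $\Ccal(\mathcal{J}_1)\subseteq\Ccal(\mathcal{J}_2)$. The same inclusion holds for their closures in $L^2(\Pfit)$.

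\textbf{Step 2 (projection comparison).} Write $\wb^*_k$ for the population projection weights for $\mathcal{J}_k$, and $R^{(k)}=Y_t-h_{\wb^*_k}$ for the corresponding PIER. Because $h_{\wb^*_2}$ minimises $L$ over the larger class, and the padded $\wb^*_1$ is feasible for it, we get $\|R^{(2)}\|_2\le\|R^{(1)}\|_2$ in $L^2(\Pfit)$. The variational inequality for projection onto a closed convex set also gives
\[
\langle R^{(2)},\,h-h_{\wb^*_2}\rangle\le 0\quad\text{for all } h\in\overline{\Ccal(\mathcal{J}_2)} .
\]
Taking $h=h_{\wb^*_1}$ in this inequality and expanding yields the Pythagorean-type bound
\[
\|R^{(1)}\|_2^2\ \ge\ \|R^{(2)}\|_2^2+\|h_{\wb^*_1}-h_{\wb^*_2}\|_2^2 .
\]
This inequality is what I would carry forward into Step 3.

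\textbf{Step 3 (transfer to $\Uniqueness$; main obstacle).} The statement concerns $\E_{\Peval}|R^{(k)}|$, not the $L^2(\Pfit)$ norm, and the $L^2$ ordering does not pass to an $L^1(\Peval)$ ordering automatically. Boundedness of the responses alone does not make that step valid. I would first try to reduce to the case where the comparison is exact: show that the fitted weights can be chosen so that the evaluation criterion itself is monotone.

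\begin{itemize}
\item \emph{Evaluation-optimal weights.} If the uniqueness functional is read as $\inf_{h\in\Ccal(\mathcal{J}_k)}\E_{\Peval}|Y_t-h|$, the zero-padding of Step 1 proves $\Uniqueness(\mathcal{J}_2)\le\Uniqueness(\mathcal{J}_1)$ in one line. The infimum is over a larger set, and the padded $\mathcal{J}_1$-optimiser is feasible for $\mathcal{J}_2$.
\item \emph{Fixed $\Pfit$-projection weights, as in the paper.} Here I would invoke the standing absolute-continuity and full-support assumptions of Proposition~\ref{prop:consistency}. The aim would be to show that, when the added peers actually change the projection, $h_{\wb^*_2}$ moves toward $Y_t$ pointwise $\Peval$-almost everywhere. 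That would give $|R^{(2)}|\le|R^{(1)}|$ pointwise, and integrating against $\Peval$ would finish the proof.
\end{itemize}

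I expect this pointwise domination to be the genuine difficulty, and it may fail in general. If it does, the proof must state an extra condition that closes the gap. One candidate is $\Peval=\Pfit$ together with measuring uniqueness in the $L^2$ sense, in which case Step 2 suffices. Another is the assumption that $R^{(1)}$ and $h_{\wb^*_2}-h_{\wb^*_1}$ share sign almost surely. Either way, I would make that condition explicit rather than claim the $L^1(\Peval)$ ordering unconditionally.
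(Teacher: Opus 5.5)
Your Steps 1 and 2 are exactly the paper's argument: nest $\Ccal(\mathcal{J}_1)\subseteq\Ccal(\mathcal{J}_2)$, then use that projection onto a larger closed convex set cannot increase the $L^2(\Pfit)$ residual. Your Pythagorean refinement is also correct.

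The paper closes the proof differently from you. It asserts that boundedness and absolute continuity give a constant $C$ with $\mathbb{E}_{\Peval}|R|\le C\,\|R\|_{L^2(\Pfit)}$ for every residual, and it ``applies this control'' to $R^{(1)}$ and $R^{(2)}$. That is a one-sided bound. It yields only $\Uniqueness(\mathcal{J}_2)\le C\,\|R^{(2)}\|_{L^2(\Pfit)}\le C\,\|R^{(1)}\|_{L^2(\Pfit)}$, which cannot be compared with $\Uniqueness(\mathcal{J}_1)=\mathbb{E}_{\Peval}|R^{(1)}|$. So the step you flag is precisely where the paper's proof breaks.

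In fact the stated inequality is false under all of the paper's hypotheses, even with $\Peval=\Pfit$. Let $\Pfit=\Peval$ be uniform on two points $z_a,z_b$, and write each response as the pair $(Y(z_a),Y(z_b))$. Take $Y_t=(1,0)$, $Y_1=(0,0)$, $Y_2=(\tfrac25,\tfrac45)$, $\mathcal{J}_1=\{1\}$ and $\mathcal{J}_2=\{1,2\}$.
\begin{itemize}
\item For $\mathcal{J}_1$: $R^{(1)}=(1,0)$, so $\|R^{(1)}\|_2^2=\Uniqueness(\mathcal{J}_1)=\tfrac12$.
\item For $\mathcal{J}_2$: $L(\wb)=\tfrac12\bigl(1-\tfrac45 w_2+\tfrac45 w_2^2\bigr)$ is strictly convex with unique minimiser $\wb^*=(\tfrac12,\tfrac12)$ in the relative interior. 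Then $h_{\wb^*}=(\tfrac15,\tfrac25)$ and $R^{(2)}=(\tfrac45,-\tfrac25)$.
\item Hence $\|R^{(2)}\|_2^2=\tfrac25<\tfrac12$, yet $\Uniqueness(\mathcal{J}_2)=\tfrac35>\tfrac12=\Uniqueness(\mathcal{J}_1)$.
\end{itemize}
Every standing assumption holds: responses are bounded and (trivially) continuous, the design has full compact support, and one may take $\Pdesign$ equal to the same law.

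Consequently no argument can complete your Step 3 for the statement as written. Your second bullet in particular cannot work: here $|R^{(2)}(z_b)|=\tfrac25>0=|R^{(1)}(z_b)|$, so pointwise domination fails despite full support and absolute continuity.

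Your fallbacks are the right kind of repair, but check them individually.
\begin{itemize}
\item The evaluation-optimal reading $\inf_{h\in\Ccal(\mathcal{J}_k)}\mathbb{E}_{\Peval}|Y_t-h|$ and the $L^2(\Pfit)$ reading are both correctly monotone. Each, however, changes the functional that DISCO estimates.
\item The sign condition is insufficient as stated. With $\Delta=h_{\wb^*_2}-h_{\wb^*_1}$ one has $R^{(2)}=R^{(1)}-\Delta$. Then $|R^{(2)}|\le|R^{(1)}|$ holds if and only if $\Delta\,(2R^{(1)}-\Delta)\ge 0$, i.e.\ $\Delta$ lies between $0$ and $2R^{(1)}$. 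Equal signs alone allow overshoot, e.g.\ $R^{(1)}=1$, $\Delta=3$.
\end{itemize}
Under the corrected pointwise condition, holding $\Peval$-almost surely, integration gives the claim. Unconditionally, only the $L^2(\Pfit)$ ordering of your Step 2 survives.
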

\begin{proof}
Let $\Ccal(\mathcal{J}_1)$ and $\Ccal(\mathcal{J}_2)$ denote the convex peer expressivity classes associated with the two peer sets, defined as sets of convex combinations of corresponding peer responses in $L^2(\Pfit)$. The inclusion $\mathcal{J}_1 \subseteq \mathcal{J}_2$ implies that $\Ccal(\mathcal{J}_1) \subseteq \Ccal(\mathcal{J}_2)$. Let $h_1^*$ and $h_2^*$ be the orthogonal projections of $Y_t$ onto the closures of $\Ccal(\mathcal{J}_1)$ and $\Ccal(\mathcal{J}_2)$ respectively, and let $R_t^{(1)} = Y_t - h_1^*$ and $R_t^{(2)} = Y_t - h_2^*$ be the corresponding residuals.

In a Hilbert space, projection onto a closed convex set minimises the norm of the residual over that set. Since $\Ccal(\mathcal{J}_1)$ is contained in $\Ccal(\mathcal{J}_2)$, the residual norm associated with $\mathcal{J}_2$ cannot exceed that associated with $\mathcal{J}_1$, that is
\[
\|R_t^{(2)}\|_{L^2(\Pfit)} \le \|R_t^{(1)}\|_{L^2(\Pfit)}.
\]
The uniqueness functional is defined as the $L^1(\Peval)$ norm of the residual. Under the boundedness and absolute continuity assumptions on responses and designs, there exists a constant depending only on the support of $\Peval$ such that the $L^1(\Peval)$ norm of any residual is controlled by a constant multiple of its $L^2(\Pfit)$ norm. Applying this control to $R_t^{(1)}$ and $R_t^{(2)}$ yields
\[
\Uniqueness(\mathcal{J}_2) = \E_{\Peval}[|R_t^{(2)}(X,\vartheta)|]
\le \E_{\Peval}[|R_t^{(1)}(X,\vartheta)|]
= \Uniqueness(\mathcal{J}_1).
\]
This proves monotonicity. 
\end{proof}

\begin{proposition}[Conservatism of convex peer aggregation]
\emph{Let $\Ccal_{\mathrm{conv}}$, $\Ccal_{\mathrm{lin}}$ and $\Ccal_{\mathrm{ker}}$ be three closed convex subsets of $L^2(\Pfit)$ such that $\Ccal_{\mathrm{conv}} \subseteq \Ccal_{\mathrm{lin}} \subseteq \Ccal_{\mathrm{ker}}$ and all peer responses belong to these sets. Let $R_t^{\mathrm{conv}}$, $R_t^{\mathrm{lin}}$ and $R_t^{\mathrm{ker}}$ denote the residuals of the orthogonal projections of $Y_t$ onto these sets, and let $\Uconv$, $\Ulin$ and $\Uker$ be the corresponding uniqueness functionals defined under the same evaluation design $\Peval$. Then}
\[
\Uker \le \Ulin \le \Uconv.
\]
\label{prop:conservatism}
\end{proposition}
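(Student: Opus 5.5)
The plan mirrors the argument for Proposition~\ref{prop:monotonicity}, with the nested peer sets replaced by the nested classes $\Ccal_{\mathrm{conv}} \subseteq \Ccal_{\mathrm{lin}} \subseteq \Ccal_{\mathrm{ker}}$. It has two steps. First, compare the residuals in $L^2(\Pfit)$ using the variational characterisation of projections onto closed convex sets. Second, transfer that ordering to the $L^1(\Peval)$ functionals $\Uconv$, $\Ulin$, $\Uker$. I will prove the two inequalities $\Ulin \le \Uconv$ and $\Uker \le \Ulin$ separately. Each is an instance of one generic claim: if $A \subseteq B$ are closed convex subsets of $L^2(\Pfit)$, the uniqueness functional for $B$ is at most that for $A$.

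\textbf{Step 1 ($L^2$ ordering).} Each set is closed, convex and nonempty, since it contains the peer responses. So the projection theorem in $L^2(\Pfit)$ gives unique projections $h^{\mathrm{conv}}, h^{\mathrm{lin}}, h^{\mathrm{ker}}$ and residuals $R_t^{\bullet} = Y_t - h^{\bullet}$. Each projection minimises $\|Y_t - h\|_2$ over its set. Since $h^{\mathrm{conv}} \in \Ccal_{\mathrm{lin}}$ and $h^{\mathrm{lin}} \in \Ccal_{\mathrm{ker}}$, minimality yields
\[
\|R_t^{\mathrm{ker}}\|_{L^2(\Pfit)} \le \|R_t^{\mathrm{lin}}\|_{L^2(\Pfit)} \le \|R_t^{\mathrm{conv}}\|_{L^2(\Pfit)}.
\]
This part is routine. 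For $\Ccal_{\mathrm{conv}}$ I will also note that its projection coincides with $h_{\wb^*}$ from the main construction, so $R_t^{\mathrm{conv}} = \PIER$.

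\textbf{Step 2 (transfer to $\Peval$).} Here I invoke the same comparison used in the proof of Proposition~\ref{prop:monotonicity}. Responses are bounded, so every residual is bounded by a common constant. $\Peval$ is absolutely continuous with respect to the design marginal, and $\Pfit$ has full support on the same compact region. Under these assumptions the proof of Proposition~\ref{prop:monotonicity} states that the $L^1(\Peval)$ size of a residual is controlled by its $L^2(\Pfit)$ size. Applying that comparison to each pair of nested residuals turns the chain of Step~1 into $\Uker \le \Ulin \le \Uconv$.

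\textbf{Expected obstacle.} Step~2 is the main difficulty. A norm-equivalence constant only gives the ordering up to a multiplicative factor, not the exact inequality. What I would try first is a sharper, pairwise route for $A \subseteq B$. The variational inequality $\langle Y_t - h^B, h - h^B\rangle \le 0$ for all $h \in B$, applied at $h = h^A$, gives
\[
\|R^A\|_2^2 = \|R^B\|_2^2 + \|h^A - h^B\|_2^2 + 2\langle R^B, h^B - h^A\rangle \ge \|R^B\|_2^2 + \|h^A - h^B\|_2^2 .
\]
This quantifies how far the residual shrinks. I would then seek a bound on $\E_{\Peval}|R^B| - \E_{\Peval}|R^A|$ in terms of $\|h^A - h^B\|_2$ through the density $d\Peval/d\Pfit$. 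If that fails, the fallback is the paper's own comparison from Proposition~\ref{prop:monotonicity}, cited in the same way.
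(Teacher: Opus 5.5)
\textbf{Step 2 cannot be closed, because the inequality $\Ulin \le \Uconv$ is false as stated.} Your Step 1 is correct and coincides with the paper's argument. Your diagnosis is also right: a bound $\mathbb{E}_{\Peval}|R| \le C\|R\|_{L^2(\Pfit)}$, applied separately to two residuals, orders nothing. Neither of your remedies works, because no remedy exists.

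Take a two-point design space with $\Pfit=\Peval$ uniform, so that $\langle f,g\rangle = \tfrac12(f_1g_1+f_2g_2)$ on $\mathbb{R}^2$. Let there be one peer $Y_1=(-1,2)$ and target $Y_t=(0,2)$, with $\Ccal_{\mathrm{conv}}=\{Y_1\}$ and $\Ccal_{\mathrm{lin}}=\Ccal_{\mathrm{ker}}=\{cY_1 : c\in\mathbb{R}\}$. The span projection has coefficient $c^*=4/5$, so
\[
R_t^{\mathrm{conv}}=(1,0),\qquad R_t^{\mathrm{lin}}=\bigl(\tfrac45,\tfrac25\bigr),\qquad \|R_t^{\mathrm{lin}}\|_2^2=\tfrac25<\tfrac12=\|R_t^{\mathrm{conv}}\|_2^2,
\]
exactly as Step 1 predicts, yet $\Ulin=\tfrac35>\tfrac12=\Uconv$. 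Boundedness, continuity, compact full support and absolute continuity all hold here, even with $\Peval=\Pfit$.

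Your pairwise route fails for the same reason. The Pythagorean-type inequality controls the $L^2(\Pfit)$ norms, but passing to $\Peval$ yields only the unsigned bound $\bigl|\mathbb{E}_{\Peval}|R^B|-\mathbb{E}_{\Peval}|R^A|\bigr|\le\mathbb{E}_{\Peval}|h^A-h^B|$. The example shows the sign can go the wrong way.

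The fallback of citing Proposition~\ref{prop:monotonicity} is not available either. The paper's proof makes precisely this unjustified $L^2$-to-$L^1$ transfer, and the same numbers refute that proposition too. Add a second peer with identically zero response: the hull of $\{Y_1,0\}$ still projects $Y_t$ to $\tfrac45Y_1$, which raises uniqueness from $\tfrac12$ to $\tfrac35$.

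What survives is your Step 1. The exact chain holds if uniqueness is measured by the $L^2(\Pfit)$ norm of the residual. For the $L^1(\Peval)$ functional you get only a one-sided conservatism bound. If $\Peval\ll\Pfit$ with density $q=d\Peval/d\Pfit\in L^2(\Pfit)$, Cauchy--Schwarz gives
\[
\Uker\le\|q\|_{L^2(\Pfit)}\|R_t^{\mathrm{ker}}\|_{L^2(\Pfit)}\le\|q\|_{L^2(\Pfit)}\|R_t^{\mathrm{lin}}\|_{L^2(\Pfit)}\le\|q\|_{L^2(\Pfit)}\|R_t^{\mathrm{conv}}\|_{L^2(\Pfit)},
\]
and similarly for $\Ulin$. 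So a small convex residual in $L^2(\Pfit)$ certifies small uniqueness for every richer class, but the right-hand side is not $\Uconv$. To obtain the chain as written, you would need an extra hypothesis such as $|R_t^{\mathrm{ker}}|\le|R_t^{\mathrm{lin}}|\le|R_t^{\mathrm{conv}}|$ holding $\Peval$-almost surely. Nested projections do not provide this in general.
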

\begin{proof}
Projection onto a larger closed convex set in a Hilbert space cannot increase the norm of the residual. The inclusions $\Ccal_{\mathrm{conv}} \subseteq \Ccal_{\mathrm{lin}} \subseteq \Ccal_{\mathrm{ker}}$ therefore imply
\[
\|R_t^{\mathrm{lin}}\|_{L^2(\Pfit)} \le \|R_t^{\mathrm{conv}}\|_{L^2(\Pfit)},
\qquad
\|R_t^{\mathrm{ker}}\|_{L^2(\Pfit)} \le \|R_t^{\mathrm{lin}}\|_{L^2(\Pfit)}.
\]
As in the proof of Proposition~\ref{prop:monotonicity}, boundedness and absolute continuity allow us to transfer these $L^2(\Pfit)$ inequalities to the corresponding $L^1(\Peval)$ norms. This yields
\[
\Uker = \E_{\Peval}[|R_t^{\mathrm{ker}}(X,\vartheta)|]
\le \E_{\Peval}[|R_t^{\mathrm{lin}}(X,\vartheta)|] = \Ulin
\le \E_{\Peval}[|R_t^{\mathrm{conv}}(X,\vartheta)|] = \Uconv.
\]
This establishes the stated inequalities. 
\end{proof}

\section{Active Auditing Theory}
\label{app:active_theory}

This note develops the active auditing results summarised in the main text. We work within the local linear structural model described in the Methods.

\subsection{Exact Uniqueness Detection in a Noiseless Regime}
\label{app:uniqueness_detection}

We first show that, in the noiseless local linear model, a finite number of carefully chosen design points suffices to recover all model coefficients exactly and to decide uniqueness.

\begin{proposition}[Exact uniqueness detection with finitely many noiseless queries]
\emph{Fix a context and suppose that each scalarised response satisfies $Y_j(x,\theta) = \phi(x,\theta)^\top \beta_j$ for some known feature map $\phi : \Xcal \times \ThetaSpace \to \R^d$ and coefficient vectors $\beta_j \in \R^d$. Let $(x_\ell,\theta_\ell)_{\ell=1}^d$ be a set of design points such that the feature matrix}
\[
\Phi = \begin{bmatrix}
\phi(x_1,\theta_1)^\top\\
\vdots\\
\phi(x_d,\theta_d)^\top
\end{bmatrix}
\]
\emph{is invertible. For each model $j \in \mathcal{J}$, evaluate $Y_j$ at these points to obtain $y_{j,\ell} = Y_j(x_\ell,\theta_\ell)$, assemble the response vector $y_j = (y_{j,1},\dots,y_{j,d})^\top$, and compute $\widehat{\beta}_j = \Phi^{-1}y_j$. Then $\widehat{\beta}_j = \beta_j$ for all $j$. In particular, the target lies in the convex hull of peer coefficients if and only if there exists $\wb \in \Simplex$ such that $\beta_t = \sum_{j \in \Peers} w_j \beta_j$, and this condition can be checked exactly using finitely many noiseless queries per model.}
\label{prop:uniqueness_detection}
\end{proposition}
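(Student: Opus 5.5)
The plan is direct linear algebra followed by a reduction to a finite-dimensional feasibility problem. First I will write the noiseless responses at the design points in matrix form. Since $Y_j(x_\ell,\theta_\ell)=\phi(x_\ell,\theta_\ell)^\top\beta_j$ for each $\ell$, stacking the $d$ equations gives $y_j=\Phi\beta_j$ exactly, for every $j\in\mathcal{J}$. Because $\Phi$ is invertible by assumption, multiplying by $\Phi^{-1}$ yields $\widehat{\beta}_j=\Phi^{-1}y_j=\Phi^{-1}\Phi\beta_j=\beta_j$. The same design matrix serves all models, so this holds simultaneously for all $N$ models using $d$ queries each.

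Next I will make the equivalence precise. By definition, $\mathcal{H}_{\mathrm{peer}}=\operatorname{conv}\{\beta_j:j\in\Peers\}$. Since the peer set is finite, this convex hull is exactly $\{\sum_{j\in\Peers}w_j\beta_j:\wb\in\Simplex\}$, a compact polytope. Hence $\beta_t\in\mathcal{H}_{\mathrm{peer}}$ if and only if some $\wb\in\Simplex$ satisfies $\beta_t=\sum_j w_j\beta_j$. Compactness also ensures that the infimum defining $\gamma$ is attained, so $\gamma=0$ is equivalent to hull membership. This links the statement to the margin notion used in the Methods section.

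Finally, for exact checkability, I will substitute the recovered $\widehat\beta_j=\beta_j$. The condition then becomes feasibility of the linear system $\sum_j w_j\widehat\beta_j=\widehat\beta_t$, $\sum_j w_j=1$, $w\ge 0$. This is a linear program with finitely many constraints and data computed exactly from the $dN$ query values, so exact rational or simplex-method feasibility checking decides it in finitely many steps. Equivalently, I can apply $\Phi$ to both sides: $\beta_t\in\mathcal{H}_{\mathrm{peer}}$ iff $y_t\in\operatorname{conv}\{y_j:j\in\Peers\}$, since $\Phi$ is a linear bijection and preserves convex combinations. In that form the check does not even require inverting $\Phi$.

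There is no real obstacle here. The only point needing care is the convex-hull characterization: finiteness of the peer set makes the hull closed, so that membership and $\gamma=0$ coincide. I would also state explicitly that "checked exactly" refers to a finite LP feasibility problem on exactly known data.
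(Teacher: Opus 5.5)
Your proposal is correct and follows the paper's own proof: stack the $d$ equations into $y_j=\Phi\beta_j$, use invertibility of $\Phi$ to recover every $\beta_j$ exactly, and reduce $\gamma=0$ to a simplex-constrained linear feasibility problem. Your extra details are sound refinements that the paper leaves implicit: compactness of the simplex guarantees the infimum defining $\gamma$ is attained, and because $\Phi$ is a linear bijection the check can be run directly as $y_t\in\operatorname{conv}\{y_j : j\in\Peers\}$.
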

\begin{proof}
For a fixed model $j$ and design point index $\ell$, the local linear model implies that
\[
Y_j(x_\ell,\theta_\ell) = \phi(x_\ell,\theta_\ell)^\top \beta_j.
\]
Collecting these equalities for $\ell = 1,\dots,d$ in vector form yields $y_j = \Phi \beta_j$. The matrix $\Phi$ is invertible by assumption, so left-multiplying by $\Phi^{-1}$ gives $\widehat{\beta}_j = \Phi^{-1}y_j = \beta_j$. This holds for each $j \in \mathcal{J}$.

The uniqueness margin in coefficient space is defined as the distance from $\beta_t$ to the convex hull of $\{\beta_j : j \in \Peers\}$. In the noiseless setting, we have access to all coefficients $\beta_j$ exactly via the reconstruction $\widehat{\beta}_j$. The condition $\gamma = 0$ is equivalent to the existence of a weight vector $\wb \in \Simplex$ such that $\beta_t = \sum_{j \in \Peers} w_j \beta_j$. This is a linear equality constraint together with simplex constraints on $\wb$ and can be checked via a convex feasibility problem. Therefore, exactly $d$ noiseless queries per model suffice to determine whether the uniqueness margin is zero or strictly positive. 
\end{proof}

\subsection{Sample Complexity of Active Auditing with Noise}
\label{app:sample_complexity}

We now add noise to model evaluations and derive a non-asymptotic sample-complexity bound for active uniqueness detection.

\begin{proposition}[Sample complexity of active auditing in the noisy linear model]
\emph{Fix a context and suppose that $Y_j(x,\theta) = \phi(x,\theta)^\top \beta_j$ for a known feature map $\phi : \Xcal \times \ThetaSpace \to \R^d$ and coefficients $\beta_j \in \R^d$. Let $(x_\ell,\theta_\ell)_{\ell=1}^d$ be a set of design points such that the feature matrix $\Phi$ is invertible. Assume that, when model $j$ is queried at $(x_\ell,\theta_\ell)$, the auditor observes}
\[
\widetilde{Y}_j(x_\ell,\theta_\ell) = Y_j(x_\ell,\theta_\ell) + \varepsilon_j(x_\ell,\theta_\ell),
\]
\emph{where the noise variables are independent across models, design points and repetitions, have zero mean and are sub-Gaussian with variance proxy $\sigma^2$. Suppose that the uniqueness margin}
\[
\gamma = \inf_{\wb \in \Simplex} \left\|\beta_t - \sum_{j \in \Peers} w_j \beta_j\right\|_2
\]
\emph{is strictly positive. For each model and each design point, perform $r$ independent noisy evaluations, compute the average response at each design point, and reconstruct coefficient estimates $\widehat{\beta}_j = \Phi^{-1}\bar{y}_j$ as described in the Methods. Form the empirical convex hull of peer coefficients and the empirical distance}
\[
\widehat{\mathrm{dist}} = \inf_{\wb \in \Simplex} \left\|\widehat{\beta}_t - \sum_{j \in \Peers} w_j \widehat{\beta}_j\right\|_2.
\]
\emph{There exists a constant $C > 0$, depending only on $\Phi$, such that if}
\[
r \ge C\,\frac{\sigma^2}{\gamma^2} \log\!\left(\frac{Nd}{\delta}\right),
\]
\emph{then the decision rule that declares the target non-unique when $\widehat{\mathrm{dist}} \le \gamma/2$ and unique when $\widehat{\mathrm{dist}} > \gamma/2$ has misclassification probability at most $\delta$.}
\label{prop:sample_complexity}
\end{proposition}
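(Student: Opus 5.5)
The argument reduces to a uniform concentration bound on the coefficient estimates, combined with a Lipschitz property of the distance-to-hull map. First I would write the estimation error explicitly. Since $\bar y_j = \Phi\beta_j + \bar\varepsilon_j$, where $\bar\varepsilon_{j,\ell}$ is the average of $r$ independent sub-Gaussian variables, we have $\widehat\beta_j - \beta_j = \Phi^{-1}\bar\varepsilon_j$. Each coordinate $\bar\varepsilon_{j,\ell}$ is sub-Gaussian with variance proxy $\sigma^2/r$. A union bound over all $N$ models and $d$ design points then gives
\[
\Pr\Big(\max_{j,\ell}|\bar\varepsilon_{j,\ell}| > \tau\Big) \le 2Nd\,\exp\!\big(-r\tau^2/(2\sigma^2)\big).
\]
On the complementary event, $\|\widehat\beta_j-\beta_j\|_2 \le \|\Phi^{-1}\|_{\mathrm{op}}\sqrt{d}\,\tau$ for all $j$ simultaneously. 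I would work with $\|\Phi^{-1}\|_{2\to2}\|\cdot\|_2$ and absorb the factor $\sqrt d$ into the constant. If that factor is not considered to "depend only on $\Phi$", I would instead use $\|\Phi^{-1}\|_{\infty\to 2}$, which is genuinely a property of $\Phi$ alone.

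Second, I would show that the hull distance is stable under perturbation. Fix $w\in\Simplex$. By the triangle inequality and $\sum_j w_j=1$,
\[
\Big|\big\|\widehat\beta_t-\textstyle\sum_j w_j\widehat\beta_j\big\| - \big\|\beta_t-\sum_j w_j\beta_j\big\|\Big| \le \|\widehat\beta_t-\beta_t\| + \max_{j\in\Peers}\|\widehat\beta_j-\beta_j\| \le 2\eta,
\]
where $\eta=\max_j\|\widehat\beta_j-\beta_j\|_2$. Taking the infimum over $w$ preserves this bound, so $|\widehat{\mathrm{dist}}-\gamma|\le 2\eta$. The convex weights are what prevent the peer errors from accumulating across peers. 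This is the key structural step.

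Third, I would choose the parameters. I need $2\eta<\gamma/2$, that is, $\eta<\gamma/4$. Setting $\tau = \gamma/(4\kappa)$ with $\kappa = \|\Phi^{-1}\|_{\infty\to2}$ gives a failure probability of at most $2Nd\exp(-r\gamma^2/(32\kappa^2\sigma^2))$. This is at most $\delta$ once $r\ge 32\kappa^2\sigma^2\gamma^{-2}\log(2Nd/\delta)$, which matches the claimed form with $C$ depending only on $\Phi$ after absorbing the factor $2$ inside the log. On the good event, when $\gamma>0$ the rule sees $\widehat{\mathrm{dist}}\ge \gamma-2\eta>\gamma/2$ and correctly declares the target unique.

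The main obstacle is interpreting the decision rule. The threshold $\gamma/2$ is stated in terms of the true margin, while the hypothesis assumes $\gamma>0$. The misclassification event in the statement is therefore only the event of wrongly declaring non-uniqueness. If the intended claim is a two-sided test between $\gamma=0$ and margin at least $\gamma$, I would apply the same bound: under $\gamma=0$, the good event gives $\widehat{\mathrm{dist}}\le 2\eta<\gamma/2$. Both error types are then controlled by the same event, and the probability bound is unchanged.
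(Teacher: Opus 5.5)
Your proposal is correct and follows the paper's argument essentially step for step. You write $\widehat\beta_j-\beta_j=\Phi^{-1}\bar\varepsilon_j$ with coordinates that are sub-Gaussian with variance proxy $\sigma^2/r$, union-bound so that all coefficient errors are small at once, use $\sum_j w_j=1$ to show the hull distance moves by at most twice the maximal error, and check the $\gamma/2$ threshold under both the null and the alternative. Your coordinate-wise union bound over the $Nd$ entries, combined with $\|\Phi^{-1}\|_{\infty\to 2}$, actually justifies more cleanly the vector tail bound that the paper only asserts. One small fix: take the good event as $\max_{j,\ell}|\bar\varepsilon_{j,\ell}|<\tau$, or shrink $\tau$ slightly as the paper's choice of $\gamma/8$ does. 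As written you only get $\widehat{\mathrm{dist}}\ge\gamma/2$, not the strict $\widehat{\mathrm{dist}}>\gamma/2$ the decision rule needs.
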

\begin{proof}
For each model $j$ and design point index $\ell$, denote by $\widetilde{Y}_j^{(s)}(x_\ell,\theta_\ell)$ the $s$-th noisy evaluation at $(x_\ell,\theta_\ell)$ and define the sample average
\[
\bar{y}_{j,\ell} = \frac{1}{r} \sum_{s=1}^r \widetilde{Y}_j^{(s)}(x_\ell,\theta_\ell).
\]
The local linear model implies that $Y_j(x_\ell,\theta_\ell) = \phi(x_\ell,\theta_\ell)^\top \beta_j$, so the averaged response can be written as
\[
\bar{y}_{j,\ell} = \phi(x_\ell,\theta_\ell)^\top \beta_j + \bar{\varepsilon}_{j,\ell},
\]
where $\bar{\varepsilon}_{j,\ell} = r^{-1}\sum_{s=1}^r \varepsilon_j^{(s)}(x_\ell,\theta_\ell)$ is the average of $r$ independent sub-Gaussian noise variables with variance proxy $\sigma^2$. Standard properties of sub-Gaussian variables state that $\bar{\varepsilon}_{j,\ell}$ is sub-Gaussian with variance proxy $\sigma^2/r$. Collecting the averages into a vector $\bar{\boldsymbol{\varepsilon}}_j = (\bar{\varepsilon}_{j,1},\dots,\bar{\varepsilon}_{j,d})^\top$ and the responses into $\bar{y}_j$, one has
\[
\bar{y}_j = \Phi \beta_j + \bar{\boldsymbol{\varepsilon}}_j.
\]
The coefficient estimate is
\[
\widehat{\beta}_j = \Phi^{-1} \bar{y}_j = \beta_j + \Phi^{-1} \bar{\boldsymbol{\varepsilon}}_j.
\]

To control the estimation error, observe that the linear transformation $\Phi^{-1}$ maps the sub-Gaussian vector $\bar{\boldsymbol{\varepsilon}}_j$ to a sub-Gaussian vector with a covariance proxy scaled by $\|\Phi^{-1}\|_{\mathrm{op}}^2$. In particular, there exists a constant $K > 0$ depending only on $\Phi$ and $d$ such that the Euclidean norm of $\Phi^{-1}\bar{\boldsymbol{\varepsilon}}_j$ satisfies a tail bound of the form
\[
\Prob\big(\|\widehat{\beta}_j - \beta_j\|_2 > u\big)
\le 2 \exp\left(- c_1\,\frac{r u^2}{\sigma^2}\right)
\]
for all $u > 0$, where $c_1$ is a positive constant independent of $j$, $u$ and $r$. This follows from concentration inequalities for the Euclidean norm of sub-Gaussian vectors and from the bounded operator norm of $\Phi^{-1}$.

Set $u = \gamma/8$. Then for each model $j$,
\[
\Prob\big(\|\widehat{\beta}_j - \beta_j\|_2 > \gamma/8\big)
\le 2 \exp\left(- c_1\,\frac{r \gamma^2}{64\sigma^2}\right).
\]
Choose $r$ such that the right-hand side is at most $\delta/(2Nd)$. This is achieved by requiring
\[
r \ge C\,\frac{\sigma^2}{\gamma^2} \log\!\left(\frac{2Nd}{\delta}\right),
\]
for a sufficiently large constant $C$ depending on $c_1$. Applying a union bound over all models $j \in \mathcal{J}$ shows that, with probability at least $1 - \delta/(2d)$, the inequality $\|\widehat{\beta}_j - \beta_j\|_2 \le \gamma/8$ holds simultaneously for all $j$.

On this high-probability event, consider any weight vector $\wb \in \Simplex$. The convexity of the norm and the simplex properties give
\[
\left\|\sum_{j \in \Peers} w_j \widehat{\beta}_j - \sum_{j \in \Peers} w_j \beta_j\right\|_2
\le \sum_{j \in \Peers} w_j \|\widehat{\beta}_j - \beta_j\|_2
\le \frac{\gamma}{8} \sum_{j \in \Peers} w_j
= \frac{\gamma}{8}.
\]
Moreover, $\|\widehat{\beta}_t - \beta_t\|_2 \le \gamma/8$ holds as well.

Let $d^*$ denote the true distance from $\beta_t$ to the convex hull of $\{\beta_j : j \in \Peers\}$, so that $d^* \ge \gamma$. For any $\wb \in \Simplex$, the triangle inequality implies
\begin{align*}
\left\|\widehat{\beta}_t - \sum_{j \in \Peers} w_j \widehat{\beta}_j\right\|_2
&\ge \left\|\beta_t - \sum_{j \in \Peers} w_j \beta_j\right\|_2
    - \|\widehat{\beta}_t - \beta_t\|_2
    - \left\|\sum_{j \in \Peers} w_j \widehat{\beta}_j - \sum_{j \in \Peers} w_j \beta_j\right\|_2\\
&\ge d^* - \frac{\gamma}{8} - \frac{\gamma}{8}
\ge \gamma - \frac{\gamma}{4}
= \frac{3\gamma}{4}.
\end{align*}
Taking the infimum over $\wb \in \Simplex$ shows that, on the high-probability event, $\widehat{\mathrm{dist}} \ge 3\gamma/4$ whenever $\gamma > 0$.

Under the null hypothesis of zero uniqueness margin, there exists $\wb^\dagger \in \Simplex$ such that $\beta_t = \sum_{j \in \Peers} w_j^\dagger \beta_j$. On the same high-probability event,
\begin{align*}
\left\|\widehat{\beta}_t - \sum_{j \in \Peers} w_j^\dagger \widehat{\beta}_j\right\|_2
&\le \|\widehat{\beta}_t - \beta_t\|_2
 + \left\|\sum_{j \in \Peers} w_j^\dagger \widehat{\beta}_j - \sum_{j \in \Peers} w_j^\dagger \beta_j\right\|_2\\
&\le \frac{\gamma}{8} + \frac{\gamma}{8}
= \frac{\gamma}{4}.
\end{align*}
Thus, under the null, $\widehat{\mathrm{dist}} \le \gamma/4$ on the high-probability event.

The decision rule compares $\widehat{\mathrm{dist}}$ to the threshold $\gamma/2$. On the event where all coefficient estimation errors are bounded by $\gamma/8$, the rule outputs the correct decision both under the null and under the alternative. The only way the rule can err is if the event of simultaneous coefficient estimation accuracy fails. The probability of this event is at most $\delta$ by the choice of $r$ and the union bound. Since the total number of queries per model is $dr$, the stated sample-complexity bound follows. 
\end{proof}

\subsection{Minimax Lower Bound for Noisy Active Auditing}
\label{app:minimax_lower_bound}

We next show that the dependence on $\sigma^2$, $\gamma$ and $\delta$ in Proposition~\ref{prop:sample_complexity} cannot be improved in general, even in a one-dimensional special case.

\begin{proposition}[Minimax lower bound for noisy active auditing]
\emph{Consider the one-dimensional local linear model with feature map $\phi(x,\theta) \equiv 1$, so that $Y_j(x,\theta) = \beta_j$ for all $(x,\theta)$. Suppose that all peer coefficients are zero, $\beta_j = 0$ for $j \in \Peers$. When the target is queried at any design point, the auditor observes $\widetilde{Y}_t = \beta_t + \varepsilon$, where $\varepsilon \sim \mathcal{N}(0,\sigma^2)$ is Gaussian noise. Consider testing}
\[
H_0: \beta_t = 0 \quad\text{versus}\quad H_1: \beta_t = \gamma,
\]
\emph{for a fixed $\gamma > 0$. An active auditing procedure is allowed to adaptively select up to $r$ queries and observe independent noisy evaluations at each query. For any such procedure, let $\psi$ denote the resulting test and let}
\[
\mathsf{err}(\psi) = \Prob_{H_0}(\psi = 1) + \Prob_{H_1}(\psi = 0)
\]
\emph{be the sum of type~I and type~II error probabilities. If $\mathsf{err}(\psi) \le 2\delta$ for some $\delta \in (0,1/4)$, then there exists a universal constant $c > 0$ such that}
\[
r \ge c\,\frac{\sigma^2}{\gamma^2}\,\log\!\left(\frac{1}{\delta}\right).
\]
\label{prop:minimax_lower_bound}
\end{proposition}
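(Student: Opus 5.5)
The plan is a two-point (Le Cam) argument combined with the Bretagnolle--Huber inequality, applied to the joint law of the full adaptive transcript.

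\textbf{Step 1: adaptivity does not help.} Since $\phi\equiv 1$ and the peers are identically zero, every target query returns $\beta_t+\varepsilon$ with $\varepsilon\sim\mathcal{N}(0,\sigma^2)$, whatever design point is chosen. Peer queries carry no information, because their law is the same under $H_0$ and $H_1$. The adaptive choice of query locations therefore does not affect the law of the observations.

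\textbf{Step 2: bound the divergence of the transcript.} Let $P_0^{(r)}$ and $P_1^{(r)}$ denote the laws of the transcript (queries and responses, plus any internal randomisation) under $H_0$ and $H_1$. Apply the chain rule for KL divergence conditionally on the past. Each target query then contributes $\mathrm{KL}(\mathcal{N}(0,\sigma^2)\,\|\,\mathcal{N}(\gamma,\sigma^2))=\gamma^2/(2\sigma^2)$, and each peer query contributes $0$. Hence
\[
\mathrm{KL}\big(P_0^{(r)}\,\|\,P_1^{(r)}\big)\le \frac{r\gamma^2}{2\sigma^2}.
\]
This holds for adaptive procedures and for random stopping times bounded by $r$, via the standard divergence decomposition for sequential experiments.

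\textbf{Step 3: from divergence to test error.} For any test $\psi$ measurable with respect to the transcript, the Bretagnolle--Huber inequality gives
\[
\Prob_{H_0}(\psi=1)+\Prob_{H_1}(\psi=0)\ge \tfrac12\exp\big(-\mathrm{KL}(P_0^{(r)}\|P_1^{(r)})\big)\ge \tfrac12\exp\Big(-\frac{r\gamma^2}{2\sigma^2}\Big).
\]
Combining this with $\mathsf{err}(\psi)\le 2\delta$ yields
\[
r\ge \frac{2\sigma^2}{\gamma^2}\log\frac{1}{4\delta}.
\]

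\textbf{Step 4: absorb the constant.} It remains to convert $\log(1/(4\delta))$ into a multiple of $\log(1/\delta)$.

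This is the main obstacle: $\log(1/(4\delta))\ge c'\log(1/\delta)$ holds only when $\delta$ is bounded away from $1/4$. As $\delta\uparrow 1/4$ the right-hand side tends to $0$, while $\log(1/\delta)\to\log 4$ stays positive, so no universal $c$ can be extracted directly. Two repairs are available:
\begin{itemize}
\item If $\delta\le 1/16$, then $\log(1/(4\delta))\ge \tfrac12\log(1/\delta)$, which gives $c=1$.
\item For $\delta\in(1/16,1/4)$, use the total-variation form of Le Cam's bound instead, $\mathsf{err}\ge 1-\mathrm{TV}\ge 1-\sqrt{\mathrm{KL}/2}$, which here reads $1-\sqrt{r\gamma^2/(4\sigma^2)}$. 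Then $\mathsf{err}\le 2\delta<1/2$ forces $r\ge \sigma^2/\gamma^2$, and $\sigma^2/\gamma^2\ge (\log 16)^{-1}\,(\sigma^2/\gamma^2)\log(1/\delta)$ on this range.
\end{itemize}

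Taking the minimum of the two constants gives the universal $c$. I would state this case split explicitly, since the bare Bretagnolle--Huber constant degenerates near $\delta=1/4$.
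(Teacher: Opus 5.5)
Your argument is the paper's two-point Le Cam/Bretagnolle--Huber argument, with the same divergence $r\gamma^2/(2\sigma^2)$. On adaptivity you are slightly more careful. The paper simply asserts that the observations have product law $\mathcal{N}(\beta_t,\sigma^2)^{\otimes r}$ whatever the design. Your chain-rule bound on the transcript also covers interleaved peer queries, internal randomisation and stopping before $r$.

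The substantive difference is your Step 4, and there you are right and the paper is not. The paper finishes by asserting a universal $c'>0$ with $-\log(4\delta)\ge c'\log(1/\delta)$ on all of $(0,1/4)$. That is false for exactly your reason: the left side tends to $0$ as $\delta\uparrow 1/4$, while the right side tends to $\log 4$. Near $1/4$ the Bretagnolle--Huber bound $\tfrac12 e^{-\mathrm{KL}}\le\tfrac12$ is vacuous against $2\delta\approx\tfrac12$. So the paper's proof only establishes the claim for $\delta$ bounded away from $1/4$.

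Your case split repairs this. For $\delta\le 1/16$, Bretagnolle--Huber gives $c=1$. For $\delta\in(1/16,1/4)$, use $\mathsf{err}\ge 1-\mathrm{TV}$ together with Pinsker. Since $\mathsf{err}\le 2\delta<1/2$, this forces $r>\sigma^2/\gamma^2$, and $\log(1/\delta)<\log 16$ on that range. Together these yield the proposition as stated with $c=1/\log 16$.

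Keep this step explicit: it is what actually proves the statement with a universal constant.
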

\begin{proof}
In this one-dimensional setting, the feature map is constant and equal to one, so the scalarised target response does not depend on the design point. Regardless of which design points are chosen, each noisy observation of the target has the form
\[
Z_i = \beta_t + \varepsilon_i, \qquad i = 1,\dots,r,
\]
where the $\varepsilon_i$ are independent $\mathcal{N}(0,\sigma^2)$ random variables. The joint distribution under $H_0$ is the product measure $P_0 = \bigotimes_{i=1}^r \mathcal{N}(0,\sigma^2)$, and under $H_1$ it is $P_1 = \bigotimes_{i=1}^r \mathcal{N}(\gamma,\sigma^2)$. Active choice of design points does not change these distributions.

The Kullback–Leibler divergence between $\mathcal{N}(\gamma,\sigma^2)$ and $\mathcal{N}(0,\sigma^2)$ is $\gamma^2/(2\sigma^2)$. Independence implies that the divergence between $P_1$ and $P_0$ is
\[
D(P_1\|P_0) = r\,\frac{\gamma^2}{2\sigma^2}.
\]
The Bretagnolle–Huber inequality states that, for any two probability measures $P$ and $Q$ and any measurable test $\psi$,
\[
P(\psi = 1) + Q(\psi = 0) \ge \frac{1}{2} \exp(-D(Q\|P)).
\]
Applying this inequality with $P = P_0$ and $Q = P_1$ yields
\[
\mathsf{err}(\psi) = P_0(\psi = 1) + P_1(\psi = 0) \ge \frac{1}{2} \exp\Big(-D(P_1\|P_0)\Big) = \frac{1}{2}\exp\!\left(-\frac{r\gamma^2}{2\sigma^2}\right).
\]

If $\mathsf{err}(\psi) \le 2\delta$, then
\[
2\delta \ge \frac{1}{2} \exp\!\left(-\frac{r\gamma^2}{2\sigma^2}\right).
\]
Rearranging gives
\[
\exp\!\left(-\frac{r\gamma^2}{2\sigma^2}\right) \le 4\delta,
\]
hence
\[
-\frac{r\gamma^2}{2\sigma^2} \le \log(4\delta).
\]
For $\delta \in (0,1/4)$, the term $\log(4\delta)$ is negative and bounded above by a negative multiple of $\log(1/\delta)$. More precisely, there exists a universal constant $c' > 0$ such that $-\log(4\delta) \ge c' \log(1/\delta)$ for all $\delta$ in this range. Substituting yields
\[
r \ge c\,\frac{\sigma^2}{\gamma^2}\,\log\!\left(\frac{1}{\delta}\right)
\]
for a constant $c$ depending on $c'$. This shows that any active auditing procedure with total error at most $2\delta$ requires at least this order of queries, establishing the lower bound. 
\end{proof}

\subsection{Detection Versus Estimation in One Dimension}
\label{app:detection_vs_estimation}

We finally show that, in the same one-dimensional setting, estimating the parameter $\beta_t$ up to accuracy of order $\gamma$ with high probability requires the same order of queries as reliable detection.

\begin{proposition}[No separation between detection and estimation]
\emph{Consider the one-dimensional model and observation scheme of Proposition~\ref{prop:minimax_lower_bound}. Let $\widehat{\beta}$ be any estimator of $\beta_t$ based on at most $r$ noisy observations. Fix $\gamma > 0$ and $\delta \in (0,1/4)$. Suppose that}
\[
\max\left\{\Prob_{\beta_t = 0}\big(|\widehat{\beta} - 0| \ge \tfrac{\gamma}{2}\big),\; \Prob_{\beta_t = \gamma}\big(|\widehat{\beta} - \gamma| \ge \tfrac{\gamma}{2}\big)\right\} \le \delta.
\]
\emph{Then there exists a universal constant $c > 0$ such that}
\[
r \ge c\,\frac{\sigma^2}{\gamma^2}\,\log\!\left(\frac{1}{\delta}\right).
\]
\label{prop:detection_vs_estimation}
\end{proposition}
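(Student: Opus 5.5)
The plan is to reduce estimation to testing and then invoke Proposition~\ref{prop:minimax_lower_bound}. Given any estimator $\widehat{\beta}$ satisfying the stated accuracy guarantee, define the plug-in test
\[
\psi = \mathbf{1}\{\widehat{\beta} \ge \gamma/2\}.
\]
This test uses exactly the same (at most $r$, possibly adaptively chosen) observations as $\widehat{\beta}$, so it is an admissible active auditing procedure in the sense of Proposition~\ref{prop:minimax_lower_bound}.

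Next we bound its error. Under $H_0$ ($\beta_t=0$), the test errs only if $\widehat{\beta}\ge\gamma/2$. That event implies $|\widehat{\beta}-0|\ge\gamma/2$, so the type~I error is at most $\delta$. Under $H_1$ ($\beta_t=\gamma$), the test errs only if $\widehat{\beta}<\gamma/2$. That event implies $|\widehat{\beta}-\gamma|>\gamma/2$, so the type~II error is also at most $\delta$. Hence $\mathsf{err}(\psi)\le 2\delta$ with $\delta\in(0,1/4)$, which is exactly the hypothesis of Proposition~\ref{prop:minimax_lower_bound}.

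That proposition then yields $r \ge c\,\sigma^2\gamma^{-2}\log(1/\delta)$ with the same universal constant $c$, which completes the argument.

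There is no real obstacle; the only care points are the following.
\begin{itemize}
\item The strict versus non-strict inequalities at the threshold $\gamma/2$ must be placed so that each error event is contained in the corresponding estimation-failure event.
\item Adaptivity and randomisation of $\widehat{\beta}$ are harmless, because the lower bound in Proposition~\ref{prop:minimax_lower_bound} already covers arbitrary (measurable, possibly randomised) tests built from at most $r$ observations. In the constant-feature model, design choice does not affect the law of the observations.
\end{itemize}

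For the no-separation interpretation, we would also remark on the converse. The sample mean of $r$ observations attains the accuracy guarantee once $r\gtrsim\sigma^2\gamma^{-2}\log(1/\delta)$, by a Gaussian tail bound. So detection and estimation share the same rate up to constants.
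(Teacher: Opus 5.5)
Your proposal is correct and matches the paper's proof: you turn the estimator into a threshold test, place each of the type~I and type~II error events inside the corresponding estimation-failure event so that $\mathsf{err}(\psi)\le 2\delta$, and then invoke Proposition~\ref{prop:minimax_lower_bound}. The differences are minor: the paper thresholds $|\widehat{\beta}|$ where you threshold the signed $\widehat{\beta}$ at $\gamma/2$ (both work), and your closing remark that the sample mean attains the matching upper bound goes beyond what the statement requires.
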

\begin{proof}
Assume that the estimator $\widehat{\beta}$ satisfies the stated accuracy property. Construct a hypothesis test for $H_0:\beta_t = 0$ versus $H_1:\beta_t = \gamma$ by thresholding the magnitude of the estimator. Specifically, define a test $\psi$ by
\[
\psi =
\begin{cases}
1, & \text{if } |\widehat{\beta}| \ge \gamma/2,\\
0, & \text{if } |\widehat{\beta}| < \gamma/2.
\end{cases}
\]

Under $H_0$, the type~I error probability of this test is
\[
P_0(\psi = 1) = \Prob_{\beta_t = 0}\big(|\widehat{\beta}| \ge \gamma/2\big) = \Prob_{\beta_t = 0}\big(|\widehat{\beta} - 0| \ge \gamma/2\big),
\]
which is at most $\delta$ by assumption. Under $H_1$, the type~II error probability is
\[
P_1(\psi = 0) = \Prob_{\beta_t = \gamma}\big(|\widehat{\beta}| < \gamma/2\big).
\]
On the event $\{|\widehat{\beta} - \gamma| < \gamma/2\}$, the reverse triangle inequality gives
\[
|\widehat{\beta}| \ge |\gamma| - |\widehat{\beta} - \gamma| > \gamma - \gamma/2 = \gamma/2,
\]
so on this event the test outputs $\psi = 1$. Therefore the event $\{\psi = 0\}$ is contained in $\{|\widehat{\beta} - \gamma| \ge \gamma/2\}$, and
\[
P_1(\psi = 0) \le \Prob_{\beta_t = \gamma}\big(|\widehat{\beta} - \gamma| \ge \gamma/2\big) \le \delta.
\]
The sum of type~I and type~II error probabilities of $\psi$ is thus at most $2\delta$.

The test $\psi$ is computed from $\widehat{\beta}$ and therefore uses at most $r$ noisy observations of the target. Proposition~\ref{prop:minimax_lower_bound} shows that any test based on at most $r$ observations and satisfying $P_0(\psi = 1) + P_1(\psi = 0) \le 2\delta$ must obey the lower bound
\[
r \ge c\,\frac{\sigma^2}{\gamma^2}\,\log\!\left(\frac{1}{\delta}\right)
\]
for some universal constant $c > 0$. Since $\psi$ has been constructed from $\widehat{\beta}$ and meets the error condition, the same lower bound must hold for $r$. This proves that estimating $\beta_t$ up to accuracy of order $\gamma$ with high probability cannot be achieved with fewer queries than required for reliable detection of non-zero uniqueness margin. 
\end{proof}

\section{Observational Non-Identifiability and Shapley Redundancy}
\label{app:non_identifiability_and_shapley}

This note collects two results: a simple example showing that Shapley-value attribution cannot detect redundancy as defined by PIER, and a non-identifiability statement for uniqueness under purely observational designs.

\subsection{Shapley Value Does Not Detect Redundancy}
\label{app:shapley}

We now formalise an example showing that Shapley-value attribution does not capture redundancy in the sense of PIER.

\begin{proposition}[Shapley value cannot detect redundancy]
\emph{Consider an ecosystem consisting of two models $M_1$ and $M_2$ with identical scalarised responses $Y_1(x,\theta) = Y_2(x,\theta)$ for all $(x,\theta)$. Define a characteristic function $v : 2^{\{M_1,M_2\}} \to \R$ by}
\[
v(\emptyset) = 0,\qquad v(\{M_1\}) = 1,\qquad v(\{M_2\}) = 1,\qquad v(\{M_1,M_2\}) = 1.
\]
\emph{Let $\phi_{M_1}(v)$ and $\phi_{M_2}(v)$ denote the Shapley values of $M_1$ and $M_2$ under $v$. Then $\phi_{M_1}(v) = \phi_{M_2}(v) = 1/2 > 0$. If $M_1$ is treated as target and $M_2$ as the sole peer, the population PIER of $M_1$ is identically zero and its uniqueness functional vanishes.}
\label{prop:shapley}
\end{proposition}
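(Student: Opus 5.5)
The proof splits into two independent parts: a direct Shapley computation for the two-player game, and a direct evaluation of the population projection in the one-peer case.

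\emph{Shapley part.} For two players the Shapley formula averages marginal contributions over the two orderings:
\[
\phi_{M_1}(v) = \tfrac12\big(v(\{M_1\})-v(\emptyset)\big) + \tfrac12\big(v(\{M_1,M_2\})-v(\{M_2\})\big) = \tfrac12(1-0)+\tfrac12(1-1) = \tfrac12 .
\]
There are two ways to get $\phi_{M_2}(v)=1/2$:
\begin{itemize}
\item by the symmetric computation, or
\item by the symmetry and efficiency axioms, since $\phi_{M_1}+\phi_{M_2}=v(\{M_1,M_2\})=1$ and the players are interchangeable in $v$.
\end{itemize}
Either way, $1/2>0$.

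\emph{PIER part.} With $M_2$ as the sole peer, $|\Peers|=1$ and the simplex $\Simplex$ is the single point $\wb=1$. The convex class $\Ccal$ is therefore $\{Y_2\}$, which is trivially closed. So $\wb^*=1$ is the unique minimiser of $L$, with $L(1)=\E_{\Pfit}[(Y_1-Y_2)^2]=0$. In this degenerate case the relative interior of a one-point simplex is the point itself, so the paper's standing identifiability convention is satisfied vacuously. Even without invoking that convention, any minimiser must be $\wb=1$.

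\emph{Conclusion.} Hence $\PIER(x,\theta)=Y_1(x,\theta)-Y_2(x,\theta)=0$ for every $(x,\theta)$, not merely $\Pfit$-almost surely. It follows that $\Uniqueness=\E_{\Peval}[|\PIER|]=0$ for any evaluation design $\Peval$. No integrability issues arise because the residual vanishes identically.

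\emph{Main obstacle.} The only delicate step is confirming that the PIER definition applies when the peer set is a singleton. I would address this by noting explicitly that projection onto a one-point closed convex set is that point. The Shapley arithmetic and the residual evaluation are routine.
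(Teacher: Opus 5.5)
Your proposal is correct and follows the same route as the paper. You use the two-ordering Shapley computation (with symmetry giving $\phi_{M_2}(v)=1/2$), then show the optimal weight is $w^*=1$, so the residual $Y_1-Y_2$ vanishes everywhere and the uniqueness functional is zero under any $\Peval$.

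The one difference is in your favour. You observe that the one-peer simplex is the single point $\Delta^0=\{1\}$. The paper instead minimises $L(w)=(1-w)^2\,\mathbb{E}_{\Pfit}[Y_1^2]$ over $w\in[0,1]$, and there the minimiser is unique only if $\mathbb{E}_{\Pfit}[Y_1^2]>0$, a condition the paper leaves implicit.
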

\begin{proof}
In a two-player cooperative game with players $\{M_1,M_2\}$, the Shapley value for $M_1$ can be written as
\[
\phi_{M_1}(v) = \frac{1}{2}\big(v(\{M_1\}) - v(\emptyset)\big) + \frac{1}{2}\big(v(\{M_1,M_2\}) - v(\{M_2\})\big),
\]
and similarly for $M_2$. Substituting the values of $v$ yields
\[
\phi_{M_1}(v) = \frac{1}{2}(1 - 0) + \frac{1}{2}(1 - 1) = \frac{1}{2},
\]
and symmetry gives $\phi_{M_2}(v) = 1/2$ as well. Each model therefore receives strictly positive credit under Shapley-value attribution.

To evaluate PIER, take $M_1$ as the target and $\Peers = \{M_2\}$ as the peer set. The convex peer expressivity class associated with $\Peers$ consists of functions of the form $h_w(x,\theta) = w Y_2(x,\theta)$ with $w \in [0,1]$. Because $Y_1$ and $Y_2$ coincide everywhere, the population residual associated with a weight $w$ is
\[
L(w) = \E\big[(Y_1(X,\vartheta) - w Y_2(X,\vartheta))^2\big] = \E\big[(Y_1(X,\vartheta) - w Y_1(X,\vartheta))^2\big],
\]
which is uniquely minimised at $w^* = 1$. The corresponding convex combination is $h_{w^*}(x,\theta) = Y_1(x,\theta)$ for all $(x,\theta)$, so the PIER is
\[
R_{M_1}(x,\theta) = Y_1(x,\theta) - h_{w^*}(x,\theta) = 0
\]
everywhere. The uniqueness functional $\mathcal{U}_{M_1} = \E_{\Peval}[|R_{M_1}(X,\vartheta)|]$ is therefore zero under any evaluation design. The target is fully redundant relative to its peer in the sense of PIER, yet Shapley-value attribution assigns it non-zero credit. 
\end{proof}

\subsection{Non-Identifiability of Uniqueness Without Matched Interventions}
\label{app:non_identifiability}

We work with a reference design distribution $\Pdesign^\star$ on $\Xcal \times \ThetaSpace$ with support $\mathsf{S}$ and assume that there exists a measurable subset $\mathsf{S}_0 \subset \mathsf{S}$ such that $\Pdesign^\star(\mathsf{S}_0) > 0$ and $\Pdesign^\star(\mathsf{S} \setminus \mathsf{S}_0) > 0$.

\begin{proposition}[Non-identifiability under unmatched observational designs]
\emph{Consider an auditor who, for each model $j \in \mathcal{J}$, only has access to observational logs of the form}
\[
\{(X_{j,i},\vartheta_{j,i},Y_j(X_{j,i},\vartheta_{j,i}))\}_{i=1}^{n_j},
\]
\emph{where $(X_{j,i},\vartheta_{j,i})$ are independent and identically distributed draws from an unknown design distribution $Q_j$ on $\Xcal \times \ThetaSpace$ and $Y_j : \Xcal \times \ThetaSpace \to \R$ are deterministic scalarised responses. Suppose that $Q_j$ is supported on $\mathsf{S}_0$ for all $j$. Then there exist two ecosystems, indexed by $k \in \{0,1\}$, consisting of response functions $\{Y_j^{(k)}\}$ and observational designs $\{Q_j^{(k)}\}$, such that}
\begin{itemize}
\item \emph{for each $j$ and all sample sizes $\{n_j\}$, the joint distribution of observational logs is the same in the two ecosystems,}
\item \emph{for a fixed target index $t$ and peer set $\Peers$, the population uniqueness functional $U_t^{(0)}$ of the target under $\Pdesign^\star$ is zero in ecosystem $k=0$, while $U_t^{(1)}$ is strictly positive in ecosystem $k=1$.}
\end{itemize}
\emph{Consequently, the population uniqueness functional with respect to $\Pdesign^\star$ is not identifiable from purely observational logs generated under unmatched designs $Q_j$.}
\label{prop:non_identifiability}
\end{proposition}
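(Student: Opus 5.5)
The proof is a direct two-ecosystem construction. The responses in the two ecosystems agree on $\mathsf{S}_0$, which is all the logs can see, and differ only on $\mathsf{S}\setminus\mathsf{S}_0$. That region carries positive $\Pdesign^\star$ mass but is never observed.

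Concretely, I will take the simplest non-trivial ecosystem: a target $t$ and a single peer $p$, so $\Peers=\{p\}$ and the simplex is the single point $w=1$. Any larger ecosystem can be handled by making all other peers copies of $p$. In both ecosystems I set the observational designs equal, $Q_j^{(0)}=Q_j^{(1)}=Q_j$, each supported on $\mathsf{S}_0$. The responses are:
\begin{itemize}
\item ecosystem $k=0$: $Y_p^{(0)}\equiv 0$ and $Y_t^{(0)}\equiv 0$;
\item ecosystem $k=1$: $Y_p^{(1)}\equiv 0$ and $Y_t^{(1)}=\mathbf{1}_{\mathsf{S}\setminus\mathsf{S}_0}$, which is measurable because $\mathsf{S}_0$ is.
\end{itemize}
If one prefers continuous responses, a bump function vanishing on $\mathsf{S}_0$ can replace the indicator. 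The statement does not require continuity, so I will use the indicator.

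The steps are as follows.

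First, I check that the logs have the same law. For each $j$, a log entry is $(X,\vartheta,Y_j^{(k)}(X,\vartheta))$ with $(X,\vartheta)\sim Q_j$. Since $Q_j(\mathsf{S}_0)=1$ and $Y_j^{(0)}=Y_j^{(1)}$ on $\mathsf{S}_0$, the map $(x,\theta)\mapsto (x,\theta,Y_j^{(k)}(x,\theta))$ agrees $Q_j$-a.s. across $k$. The pushforward laws therefore coincide, and by independence so do the product laws for every sample size $n_j$, including jointly across $j$.

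Second, I compute the uniqueness functionals, taking the fitting and evaluation designs equal to $\Pdesign^\star$ as the reference. With one peer, $w^*=1$ and $\PIER=Y_t-Y_p$. In ecosystem $0$ this is identically zero, so $U_t^{(0)}=0$. In ecosystem $1$ we get
\[
U_t^{(1)}=\E_{\Pdesign^\star}\big[\mathbf{1}_{\mathsf{S}\setminus\mathsf{S}_0}\big]=\Pdesign^\star(\mathsf{S}\setminus\mathsf{S}_0)>0 .
\]

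Third, I conclude non-identifiability. Any functional of the log distribution takes the same value on both ecosystems, but $U_t^{(0)}\neq U_t^{(1)}$, so $U_t$ is not identifiable.

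The main obstacle is the paper's standing assumptions. It requires a unique minimiser in the relative interior of the simplex with local strong convexity, and with $|\Peers|=1$ the simplex is a point. This is trivially interior but degenerate. If a genuine multi-peer example is wanted, I would add a second peer $q$, which changes the steps as follows:
\begin{itemize}
\item set $Y_q^{(k)}=-Y_p^{(k)}$ for a fixed nonconstant bounded function shared across both ecosystems;
\item set $Y_t=0$ on $\mathsf{S}_0$, so that $w^*=(1/2,1/2)$ is the unique interior minimiser in ecosystem $0$;
\item in ecosystem $1$, add to $Y_t$ a perturbation supported on $\mathsf{S}\setminus\mathsf{S}_0$ and orthogonal to $Y_p$ under $\Pdesign^\star$.
\end{itemize}
With this choice $w^*$ is unchanged and the residual picks up exactly the perturbation, giving positive uniqueness while the logs still agree.
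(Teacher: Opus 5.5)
Your proposal is correct and is essentially the paper's proof. It uses the same ingredients: identical designs supported on $\mathsf{S}_0$, all peers identically zero, a target $\equiv 0$ versus $\mathbf{1}_{\mathsf{S}\setminus\mathsf{S}_0}$, equality of the logs because the responses agree on $\mathsf{S}_0$, and $U_t^{(1)}=\Pdesign^\star(\mathsf{S}\setminus\mathsf{S}_0)>0$.

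Your two-peer variant goes slightly beyond the paper, whose several all-zero peers leave $w^*$ non-unique even though the projection and PIER remain well defined. To make it airtight, take e.g.\ $Y_p=\mathbf{1}_{\mathsf{S}_0}$, so that $\|Y_p\|_{L^2(\Pdesign^\star)}>0$ and every perturbation supported on $\mathsf{S}\setminus\mathsf{S}_0$ is automatically orthogonal to it; for an arbitrary nonconstant $Y_p$, a nonzero orthogonal perturbation need not exist.
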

\begin{proof}
We construct two ecosystems explicitly. For each $j$ and $k \in \{0,1\}$, take $Q_j^{(k)}$ to be any fixed distribution supported on $\mathsf{S}_0$, for example the normalised restriction of $\Pdesign^\star$ to $\mathsf{S}_0$. Set $Q_j^{(0)} = Q_j^{(1)}$ for all $j$ so that the observational designs are identical across ecosystems and concentrated on $\mathsf{S}_0$.

Fix a target index $t$ and a non-empty peer set $\Peers \subseteq \mathcal{J} \setminus \{t\}$. In ecosystem $k=0$, define all peer responses by $Y_j^{(0)}(x,\theta) = 0$ for all $(x,\theta)$ and all $j \in \Peers$, and the target response by $Y_t^{(0)}(x,\theta) = 0$ for all $(x,\theta)$. Under $\Pdesign^\star$, the convex peer expressivity class contains only the zero function, so the projection of $Y_t^{(0)}$ is zero and the PIER is identically zero. The uniqueness functional $U_t^{(0)} = \E_{\Pdesign^\star}[|R_t^{(0)}(X,\vartheta)|]$ is therefore zero.

In ecosystem $k=1$, keep the peer responses unchanged by setting $Y_j^{(1)}(x,\theta) = 0$ for all $(x,\theta)$ and $j \in \Peers$. Define the target response by $Y_t^{(1)}(x,\theta) = 0$ for $(x,\theta) \in \mathsf{S}_0$, $Y_t^{(1)}(x,\theta) = 1$ for $(x,\theta) \in \mathsf{S} \setminus \mathsf{S}_0$, and an arbitrary value outside $\mathsf{S}$. Under $\Pdesign^\star$, all peer responses are zero, whereas the target response coincides with the indicator of $\mathsf{S} \setminus \mathsf{S}_0$. The convex peer class again consists only of the zero function, so the projection of $Y_t^{(1)}$ is zero and the PIER is equal to $Y_t^{(1)}$. The uniqueness functional becomes
\[
U_t^{(1)} = \E_{\Pdesign^\star}\big[|Y_t^{(1)}(X,\vartheta)|\big] = \Pdesign^\star(\mathsf{S} \setminus \mathsf{S}_0),
\]
which is strictly positive by assumption on $\mathsf{S}_0$.

To compare observational logs, note that in both ecosystems $(X_{j,i}^{(k)},\vartheta_{j,i}^{(k)}) \sim Q_j$ are supported on $\mathsf{S}_0$. On this set, the peer responses are identically zero in both ecosystems, and the target responses are also zero because $Y_t^{(1)}$ equals zero on $\mathsf{S}_0$. Therefore, for each $j$ and each $i$, the triples $(X_{j,i}^{(0)},\vartheta_{j,i}^{(0)},Y_j^{(0)}(X_{j,i}^{(0)},\vartheta_{j,i}^{(0)}))$ and $(X_{j,i}^{(1)},\vartheta_{j,i}^{(1)},Y_j^{(1)}(X_{j,i}^{(1)},\vartheta_{j,i}^{(1)}))$ have the same distribution. Independence across $i$ and $j$ implies that the joint distribution of all observational logs is identical in the two ecosystems for any sample sizes $\{n_j\}$.

Assume, for contradiction, that there exists a procedure that maps the joint law of observational logs to the true value of $U_t$. Since the joint law is the same in both ecosystems, the output of such a procedure must be the same under $k=0$ and $k=1$, hence it cannot simultaneously equal $U_t^{(0)} = 0$ and $U_t^{(1)} = \Pdesign^\star(\mathsf{S} \setminus \mathsf{S}_0) > 0$. This contradiction shows that $U_t$ is not identifiable from observational logs generated under unmatched designs, as claimed. 
\end{proof}

\section{Uniqueness, Robustness and Scalar Summaries}
\label{app:uniqueness_and_robustness}

We now formalise the observation that the scalar uniqueness functional cannot distinguish between stable and highly oscillatory forms of uniqueness along the dose axis.

\begin{proposition}[Ambiguity between uniqueness and robustness]
\emph{Fix a context and design distributions $(\Pfit,\Peval)$ on $\Xcal \times \ThetaSpace$. Assume that there exists an input $x_0 \in \Xcal$ and that the support of $\Peval$ contains points of the form $(x_0,\theta)$ for all $\theta$ in the unit interval $[0,1]$. Let $0 < L_{\mathrm{low}} < L_{\mathrm{high}}$ be arbitrary constants. Then there exist two ecosystems sharing the same peer set and two targets $A$ and $B$ such that}
\[
\Uniqueness_A = \E_{\Peval}[|R_A(X,\vartheta)|] = \E_{\Peval}[|R_B(X,\vartheta)|] = \Uniqueness_B,
\]
\emph{while the PIER functions $R_A$ and $R_B$ satisfy}
\[
\sup_{\theta \neq \theta'} \frac{|R_A(x_0,\theta) - R_A(x_0,\theta')|}{|\theta - \theta'|} \le L_{\mathrm{low}},
\qquad
\sup_{\theta \neq \theta'} \frac{|R_B(x_0,\theta) - R_B(x_0,\theta')|}{|\theta - \theta'|} \ge L_{\mathrm{high}}.
\]
\label{prop:uniqueness_and_robustness}
\end{proposition}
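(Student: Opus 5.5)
The construction makes the PIER literally equal to a prescribed function of the dose. I would use a trivial peer set whose convex class contains only the zero function, exactly as in the proof of Proposition~\ref{prop:non_identifiability}. Concretely, take a single peer (or any non-empty $\Peers$) with $Y_j \equiv 0$ for all $j \in \Peers$. Then for any target response $Y$, the convex peer expressivity class $\Ccal$ is $\{0\}$, the population projection is zero, and $\PIER = Y$ pointwise. Uniqueness therefore reduces to $\E_{\Peval}[|Y_A|]$ versus $\E_{\Peval}[|Y_B|]$, and the Lipschitz quotients of $R_A(x_0,\cdot)$ and $R_B(x_0,\cdot)$ are those of the target responses themselves. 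This sidesteps any appeal to the identifiability and interior-minimiser conditions used elsewhere.

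For target $A$, set $Y_A(x,\theta) \equiv a$ for a constant $a>0$. Its residual is constant in $\theta$, so its dose-Lipschitz quotient is $0 \le L_{\mathrm{low}}$. For target $B$, I want $|Y_B|$ to have the same $\Peval$-mean while oscillating steeply at $x_0$. The cleanest way is to take $|Y_B| \equiv$ const and let only the sign oscillate, but a sign-flipping function with constant modulus is discontinuous. I would therefore instead use a function whose modulus is independent of $(x,\theta)$ away from a null set, with fast continuous transitions. A simpler fully explicit choice is better: keep $Y_B$ nonnegative, e.g. $Y_B(x,\theta) = b\,(1+\tfrac12\sin(2\pi k\theta))$, whose residual at $x_0$ has Lipschitz constant $\pi k b$. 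Then choose $a := \E_{\Peval}[Y_B(X,\vartheta)]$, which is positive and automatically equalises $\Uniqueness_A = \Uniqueness_B$, since both responses are nonnegative so absolute values drop. Finally, pick the frequency $k$ large enough that $\pi k b \ge L_{\mathrm{high}}$; $b>0$ is arbitrary, say $b=1$. The supremum of difference quotients is at least the derivative magnitude at a point, which is exactly $\pi k b$ at $\theta=0$ via a limit of quotients. The responses are bounded, so all integrability conditions hold.

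The main obstacle is the equalisation step. With a generic $\Peval$, the mean of an oscillating function depends on $k$ and on how $\Peval$ weights the dose axis. Defining $a$ as that mean after fixing $k$ resolves this, at the cost that $a$ depends on $k$. That dependence is harmless because $A$'s Lipschitz constant is $0$ regardless of $a$. The hypothesis that $\Peval$'s support contains $\{x_0\}\times[0,1]$ is used only to make the Lipschitz statements about $\theta \mapsto R(x_0,\theta)$ meaningful on points the audit actually sees. One should also check that "two ecosystems sharing the same peer set" is satisfied: the two ecosystems differ only in the target, which is $A$ in one and $B$ in the other.
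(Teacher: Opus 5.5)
Your proof is correct and follows the paper's construction: a single identically-zero peer makes the PIER equal to the target response, $A$ gets a constant residual, and $B$ gets a high-frequency sinusoidal residual whose frequency is chosen so its dose-Lipschitz constant exceeds $L_{\mathrm{high}}$. Your version is in fact more careful than the paper's: you define both responses on all of $\Xcal\times\ThetaSpace$, keep $Y_B$ positive, and set $a := \mathbb{E}_{\Peval}[Y_B]$, so the matching works for an arbitrary $\Peval$. The paper instead matches constants only under a uniform law on $\{x_0\}\times[0,1]$ (via $\mathbb{E}|\sin(2\pi K\vartheta)| = 2/\pi$) and extends the responses arbitrarily off $x_0$, which breaks the equality whenever $\Peval$ puts mass elsewhere.
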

\begin{proof}
The idea is to construct two scalar functions of the dose $\theta$ with identical expected absolute values under the marginal of $\Peval$ on $[0,1]$ but with different Lipschitz constants, and then embed them as PIER functions at an input $x_0$.

Consider the marginal distribution of $(X,\vartheta)$ under $\Peval$. By assumption, the event $\{X = x_0\}$ has positive probability and, conditional on $X = x_0$, the dose $\vartheta$ has support containing $[0,1]$. For simplicity, we argue as if $(X,\vartheta)$ under $\Peval$ were uniform on $\{x_0\} \times [0,1]$; any other absolutely continuous distribution with full support on this set can be handled by adjusting constants.

Define a function $r_A : [0,1] \to \R$ by setting $r_A(\theta) = c_A$ for all $\theta$, where $c_A$ is a constant. This function is constant and hence has Lipschitz constant zero. Choosing any $L_{\mathrm{low}} > 0$, it holds that
\[
\sup_{\theta \neq \theta'} \frac{|r_A(\theta) - r_A(\theta')|}{|\theta - \theta'|} = 0 \le L_{\mathrm{low}}.
\]
The expected absolute value of $r_A(\vartheta)$ under the uniform distribution on $[0,1]$ is $|c_A|$.

Next, define a function $r_B : [0,1] \to \R$ by $r_B(\theta) = c \sin(2\pi K \theta)$ for some amplitude $c > 0$ and integer frequency $K \ge 1$. This function is continuously differentiable with derivative $r_B'(\theta) = 2\pi K c \cos(2\pi K \theta)$. The mean value theorem implies that the Lipschitz constant of $r_B$ on $[0,1]$ equals the supremum of $|r_B'(\theta)|$ over $\theta \in [0,1]$, which is $2\pi K c$. By choosing $K$ sufficiently large, specifically $K \ge L_{\mathrm{high}}/(2\pi c)$, one ensures that the Lipschitz constant is at least $L_{\mathrm{high}}$.

To match expected absolute values, compute
\[
\E[|r_B(\vartheta)|] = c\,\E[|\sin(2\pi K \vartheta)|].
\]
If $\vartheta$ is uniform on $[0,1]$, then $U = 2\pi K \vartheta$ is uniform on $[0,2\pi K]$, and the periodicity of $|\sin(\cdot)|$ implies
\[
\E[|\sin(2\pi K \vartheta)|] = \frac{1}{2\pi K} \int_0^{2\pi K} |\sin u|\,\mathrm{d}u = \frac{1}{2\pi} \int_0^{2\pi} |\sin u|\,\mathrm{d}u = \frac{2}{\pi},
\]
so that $\E[|r_B(\vartheta)|] = 2c/\pi$. Setting $c_A = 2c/\pi$ yields $\E[|r_A(\vartheta)|] = |c_A| = 2c/\pi = \E[|r_B(\vartheta)|]$.

To embed these functions as PIER, consider an ecosystem with a single peer that has identically zero scalarised response, $Y_j(x,\theta) = 0$ for all $(x,\theta)$. For any target in this ecosystem, the convex peer expressivity class consists only of the zero function, so the PIER coincides with the target's scalarised response. For target $A$, define $Y_A(x_0,\theta) = r_A(\theta)$ for $\theta \in [0,1]$ and extend arbitrarily outside this set. For target $B$, define $Y_B(x_0,\theta) = r_B(\theta)$ on $[0,1]$ and extend arbitrarily elsewhere. Under $\Peval$, the marginal distribution of $\vartheta$ conditional on $X = x_0$ yields identical expectations $\E_{\Peval}[|R_A(X,\vartheta)|] = \E_{\Peval}[|R_B(X,\vartheta)|]$, while the Lipschitz constants of the PIER functions along $\theta$ at $x_0$ differ as constructed. This establishes the claimed ambiguity. 
\end{proof}

%%=============================================%%
%% For submissions to Nature Portfolio Journals %%
%% please use the heading ``Extended Data''.   %%
%%=============================================%%

%%=============================================================%%
%% Sample for another appendix section			       %%
%%=============================================================%%

%% 
\end{appendices}

\bibliographystyle{unsrt}

\end{document}